\documentclass{article}

\usepackage{microtype}
\usepackage{graphicx}
\usepackage{subcaption}
\usepackage{booktabs} 
\usepackage{tcolorbox}
\usepackage{subcaption}
\usepackage[inline]{enumitem}

\usepackage{lipsum}
\usepackage{hyperref}

\usepackage[accepted]{icml2026}

\usepackage{amsmath}
\usepackage{amssymb}
\usepackage{mathtools}
\usepackage{amsthm}
\usepackage{xspace}

\usepackage[capitalize,noabbrev]{cleveref}

\theoremstyle{plain}
\newtheorem{theorem}{Theorem}[section]

\newtheorem{lemma}[theorem]{Lemma}

\theoremstyle{definition}

\theoremstyle{remark}
\newtheorem{remark}[theorem]{Remark}

\usepackage[textsize=tiny]{todonotes}

\newcommand{\proto}{B3IT\xspace}
\newcommand{\initsamples}{m}

\makeatletter
\g@addto@macro\UrlBreaks{%
  \do\.\do\_\do\-\do\/\do\?\do\&\do\=\do\#}
\makeatother

\icmltitlerunning{Token-Efficient Change Detection in LLM APIs}

\begin{document}

\twocolumn[
  \icmltitle{Token-Efficient Change Detection in LLM APIs}
  \icmlsetsymbol{tc_contrib}{*}
  \icmlsetsymbol{cl_contrib}{$\dagger$}

\begin{icmlauthorlist}
    \icmlauthor{Timothée Chauvin}{tc_contrib,inria}
    \icmlauthor{Clément Lalanne}{cl_contrib,imt,aniti}
    \icmlauthor{Erwan Le Merrer}{inria}
    \icmlauthor{Jean-Michel Loubes}{inriabis,aniti}
    \icmlauthor{François Taïani}{inria}
    \icmlauthor{Gilles Tredan}{laas}
  \end{icmlauthorlist}
  \icmlaffiliation{inria}{Université de Rennes, Inria, CNRS/IRISA (Rennes, France)}
  \icmlaffiliation{inriabis}{Equipe Regalia, Inria (Bordeaux, France)}
  \icmlaffiliation{laas}{LAAS, CNRS, (Toulouse, France)}
  \icmlaffiliation{imt}{Univ Toulouse, INUC, UT2J, INSA Toulouse, TSE, CNRS, IMT, Toulouse, France}
  \icmlaffiliation{aniti}{ANITI Toulouse}

  \icmlcorrespondingauthor{Timothée Chauvin}{https://tchauvin.com/contact}

  \icmlkeywords{Machine Learning, ICML}

  \vskip 0.3in
]

\printAffiliationsAndNotice{*First author, method and experiments. $\dagger$First author, theory and analysis.}

\begin{abstract}

Remote change detection in LLMs is a difficult problem.
Existing methods are either too expensive for deployment at scale, or require initial white-box access to model weights or grey-box access to log probabilities. We aim to achieve both low cost and strict black-box operation, observing only output tokens.
Our approach hinges on specific inputs we call Border Inputs, for which there exists more than one output top token.
From a statistical perspective, optimal change detection depends on the model's Jacobian and the Fisher information of the output distribution. Analyzing these quantities in low-temperature regimes shows that Border Inputs enable powerful change detection tests.
Building on this insight, we propose the Black-Box Border Input Tracking (B3IT) scheme. Extensive \emph{in vivo} and \emph{in vitro} experiments show that Border Inputs are easily found for the majority of tested endpoints, and achieve performance on par with the best available grey-box approaches. B3IT reduces costs by $30\times$ compared to existing methods, while operating in a strict black-box setting.

\end{abstract}

\section{Introduction}

LLM APIs are increasingly embedded in high-impact applications, such as software engineering \cite{zheng2025towards}, yet users have no guarantee that the model behind an endpoint remains unchanged. Providers may modify models for safety updates, cost optimization, or infrastructure changes, often without notice. Quietly deploying quantized versions to reduce costs, for instance, could degrade performance on tasks users depend on. Such changes are not hypothetical: Grok on X suffered three incidents in 2025 where modified system prompts were deployed due to rogue employees or bad updates \citep{grokfeb,grokjuly,grokmay}. Anthropic experienced infrastructure bugs in 2025 that degraded Claude's responses, which remained undetected for weeks despite affecting 16\% of Claude Sonnet 4 requests at peak \cite{ant}.

Several change-detection methods have been proposed \citep{DBLP:conf/iclr/GaoLG25,esf,trap,LT}. However, they face a fundamental tension between access requirements and cost. White-box methods like ESF \cite{esf} and TRAP \cite{trap} craft sensitive inputs using model internals (embeddings, gradient-based optimization), limiting their use to known models. Grey-box methods like LT \cite{LT} require log-probabilities, which are available only on a fraction of public endpoints. Fully black-box approaches like MET \cite{DBLP:conf/iclr/GaoLG25} avoid these requirements but demand many queries to the tracked model, making continuous monitoring prohibitively expensive.

We introduce \proto (\emph{Black-Box Border Input Tracking}), a change-detection method that operates in a strict black-box setting, observing only output tokens, while achieving accuracy and cost efficiency comparable to grey-box approaches. Our key insight, brought by a theoretical analysis, is that at low temperature, inputs where two tokens are nearly tied as the top prediction (``Border Inputs'') become extremely sensitive detectors. In other words, even tiny model changes can dramatically alter their output distribution. We show theoretically that this sensitivity diverges as temperature tends to zero, 
exhibiting a phase transition between detectable and undetectable regimes.

\proto exploits this phenomenon: it discovers Border Inputs through black-box sampling, then monitors for changes by comparing output distributions. Surprisingly, Border Inputs are easy to find on most production models. The result is a method that detects subtle changes at $1/30$th the cost of alternatives, enabling large-scale continuous and cost-effective API monitoring.

\paragraph{Contributions.}
\textbf{(1)} We establish theoretical foundations for change detection in LLMs, revealing a phase transition at low temperature that enables high-sensitivity detection without model access.
\textbf{(2)} We propose \proto, a practical black-box method that identifies Border Inputs and uses them for efficient change detection.
\textbf{(3)} We validate \proto \emph{in vitro} on TinyChange \cite{LT}, achieving detection performance comparable with grey-box methods at $1/30$th the cost of the next best black-box baselines.
\textbf{(4)} We demonstrate \proto \emph{in vivo} on 131 endpoints (78 models, 34 providers), showing broad applicability for continuous monitoring.

\textbf{Roadmap.} \Cref{s:model} describes our system model. \Cref{s:theory} presents the theoretical analysis underpinning the \proto method, which we describe in \Cref{s:eval} and evaluate in \Cref{s:expes}.

\textbf{Code.} All our code is open-source at \href{https://github.com/timothee-chauvin/token-efficient-change-detection-llm-apis}{github.com/timothee-chauvin/token-efficient-change-detection-llm-apis}.

\section{System model}
\label{s:model}

In this paper, we take the perspective of an unprivileged user facing a black-box API exposing an LLM $f_\theta$, where $\theta \in \Theta$ denotes the (unknown) model parameters. The user can only submit queries $(x,T)$ for some input $x\in \mathcal{X}$ and temperature $T\in \mathbb{R}^+$, and collect the answer $f_\theta(x,T)$.

\paragraph{Change detection.} We consider a two-stage interaction: an \textit{initialization stage} and a \textit{detection stage}. Let $\theta_0$ denote the model parameters during initialization and $\theta_1$ during detection. The change detection problem consists in deciding whether $\theta_0 = \theta_1$, as in prior work~\cite{DBLP:conf/iclr/GaoLG25,LT}. Formalized as a hypothesis test, let $H_0:= \theta_0=\theta_1$ (the model hasn't changed) versus $H_1:= \theta_0\neq \theta_1$ (the model has changed).

In this endeavor, we are interested in two key quantities: \begin{itemize}
    \item \textbf{Detector accuracy}: we want to minimize false positives (incorrectly detecting a change) and false negatives (missing actual changes). Formalizing a test as a (possibly randomized) function $\phi$ taking values in $[0, 1]$, we therefore seek to minimize $\mathbb{P}(\phi=1|H_0)+\mathbb{P}(\phi=0|H_1)$. 
    \item \textbf{Cost}: for wide-scale monitoring of LLM APIs, detecting changes should induce a minimal cost. This is especially true of the detection stage, which is run repeatedly. 
\end{itemize} 

\paragraph{Types of changes.} Deployed LLMs can undergo many kinds of modifications: fine-tuning (full or Low-Rank Adaptation~\cite{hu2022lora}), quantization, model distillation, system prompt changes, or further Reinforcement Learning. Infrastructure can also affect outputs: CUDA versions, GPU selection, compiler optimizations, or routing errors directing requests to misconfigured servers~\cite{ant}. The intensity of any such modification determines detection difficulty; our focus is on query-efficient detection, regardless of the change's origin or intensity.

For the theoretical analysis, we model parameter changes as $\theta_1 := \theta_0 + \epsilon h$ where $h \in \mathbb{S}^{q-1}$ is a unit direction and $\epsilon$ controls magnitude. This local perturbation model enables tractable analysis via the Local Asymptotic Normality framework (\Cref{s:theory}). In experiments, we directly evaluate on realistic updates (fine-tuning, pruning, etc.) across a range of change magnitudes, and on live LLM APIs.

\paragraph{Single-token perspective.} Our analysis focuses on the first output token from a single query. While approaches such as MET~\cite{DBLP:conf/iclr/GaoLG25} leverage multiple output tokens, inter-token dependencies complicate the analysis substantially. However, multiple queries from different prompts can be combined for improved performance, which we demonstrate experimentally.
 
\section{Theoretical analysis and guidelines}
\label{s:theory}

Building on optimal change detection in the sense of Neyman–Pearson \cite{NeymanPearson1933}, we exhibit a new phase-transition phenomenon for LLMs (\Cref{thm:zero-temp-regimes}). At temperature $T \ll 1$, inputs where two or more tokens share the maximal logit become extremely sensitive to parameter changes: we call these \emph{Border Inputs} (BIs). This result provides the theoretical foundation for \proto, which we introduce in \Cref{s:eval}.

\subsection{Rationale}

Testing whether two LLMs differ reduces to testing whether they induce different probability distributions over tokens. We focus on the distribution of the first generated token and, for analytical tractability, assume independence across generations.

Since the token distribution is finite, the problem formally reduces to multinomial hypothesis testing, a classical setting with an extensive literature directly studying the problem, or related ones such as estimation \cite{balakrishnan2017hypothesistestinghighdimensionalmultinomials,berrett2020locallyprivatenonasymptotictesting,cai2023testinghighdimensionalmultinomialsapplications,aliakbarpour2024optimalalgorithmsaugmentedtesting,10.1111/j.2517-6161.1984.tb01318.x,barron1989goodnessoffit,kim2018multinomialgoodnessoffitbasedustatistics,NEURIPS2019_8e987cf1,acharya2017differentiallyprivatetestingidentity,pmlr-v48-kairouz16,pmlr-v70-acharya17a,acharya2016estimatingrenyientropydiscrete,louati2025estimationdiscretedistributionshigh,chhor2022sharplocalminimaxrates}.
However, these generic results do not account for the structure of LLMs, or for the ability to design informative input prompts. We therefore analyze how prompt choice and sampling temperature shape sensitivity to model perturbations, ultimately identifying BIs as appealing test inputs.

\subsection{The Local Asymptotic Normality Framework}

Let $\mathbf{p}=(p_1,\dots,p_{d-1})\in(0,1)^{d-1}$ denote the reduced parametrization of the categorical output distribution, with $p_d:=1-\sum_{i=1}^{d-1}p_i>0$. The vector $\mathbf{p}$ models the output distribution over tokens, where $d$ is the vocabulary size. We adopt this reduced parametrization to isolate the free variables. Querying the model $n$ times on the same input yields i.i.d.\ samples $Y_1,\dots,Y_n\sim\mathbf{p}$.

We define the reduced empirical frequency estimator
$
    T_n := \hat p := (\hat p_1,\dots,\hat p_{d-1}),
\hat p_j := \frac{1}{n}\sum_{i=1}^n \mathbf 1\{Y_i=j\} \;.
$
This estimator is unbiased and satisfies $\mathrm{Cov}_{\mathbf{p}}(T_n)=\frac{1}{n}F(\mathbf{p})$, where $F(\mathbf{p}):=\mathrm{diag}(\mathbf{p})-\mathbf{p}\mathbf{p}^T$ is the inverse of the Fisher information matrix of the distribution $\mathbf{p}$. Since $T_n$ is a sufficient statistic, it contains all the information needed to construct optimal tests.

Consider two sets of model parameters $\theta_0$ and $\theta_1$, inducing token distributions $\mathbf{p}_0$ and $\mathbf{p}_1$, respectively. By the Central Limit Theorem (see \cite{van2000asymptotic}, Example~2.18), the statistic $T_n$ admits the asymptotic Gaussian approximation
$
T_n \sim \mathcal N\left(\mathbf{p}_k,\frac{1}{n}F(\mathbf{p}_k)\right),
k\in\{0,1\}.
$

Let $\theta \in \Theta \subset \mathbb R^q$ denote the model parameters, let $x_{\mathrm{test}}$ be the prompt used for testing, and define the reduced output map
$
    g(\theta,x_{\mathrm{test}})
:= f(\theta,x_{\mathrm{test}})_{1:(d-1)} \in (0,1)^{d-1} \;.
$
We consider a perturbation of the parameters of the form 
$
\theta \mapsto \theta + \epsilon h,
h \in \mathbb S^{q-1},
$
and write
$
\mathbf{p}_1 := g(\theta+\epsilon h,x_{\mathrm{test}}),
\mathbf{p}_0 := g(\theta,x_{\mathrm{test}}).
$
Relating the difficulty of change detection to parameters arising in LLM inference requires particular care. In practice, the model landscape is highly complex, and the effect of parameter perturbations on likelihood ratios quickly becomes intractable. As a result, one is often forced to rely on standard multinomial testing, which does not reveal how inference-specific parameters, such as the prompt or the temperature, affect the difficulty of the test. To address this limitation, we focus on the \emph{local} regime. Indeed, in a neighborhood of a reference parameter, even highly complex models can often be approximated through their Taylor expansion. This perspective allows us to derive a meaningful characterization of the difficulty of detecting small parameter changes and to connect it to control knobs relevant to LLM inference.

A further difficulty arises from the statistical nature of the problem. We do not observe the model directly, but only samples from its output distribution, so the sample size must be calibrated carefully to avoid degenerate regimes: with too many samples, the testing problem becomes essentially trivial but costly, whereas with too few, it is nearly impossible. Consequently, when considering a local asymptotic regime in which the parameters approach one another and the testing problem becomes intrinsically harder, the sample size must scale accordingly.

A meaningful theory emerges by considering small parameter perturbations in the Local Asymptotic Normality (LAN) regime (see \cite{van2000asymptotic}, Chapters~7 and~16), which we now introduce. This framework is central in statistics, as it characterizes the regime in which hypothesis testing is neither trivial nor impossible.

Assuming $g(\theta,x_{\mathrm{test}})$ is differentiable with respect to $\theta$,
we consider local perturbations of the form $\epsilon_n = \frac{s}{\sqrt n}$ for $s\in\mathbb R$ fixed, and obtain the expansion
$
\mathbf{p}_n := \mathbf{p}_{\epsilon_n}
=
\mathbf{p}_0 + \epsilon_n J h + r_n$, where $
J := \nabla_\theta g(\theta,x_{\mathrm{test}})$
is the Jacobian of the reduced output distribution w.r.t. the model parameters, and $
\sqrt n \|r_n\|\to 0 .
$
This corresponds to a small parameter perturbation in direction $h$ at the critical rate that preserves nontrivial asymptotic effects.

Leveraging the perspective of this regime leads to the following result, which quantifies the asymptotic power of optimal tests.
\begin{theorem}[Optimal Tests in the LAN Regime]
    \label{th:asymptotics_optimal_tests}
    Let $\alpha \in (0, 1)$.
    If $(\phi_n)$ is a sequence of tests such that, for every $n$, $\phi_n$ is the test with the lowest Type-II error among all tests with Type-I error at most $\alpha$ in testing $\mathbf{p}_0$ vs $\mathbf{p}_n$, then 
    \begin{equation}
        \begin{aligned}
            \text{Type-II}&(\phi_n) \to \mathbb{P}\left( \mathcal{N} \left( 0 ,  1\right) \leq Q_{\alpha} -\sqrt{s^2 \mathrm{SNR}^2(h)} \right)
        \end{aligned}
    \end{equation}
    where $\mathrm{SNR}^2(h):= h^T(J^T F (\mathbf{p}_0)^{-1} J) h$, and $\text{Type-II}(\phi_n)$ refers to the error of $\phi_n$ under $\mathbf{p}_n$, and where $Q_{\alpha}$ is the quantile of order $1-\alpha$ of $\mathcal{N}(0, 1)$.
\end{theorem}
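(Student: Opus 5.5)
The argument is the standard LAN/Le Cam route specialized to the categorical family. First, establish local asymptotic normality of the sequence of experiments $\{\mathbf{p}_0^{\otimes n}, \mathbf{p}_n^{\otimes n}\}$. Write the log-likelihood ratio as
\[
\Lambda_n=\sum_{i=1}^n \log\frac{p_{n,Y_i}}{p_{0,Y_i}},
\]
where $p_{n,d}=1-\sum_{j<d}p_{n,j}$. Since all $p_{0,j}>0$, a second-order Taylor expansion of $\log(1+u)$ applies with
\[
u_j=(p_{n,j}-p_{0,j})/p_{0,j}=\tfrac{s}{\sqrt n}(Jh)_j/p_{0,j}+o(n^{-1/2})
\]
(with the convention $(Jh)_d=-\sum_{j<d}(Jh)_j$). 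This gives
\[
\Lambda_n=s\,\Delta_n-\tfrac{s^2}{2}\,\mathrm{SNR}^2(h)+o_{P}(1),\qquad \Delta_n:=\tfrac{1}{\sqrt n}\sum_i \dot\ell(Y_i),
\]
where $\dot\ell(j)=\sum_{k}(Jh)_k\mathbf 1\{j=k\}/p_{0,k}$ is the score in direction $h$. The score has mean zero under $\mathbf{p}_0$. Its variance is $\sum_{k=1}^d (Jh)_k^2/p_{0,k}$, and a short computation in the reduced coordinates (the Sherman–Morrison identity $F(\mathbf{p}_0)^{-1}=\mathrm{diag}(\mathbf{p}_0)^{-1}+\mathbf{1}\mathbf{1}^T/p_{0,d}$) shows it equals $h^TJ^TF(\mathbf{p}_0)^{-1}Jh=\mathrm{SNR}^2(h)$. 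The quadratic term is handled by the law of large numbers, and the cubic remainder is $O_P(n^{-1/2})$ because the $u_j$ are uniformly $O(n^{-1/2})$ on a finite alphabet. The CLT then gives $\Lambda_n\rightsquigarrow\mathcal N(-\tfrac{\sigma^2}{2},\sigma^2)$ under $\mathbf{p}_0$, with $\sigma^2:=s^2\mathrm{SNR}^2(h)$.

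Second, transfer to the alternative. Because the limit satisfies $\mu=-\sigma^2/2$, Le Cam's first lemma gives mutual contiguity. Le Cam's third lemma, applied to the joint limit of $(\Lambda_n,\Lambda_n)$, then gives $\Lambda_n\rightsquigarrow\mathcal N(+\tfrac{\sigma^2}{2},\sigma^2)$ under $\mathbf{p}_n$.

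Third, identify the optimal test. By Neyman–Pearson, $\phi_n$ rejects when $\Lambda_n>c_n$, with randomization on the boundary, where $c_n$ is chosen so that the size is exactly $\alpha$. The null limit is continuous, so $c_n\to c:=-\sigma^2/2+\sigma Q_\alpha$. The randomization mass vanishes because the limit has no atoms, and contiguity transfers this to the alternative. Therefore
\[
\text{Type-II}(\phi_n)=\mathbb P_{\mathbf{p}_n}(\Lambda_n\le c_n)\to \mathbb P\bigl(\mathcal N(\sigma^2/2,\sigma^2)\le c\bigr)=\mathbb P\bigl(\mathcal N(0,1)\le Q_\alpha-\sigma\bigr),
\]
which is the claim, since $\sigma=\sqrt{s^2\mathrm{SNR}^2(h)}$. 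If $\mathrm{SNR}(h)=0$, the limit is degenerate and the Type-II error tends to $1-\alpha$, which is consistent with the formula. That case is handled separately, since $\Lambda_n\to0$ in probability.

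The main obstacle is the second step together with the handling of $c_n$. Specifically, one must justify convergence of the Neyman–Pearson critical values when the log-likelihood ratio is discrete at finite $n$. Here the randomized test and the continuity of the limit distribution are what make the argument go through. A secondary but essential point is the algebraic identification of the Fisher information in the reduced parametrization with $F(\mathbf{p}_0)^{-1}$. I would verify this directly via Sherman–Morrison rather than rely on the claim stated in the text.
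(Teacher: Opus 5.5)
Your proposal is correct. Its first and third steps coincide with the paper's. The paper also expands $\log(p_{n,j}/p_{0,j})$ to second order and identifies the score variance with $\mathbf d^T F(\mathbf p_0)^{-1}\mathbf d$ through the Sherman--Morrison inverse. It obtains $\mathcal N(-\sigma^2/2,\sigma^2)$ under $\mathbf p_0$ from the CLT, with $\sigma^2=s^2\mathrm{SNR}^2(h)$. It then shows that the Neyman--Pearson thresholds converge to $-\sigma^2/2+\sigma Q_\alpha$ with vanishing boundary mass.

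The genuine difference is how you reach the limit law under $\mathbf p_n$. The paper never uses contiguity. It verifies by hand that the remainder of the expansion is also $o_{\mathbb P}(1)$ under $\mathbf p_n$, using Hoeffding for the quadratic term. It then writes $Z_n=\sqrt n(T_n-\mathbf p_n)+s\mathbf d+o(1)$ and proves a triangular-array CLT for $\sqrt n(T_n-\mathbf p_n)$ directly, through characteristic functions and L\'evy's continuity theorem. This gives $\mathcal N(\sigma^2/2,\sigma^2)$ under $\mathbf p_n$, and the same weak limit gives the vanishing boundary mass under the alternative. You instead get mutual contiguity from Le Cam's first lemma and the mean shift from Le Cam's third lemma. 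As a result, the remainders and the boundary mass only need to be controlled under the fixed law $\mathbf p_0$, and contiguity transports them to $\mathbf p_n$ for free.

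Your route is shorter and explains structurally why the mean flips sign: the shift equals the covariance of the log-likelihood ratio with itself. The paper's route is more elementary and self-contained, at the cost of an explicit Lindeberg-type computation. You are also slightly more complete in treating $\mathrm{SNR}^2(h)=0$ separately. The paper's argument that $\mathbb P_{\mathbf p_0}(\log\Lambda_n=a_n)\to0$ silently requires a non-degenerate Gaussian limit.

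One detail to make explicit when you write out step one: the $o(n^{-1/2})$ part of $u_j$, summed over $n$ observations, is a priori only $o(\sqrt n)$. To make that contribution $o_{\mathbb P}(1)$ you need the centering $\sum_j p_{0,j}u_j=0$, which holds because both probability vectors sum to one. The paper uses exactly this when it writes $n\sum_j\Delta_{n,j}=0$.
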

\begin{proof}
    See \Cref{proof_of_th:asymptotics_optimal_tests}.
\end{proof}

This result is central to our theory, as it shows that the viability of the optimal (and thus any) test is entirely characterized by a single scalar quantity, $\mathrm{SNR}^2(h)$, which captures the intrinsic difficulty of the testing problem: the higher it is, the easier the test. It conveniently takes the form of a signal-to-noise ratio, where the signal is the change in intensity, and the noise comes from temperature sampling.

\paragraph{Takeaway.}
In the small-variation regime, optimal change detection is governed by three elements: the tampering direction $h$, the Jacobian of the model, and the Fisher information of the output token distribution, combined through the criterion $\mathrm{SNR}^2(h)$.
The Jacobian controls the signal: a large Jacobian amplifies how parameter perturbations affect the output distribution. The Fisher information controls the noise: a well-conditioned (low-variance) output distribution reduces sampling uncertainty and improves detectability.

Balancing these effects may seem difficult in a black-box setting. We show, however, that tuning the sampling temperature provides concrete control over this trade-off.

\subsection{Decomposition for LLMs: exploiting the last layer}
A key observation underlying our theory is that, for LLMs, the expression of $\mathrm{SNR}^2(h)$ can be further simplified by making the transformer architecture explicit.

Let $r(\theta,x_{\mathrm{test}})\in\mathbb R^m$ denote the representation produced by the model
up to the last layer.
The final layer is a linear head $(W,b)$ producing logits
$
    z(\theta)
=
z(\theta,x_{\mathrm{test}})
=
W r(\theta,x_{\mathrm{test}})+b
\in \mathbb R^d .
$
Given a temperature $\tau>0$ (denoted $T$ elsewhere in the paper; we switch to $\tau$ in this theoretical analysis to avoid collision with the test statistic $T_n$), the output token distribution is computed as the softmax of the logits:
$
    p^{(\tau)}_i(\theta)
=
\frac{\exp(z_i(\theta)/\tau)}{\sum_{j=1}^d \exp(z_j(\theta)/\tau)},
$
for $i=1,\dots,d $.

We will write $\mathbf{p}^{(\tau)}(\theta)\in\mathbb R^d$ for the full vector and
$\mathbf{p}^{(\tau)}_{1:(d-1)}(\theta)\in\mathbb R^{d-1}$ for its reduced coordinates.

We define the full matrix $\Sigma(\mathbf{p}):=\operatorname{diag}(\mathbf{p})-\mathbf{p} \mathbf{p}^T\in\mathbb R^{d\times d}$,
and recall the reduced matrix
$
F(\mathbf{p}_{1:(d-1)})
:=
\operatorname{diag}(\mathbf{p}_{1:(d-1)})-\mathbf{p}_{1:(d-1)}\mathbf{p}_{1:(d-1)}^T
\in\mathbb R^{(d-1)\times(d-1)}
$. We use these two different notations to better distinguish reduced coordinates from full coordinates.

We write the parameters as $\theta=(\theta_{\mathrm{pre}},W,b)$, where $\theta_{\mathrm{pre}}$
collects all parameters before the head, so that
$r(\theta,x_{\mathrm{test}})=r(\theta_{\mathrm{pre}},x_{\mathrm{test}})$.
We define
$
    J_r(\theta_{\mathrm{pre}})
:=
\frac{\partial r(\theta_{\mathrm{pre}},x_{\mathrm{test}})}{\partial \theta_{\mathrm{pre}}}
\in\mathbb R^{m\times q_{\mathrm{pre}}}.
$
Then the Jacobian of the logits w.r.t. the full parameter vector $J_z(\theta) \in\mathbb R^{d\times (q_{\mathrm{pre}}+dm+d)}$ decomposes as
$
    J_z(\theta)
=
\Big[
\underbrace{W J_r(\theta_{\mathrm{pre}})}_{\in\mathbb R^{d\times q_{\mathrm{pre}}}}
 \Big|
\underbrace{r(\theta_{\mathrm{pre}},x_{\mathrm{test}})^T\otimes I_d}_{\in\mathbb R^{d\times (dm)}}
 \Big|
\underbrace{I_d}_{\in\mathbb R^{d\times d}}
\Big]
$.

Finally, let
$
    J^{(\tau)}(\theta)
:=
\frac{\partial \mathbf{p}^{(\tau)}_{1:(d-1)}(\theta)}{\partial \theta}
\in\mathbb R^{(d-1)\times (q_{\mathrm{pre}}+dm+d)}
$
be the Jacobian of the reduced output map.

\begin{lemma}[$\mathrm{SNR}^2(h)$ in LLMs]
\label{lem:jacobian-temp}
For any $\tau>0$,

\begin{equation}
\label{eq:uSNR-tau-correct}
\mathrm{SNR}^2(h)
=
\frac{1}{\tau^2}
h^T \left(
J_z(\theta)^T
\Sigma \left(\mathbf{p}^{(\tau)}(\theta)\right)
J_z(\theta)
\right) h.
\end{equation}
\end{lemma}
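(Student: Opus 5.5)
The computation rests on the chain rule through the softmax, followed by an identity relating the reduced Fisher geometry to the full covariance $\Sigma$. By definition (Theorem~\ref{th:asymptotics_optimal_tests}), $\mathrm{SNR}^2(h)=h^T J^{(\tau)T}F(\mathbf{p}_{1:(d-1)})^{-1}J^{(\tau)}h$, where $J^{(\tau)}$ is the Jacobian of the reduced output map. I will proceed in three steps.

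First, compute the softmax Jacobian. Differentiating $p_i=\exp(z_i/\tau)/\sum_j\exp(z_j/\tau)$ gives
\[
\frac{\partial \mathbf{p}^{(\tau)}}{\partial z}=\frac{1}{\tau}\Sigma(\mathbf{p}^{(\tau)}).
\]
Let $P:=[I_{d-1}\,|\,0]\in\mathbb R^{(d-1)\times d}$ be the coordinate projection. The chain rule then yields
\[
J^{(\tau)}=\frac1\tau P\Sigma J_z .
\]
This factors out $1/\tau$, which becomes $1/\tau^2$ in the quadratic form. What remains is to show
\[
\Sigma P^T F^{-1} P\Sigma=\Sigma ,
\]
where $F=F(\mathbf{p}_{1:(d-1)})=P\Sigma P^T$. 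Note that the reduced matrix is exactly the top-left block of $\Sigma$.

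Second, prove this identity, which I expect to be the main obstacle. Since $\Sigma\mathbf 1=0$, $\Sigma$ has rank $d-1$, with kernel spanned by $\mathbf 1$ (all $p_i>0$). Define $M:=P^T F^{-1}P$. Then $\Sigma M\Sigma$ and $\Sigma$ are both symmetric and both annihilate $\mathbf 1$, so it suffices to compare them on vectors of the form $P^Tu$, i.e.\ those with last coordinate $0$. These vectors together with $\mathbf 1$ span $\mathbb R^d$.

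Equivalently, I will show that $P^TF^{-1}P\Sigma$ acts as a projection fixing $\mathrm{range}(\Sigma)=\mathbf 1^\perp$ modulo $\ker\Sigma$. Concretely, $P\Sigma P^T=F$ gives $P\Sigma M\Sigma P^T=FF^{-1}F=F=P\Sigma P^T$, so the two matrices agree on the top-left block. For a fully rigorous version, I will verify directly that $v:=\Sigma P^T u$ satisfies $v=\Sigma M v$. Since $Pv=Fu$, we get $Mv=P^T u$ and hence $\Sigma M v=\Sigma P^Tu=v$. Any $w\in\mathbb R^d$ decomposes as $w=P^Tu+c\mathbf 1$, so $\Sigma w=\Sigma P^Tu$. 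Therefore $\Sigma M\Sigma w=\Sigma M\Sigma P^Tu=\Sigma P^T u=\Sigma w$, which is the identity.

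Third, combine the pieces:
\[
\mathrm{SNR}^2(h)=\frac{1}{\tau^2}h^TJ_z^T\Sigma P^TF^{-1}P\Sigma J_zh=\frac1{\tau^2}h^TJ_z^T\Sigma J_zh.
\]
This is \eqref{eq:uSNR-tau-correct}. Two facts are used implicitly: that $F$ is invertible, which holds because $p_d>0$ (the Schur-complement/Sherman–Morrison formula gives $F^{-1}=\mathrm{diag}(\mathbf p_{1:(d-1)})^{-1}+\mathbf 1\mathbf 1^T/p_d$), and the block structure of $J_z$, which plays no role beyond being the logit Jacobian.
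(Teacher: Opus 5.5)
Your proof is correct. Its skeleton matches the paper's: the softmax Jacobian $\frac{1}{\tau}\Sigma(\mathbf{p}^{(\tau)})$, the chain rule $J^{(\tau)}=\frac{1}{\tau}AJ_z$ with $A=P\Sigma$ (the paper's $A(\mathbf{p})$ is exactly the first $d-1$ rows of $\Sigma$), and reduction to the identity $A^TF^{-1}A=\Sigma$. You diverge from the paper in how that identity is proved.

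The paper works in coordinates. It first checks $FM=I$ to verify the explicit inverse $F^{-1}=\operatorname{diag}(\mathbf{p}_{1:(d-1)})^{-1}+\frac{1}{p_d}\mathbf{1}\mathbf{1}^T$. It then writes $A=[F\mid -p_d\mathbf{p}_{1:(d-1)}]$ and multiplies out the blocks; the only nontrivial entry is $p_d^2\,\mathbf{p}_{1:(d-1)}^TF^{-1}\mathbf{p}_{1:(d-1)}=p_d(1-p_d)$.

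You instead show that $P^TF^{-1}P$ is a generalized inverse of $\Sigma$. You use only $\Sigma\mathbf{1}=0$, $P\Sigma P^T=F$, and the fact that $\mathbf{1}$ together with $\operatorname{range}(P^T)$ spans $\mathbb{R}^d$. This is the standard fact that a nonsingular $(d-1)\times(d-1)$ principal block of a rank-$(d-1)$ matrix yields a generalized inverse.

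Your route needs only that $F^{-1}$ exists, not its form. Existence requires all $p_i>0$, not just $p_d>0$, but this is automatic for a softmax at $\tau>0$. Your argument also works verbatim whichever coordinate is dropped in the reduced parametrization. The paper's computation is more hands-on, and it yields the explicit inverse-Fisher formula that the paper also uses in the LAN expansion of the log-likelihood ratio.

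Your first, block-level sketch is already a complete proof, even though you framed it as heuristic. A symmetric matrix that annihilates $\mathbf{1}$ and has zero top-left $(d-1)\times(d-1)$ block must vanish: the off-diagonal column is forced to be $0$, and then so is the corner entry. So agreement of $\Sigma M\Sigma$ and $\Sigma$ on that block suffices.
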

\begin{proof}
    See \Cref{proof_of_lem:jacobian-temp}.
\end{proof}
Although purely technical, this result is crucial to our theory: it renders the analysis tractable for LLMs, with the simplification of the inner inverse arising from the structure of the softmax layer, and enables the derivation of our main result.

\subsection{The Zero-Temperature Limit}

We now have all the ingredients to study the low-temperature regime and present our main theoretical result.
Let $z:=z(\theta)$ and define the set of maximizers
$
    \mathcal M
:=
\left\{
i \in \{1,\dots,d\}:
z_i=\max_{1\le j\le d} z_j
\right\},
k:=|\mathcal M|$.

We further define $\Sigma_{\mathcal M}$ as the operator $\Sigma$ applied to the probability vector assigning uniform mass to $\mathcal M$ and zero mass elsewhere.

\begin{theorem}[Phase Transition]
\label{thm:zero-temp-regimes}
Depending on the value of $k$,
\begin{itemize}
    \item If $k=1$, then
$\mathrm{SNR}^2(h)\to 0$ as $\tau\to 0$.
\item If $k\geq 2$, and if $h^T \left(
J_z^T
\Sigma_{\mathcal M}
J_z
\right) h \neq 0$, then $\mathrm{SNR}^2(h)\to+\infty$ as $\tau \to 0$.
\end{itemize}
\end{theorem}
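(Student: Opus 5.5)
Starting from \Cref{lem:jacobian-temp}, set $v := J_z(\theta) h \in \mathbb{R}^d$; this vector does not depend on $\tau$. Then
\[
\mathrm{SNR}^2(h) = \frac{1}{\tau^2}\, v^T \Sigma\big(\mathbf{p}^{(\tau)}\big) v = \frac{1}{\tau^2}\,\mathrm{Var}_{I\sim \mathbf{p}^{(\tau)}}(v_I),
\]
because $v^T(\operatorname{diag}(\mathbf{p})-\mathbf{p}\mathbf{p}^T)v = \sum_i p_i v_i^2 - (\sum_i p_i v_i)^2$. The problem therefore reduces to comparing the variance of $v_I$ under the tempered softmax with $\tau^2$ as $\tau\to 0$. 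The first ingredient is the standard limit $\mathbf{p}^{(\tau)} \to \mathbf{u}_{\mathcal M}$, the uniform distribution on $\mathcal M$. To make it quantitative, write $\Delta := z_{\max} - \max_{j\notin\mathcal M} z_j > 0$. Every $j\notin \mathcal M$ then satisfies $p^{(\tau)}_j \le \exp(-\Delta/\tau)$, and every $i\in\mathcal M$ satisfies $p_i^{(\tau)} = \frac{1}{k} + O(d\, e^{-\Delta/\tau})$. (If $\mathcal M$ is the whole vocabulary, $\mathbf{p}^{(\tau)}$ is exactly uniform for every $\tau$ and the $k\ge2$ argument below still applies.)

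\textbf{Case $k\ge 2$.} Continuity of $\Sigma$ gives $v^T\Sigma(\mathbf{p}^{(\tau)})v \to v^T \Sigma_{\mathcal M} v = h^T(J_z^T\Sigma_{\mathcal M}J_z)h$, which is a strictly positive constant $c$. It is positive because it equals the variance of $v_I$ with $I$ uniform on $\mathcal M$, and it is nonzero by hypothesis. Hence $\mathrm{SNR}^2(h) \ge c/(2\tau^2)$ for all small $\tau$, so it diverges to $+\infty$.

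\textbf{Case $k=1$.} Here the limit $\Sigma_{\mathcal M}$ is the zero matrix, since a point mass has zero variance. The crude limit therefore yields only an indeterminate form $0\cdot\infty$, and the rate must be checked; this is the main obstacle. Let $i^\ast$ be the unique maximizer. I would use the bound $\mathrm{Var}(v_I) \le \mathbb{E}[(v_I - v_{i^\ast})^2] = \sum_{j\ne i^\ast} p_j^{(\tau)}(v_j - v_{i^\ast})^2 \le 4\|v\|_\infty^2\, (d-1)\, e^{-\Delta/\tau}$. This gives
\[
\mathrm{SNR}^2(h) \le \frac{4(d-1)\|v\|_\infty^2}{\tau^2}\, e^{-\Delta/\tau} \longrightarrow 0,
\]
because exponential decay in $1/\tau$ beats any polynomial blow-up in $1/\tau$.

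The key point is therefore the exponential tail estimate on the non-maximal softmax masses. That estimate follows directly from $p_j^{(\tau)} = e^{(z_j - z_{\max})/\tau}\big/\sum_l e^{(z_l - z_{\max})/\tau}$: the denominator is at least $1$, since the maximizer contributes $e^0 = 1$. The estimate controls the $k=1$ case. The $k\ge2$ case needs only continuity and the positivity hypothesis.
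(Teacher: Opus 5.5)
Your proof is correct and follows essentially the same route as the paper. You reduce via \Cref{lem:jacobian-temp} to $\tau^{-2}(J_zh)^T\Sigma(\mathbf{p}^{(\tau)})(J_zh)$, use continuity of $\Sigma$ for $k\ge 2$, and for $k=1$ use the exponential tail $p^{(\tau)}_j\le e^{-\Delta/\tau}$ to beat the $\tau^{-2}$ factor; the only cosmetic difference is that you bound the quadratic form via $\mathrm{Var}(v_I)\le\mathbb{E}[(v_I-v_{i^\ast})^2]$, whereas the paper uses $\|\Sigma(\mathbf{p}^{(\tau)})\|_{\mathrm{op}}\le\operatorname{tr}\Sigma(\mathbf{p}^{(\tau)})=1-\|\mathbf{p}^{(\tau)}\|_2^2$.
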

\begin{proof}
    See \Cref{proof_of_thm:zero-temp-regimes}.
\end{proof}

\begin{remark}[On the condition $h^T \left(J_z^T\Sigma_{\mathcal M}J_z\right) h \neq 0$]
At first glance, the condition
$
h^T \left(J_z^T \Sigma_{\mathcal M} J_z\right) h \neq 0
$
may appear restrictive. However, as discussed in \Cref{proof_of_remark:head-only}, it holds for almost every direction $h$. 
In particular, if $h$ is not degenerate and is, for instance, modeled as a continuous random perturbation of the parameters, this condition is satisfied with probability one.
\end{remark}

\paragraph{Takeaway.}
At very low temperature, detectability exhibits a sharp dichotomy.
If the output distribution collapses to a single token, then
$\mathrm{SNR}^2(h)\to 0$ and detection becomes nearly impossible: the distribution converges so rapidly to a Dirac mass that small parameter perturbations cannot alter the outcome.
If at least two logits are tied, $\mathrm{SNR}^2(h)$ diverges and detectability is maximal: in the presence of ties, arbitrarily small parameter changes can force the model to select a single mode, making unimodal versus multimodal behavior easy to test. Moreover, under the conditions of Theorem 3.3, this divergence is obtained regardless of the model's weights, enabling a black-box approach.

A natural strategy for change detection is therefore to select $x_{\mathrm{test}}$ such that the logit distribution has at least two maximal entries. These inputs (termed \textbf{Border Inputs} or BIs) lie at the core of the change detection method presented next.

\section{\proto: Black-Box Border Input Tracking}
\label{s:eval}

Building upon \Cref{s:theory}, the \proto method comprises two stages: $i)$ an initialization stage whose goal is to identify Border Inputs (BIs) and their support, and $ii)$ a detection stage whose goal is to detect a potential change of the target model compared to the initialization stage. As BIs leverage the low temperature phase transition, they exhibit high sensitivity to model changes: intuitively, if an input identified as a BI during initialization remains a BI during detection, then the model has likely not changed.
We now describe both stages.

\subsection{The \proto Initialization Stage}
\label{sec:bitInitial}
The first step consists of finding BIs. While intuitively unlikely, in practice finding BIs turns out to be relatively easy on the majority of production models we tested, which allows \proto to use a simple procedure: submit $n$ random inputs $\initsamples$ times at the lowest temperature $T$ (possibly $T=0$), and retain as BIs the $\bar n$ inputs that produced more than one distinct output across their $\initsamples$ draws.

More precisely, the phase transition phenomenon explored in \Cref{thm:zero-temp-regimes} states that at $T \approx 0$ temperature, given an input $x$, only two situations can arise: either $x$ is not a BI ($k=1$), in which case it will always produce the same output token, or $x$ is a $BI$ ($k\ge2$) and the output token is uniformly sampled among top tokens: a multinomial output distribution is observed. 

The key parameter driving this stage is $\initsamples$, the number of samples collected per candidate BI $x$. A back-of-the-envelope analysis (see \Cref{app:optimal_sampling}) identifies $\initsamples=3$ as an optimal value if less than $75\%$ of the candidates are BIs. This is the value we select in \proto, which enables the discovery of BIs for less than $1,500$  requests in a majority of production APIs (See ~\Cref{sec: invivoExp}). 

Let $x_{\mathrm{test}}$ be an identified BI. To accurately estimate its initial distribution, it is sampled $n_1$ times. The complete initialization procedure is given as pseudocode in \Cref{alg:b3it-init}.

\subsection{The \proto Detection Stage}
The detection stage is to be triggered when a user wants to determine whether the target model has changed since the initialization stage. During detection, the temperature is set to its minimum (possibly $T=0$) to operate in the optimal test regime.

\newcommand{\maxLogProbSet}{\mathcal M}

In this regime, one can design simple tests with fully controlled nonasymptotic guarantees that are optimal for the practical use case considered here. As the output distribution concentrates on a (typically small) subset of tokens $\maxLogProbSet$ attaining the maximal log-probability, each with probability $1/k$, all remaining tokens receive zero probability. Consequently, the observed generation behavior can be modeled as sampling from a uniform distribution supported on an unknown finite subset of the token vocabulary.

Formally, we model the reference $\theta_0$ and candidate $\theta_1$ models by two unknown supports
$S_1,S_2\subset\Omega:=\{1,\dots,d\}$ with corresponding output distributions
$P=\mathrm{Unif}(S_1)$ and $Q=\mathrm{Unif}(S_2)$.
We observe independent samples
$X_1,\dots,X_{n_1}\sim P$ and $Y_1,\dots,Y_{n_2}\sim Q$,
and aim to test whether the two models induce the same effective support, namely
$H_0:S_1=S_2$ versus $H_1:S_1\neq S_2$.

In this setting, observing a token under one model that never appears under the other directly indicates a support mismatch. We therefore define the empirical supports
$\hat S_1:=\{X_1,\dots,X_{n_1}\}$ and $\hat S_2:=\{Y_1,\dots,Y_{n_2}\}$,
and the rejection event
$
\mathcal R := (\hat S_1\setminus \hat S_2)\cup(\hat S_2\setminus \hat S_1)\neq\emptyset.
$
In other words, \proto rejects $H_0$ whenever $\mathcal R$ occurs. The full detection procedure is given as pseudocode in \Cref{alg:b3it-detect}.

\subsection{Analysis: Type-I and Type-II Errors}

In this limit regime, we provide nonasymptotic guarantees for both Type-I and Type-II errors. We begin by controlling the Type-I error.
\begin{theorem}[Type-I error]
\label{th:simple_support_mismatch_T1_error}
Under $H_0: S_1=S_2=:S$ with $|S|=k$,
$
\mathbb P_{H_0}(\mathcal R)
\leq
k\Bigl(1-\frac1k\Bigr)^{n_1} + k\Bigl(1-\frac1k\Bigr)^{n_2} \\
\leq
k e^{-n_1/k} + k e^{-n_2/k}.
$
\end{theorem}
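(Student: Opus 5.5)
Under $H_0$ both samples are drawn from $\mathrm{Unif}(S)$, so $\hat S_1\subseteq S$ and $\hat S_2\subseteq S$ almost surely. The plan is to reduce the rejection event to a coverage failure: if $\hat S_1=S$ and $\hat S_2=S$, then $\hat S_1=\hat S_2$ and $\mathcal R=\emptyset$. Therefore
\[
\mathcal R \subseteq \{\hat S_1\neq S\}\cup\{\hat S_2\neq S\},
\]
and a union bound gives $\mathbb P_{H_0}(\mathcal R)\le \mathbb P(\hat S_1\neq S)+\mathbb P(\hat S_2\neq S)$. The independence between the two samples is not even needed here.

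To bound each term, we note that $\hat S_1\neq S$ means some $s\in S$ never appears among $X_1,\dots,X_{n_1}$. This is a coupon-collector failure event. For a fixed $s\in S$, the samples are i.i.d.\ uniform on $S$, so
\[
\mathbb P(s\notin \hat S_1)=\left(1-\tfrac1k\right)^{n_1}.
\]
A union bound over the $k$ elements of $S$ then gives
\[
\mathbb P(\hat S_1\neq S)\le k\left(1-\tfrac1k\right)^{n_1}.
\]
The same argument applied to the $Y_j$ gives $\mathbb P(\hat S_2\neq S)\le k\left(1-\tfrac1k\right)^{n_2}$. Summing the two bounds yields the first inequality of the theorem.

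The second inequality follows from $1-x\le e^{-x}$ with $x=1/k$, which gives $(1-1/k)^{n}\le e^{-n/k}$.

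None of these steps is a real obstacle. The only point requiring care is the first one: checking that $\mathcal R$ can occur only when at least one empirical support misses an element of the common support $S$. This holds because both empirical supports are contained in $S$, so if both equal $S$ their symmetric difference is empty.
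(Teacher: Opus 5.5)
Your proof is correct and matches the paper's argument step for step: the inclusion $\mathcal R\subseteq\{\hat S_1\neq S\}\cup\{\hat S_2\neq S\}$, a union bound, and the per-token miss probability $(1-1/k)^{n_i}$ summed over $S$. The paper phrases the last step as Markov's inequality applied to the number of unobserved tokens, which gives the same bound, and you also spell out the second inequality via $1-x\le e^{-x}$, which the paper leaves implicit.
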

\begin{proof}
See \Cref{proof_of_th:simple_support_mismatch_T1_error}.
\end{proof}

We may also control the Type-II error as follows.
\begin{theorem}[Type-II error]
\label{th:simple_support_mismatch_T2_error}
Let $I:=S_1\cap S_2$, $k_1 := |S_1|$, $k_2 := |S_2|$ and set
$
    p_1 := \frac{|I|}{k_1}\in[0,1], p_2 := \frac{|I|}{k_2}\in[0,1].
$
Under $H_1: S_1\neq S_2$, the probability of \emph{not} rejecting satisfies
$
    \mathbb P_{H_1}(\mathcal R^c)
\leq
p_1^{n_1} p_2^{n_2}.
$
\end{theorem}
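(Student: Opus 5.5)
The plan is to bound the non-rejection event by a simpler event that factorizes across the two independent samples. The event $\mathcal R^c$ means $\hat S_1=\hat S_2$. Since $S_1\neq S_2$, at least one of $S_1\setminus S_2$ or $S_2\setminus S_1$ is nonempty. The key inclusion is
\[
\mathcal R^c \subseteq \{X_1,\dots,X_{n_1}\in I\}\cap\{Y_1,\dots,Y_{n_2}\in I\}.
\]
To justify it: on $\mathcal R^c$, every observed $X_i$ lies in $\hat S_1=\hat S_2\subseteq S_2$ and also in $S_1$, so $X_i\in I$. Symmetrically, every $Y_j\in \hat S_2=\hat S_1\subseteq S_1$, and $Y_j\in S_2$, so $Y_j\in I$.

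Next we use independence. The $X$'s are i.i.d.\ $\mathrm{Unif}(S_1)$, the $Y$'s are i.i.d.\ $\mathrm{Unif}(S_2)$, and the two samples are independent of each other. Hence the probability of the right-hand event equals
\[
\mathbb P(X_1\in I)^{n_1}\,\mathbb P(Y_1\in I)^{n_2}=\Bigl(\tfrac{|I|}{k_1}\Bigr)^{n_1}\Bigl(\tfrac{|I|}{k_2}\Bigr)^{n_2}=p_1^{n_1}p_2^{n_2},
\]
which is the claimed bound.

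There is no real obstacle here. The only point needing care is the inclusion step, which in fact holds for any supports and does not even use $S_1\neq S_2$. The hypothesis $H_1$ matters only in making the bound informative: under $H_1$, at least one of $p_1,p_2$ is strictly less than $1$, and if $I=\emptyset$ the bound is $0$. We also note the degenerate convention $0^0=1$ when $n_i=0$, which is consistent with the inclusion.
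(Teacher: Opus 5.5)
Your proposal is correct and follows the same route as the paper's proof: it includes the non-rejection event in the event that all $X_i$ and $Y_j$ lie in $I=S_1\cap S_2$, then factorizes using independence to obtain $p_1^{n_1}p_2^{n_2}$. Your justification of that inclusion, via $X_i\in\hat S_1=\hat S_2\subseteq S_2$ and its symmetric counterpart, is more explicit than the paper's one-line assertion of the same fact.
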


\begin{proof}
See \Cref{proof_of_th:simple_support_mismatch_T2_error}.
\end{proof}

\paragraph{Experimental guidelines}
In typical zero-temperature LLM evaluations, the effective support of the reference model is small
(often $k\in\{2,3\}$), while the candidate model collapses to a single dominant token, i.e.,
$|S_1|=k$ and $|S_2|=1\subset S_1$ under $H_1$.

In this practical alternative, \Cref{th:simple_support_mismatch_T2_error} gives
$
\mathbb P_{H_1}(\mathcal R^c) \le k^{-n_1},
$
so the power depends only on the number of samples drawn from the reference model.

Under $H_0$ with $|S_1|=|S_2|=k$, false rejections occur only if at least one empirical support
fails to observe all $k$ tokens. By \Cref{th:simple_support_mismatch_T1_error},
$
\mathbb P_{H_0}(\mathcal R)
\le
k e^{-n_1/k} + k e^{-n_2/k}.
$
The parameter $k$ therefore plays a central role, governing the tradeoff between Type-I and Type-II errors.

\subsection{\proto is Optimal for $k=2$}

The support-mismatch test based on $\mathcal R$ is intentionally conservative: it rejects only when a token observed under one model is absent from the other empirical support. This choice yields simple nonasymptotic guarantees, but can be suboptimal when the effective support size $k$ is large.

This limitation is immaterial in the zero-temperature regime we consider. Empirically, the effective support is typically very small (often $k\in\{2,3\}$), and the dominant failure mode corresponds to a collapse of the candidate support to a singleton, i.e., $|S_1|=k$ while $|S_2|=1\subset S_1$.

\begin{theorem}[Lower bound]
\label{th:lower_bound}
$
H_0:\ S_1=S_2, |S_1|=2
$ vs $
H_1:\ |S_1|=2, |S_2|=1, S_2\subset S_1$  with $n_1 = n_2 = n$.
For any rejection region $\mathcal R$, 
$
    \mathbb P_{H_0}(\mathcal R) + \mathbb P_{H_1}(\mathcal R^c)
\geq
2^{-(n+1)}.
$
\end{theorem}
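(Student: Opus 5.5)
The plan is the classical two-point (Le Cam) reduction: fix one distribution under $H_0$ and one under $H_1$, then lower bound the sum of errors of any test by the overlap of the two induced sample distributions. Concretely, take $S_1=S_2=\{1,2\}$ under $H_0$, and $S_1=\{1,2\}$, $S_2=\{1\}$ under $H_1$. The sample $(X_1,\dots,X_{n},Y_1,\dots,Y_{n})$ then has law
\[
\mathbb P_0=\mathrm{Unif}(\{1,2\})^{\otimes n}\otimes \mathrm{Unif}(\{1,2\})^{\otimes n}
\]
under $H_0$, and
\[
\mathbb P_1=\mathrm{Unif}(\{1,2\})^{\otimes n}\otimes \delta_1^{\otimes n}
\]
under $H_1$. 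Since a composite hypothesis can only make the worst case harder, it suffices to show that for every measurable rejection region $\mathcal R$,
\[
\mathbb P_0(\mathcal R)+\mathbb P_1(\mathcal R^c)\ge 2^{-(n+1)}.
\]

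The key step is the standard identity
\[
\inf_{\mathcal R}\bigl[\mathbb P_0(\mathcal R)+\mathbb P_1(\mathcal R^c)\bigr]=\sum_{\omega}\min\bigl(\mathbb P_0(\omega),\mathbb P_1(\omega)\bigr)=1-\mathrm{TV}(\mathbb P_0,\mathbb P_1).
\]
It follows by writing both errors as sums over the finite sample space and noting that each outcome $\omega$ contributes either $\mathbb P_0(\omega)$ (if $\omega\in\mathcal R$) or $\mathbb P_1(\omega)$ (if not), which is at least $\min(\mathbb P_0(\omega),\mathbb P_1(\omega))$. Thus the whole proof reduces to computing this overlap.

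The $X$-marginals agree, so they contribute a factor of one. The overlap therefore equals the overlap of $\mathrm{Unif}(\{1,2\})^{\otimes n}$ with $\delta_1^{\otimes n}$. These two laws share only the outcome $Y=(1,\dots,1)$, which has mass $2^{-n}$ under the first and $1$ under the second. The overlap is thus exactly $2^{-n}$, which is at least $2^{-(n+1)}$; the claimed bound follows with room to spare. (The factor $1/2$ would appear if one used the Bayes-risk formulation with equal priors, $\tfrac12(\mathbb P_0(\mathcal R)+\mathbb P_1(\mathcal R^c))\ge \tfrac12\cdot 2^{-n}$, which is presumably how the stated constant arises.)

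The main obstacle is only bookkeeping: making sure that the tensorization over the independent $X$ and $Y$ blocks is handled correctly, and that randomized tests are covered. For randomized tests I would replace $\mathcal R$ by a test function $\phi\in[0,1]$. The pointwise bound $\phi\,\mathbb P_0+(1-\phi)\,\mathbb P_1\ge\min(\mathbb P_0,\mathbb P_1)$ gives the same conclusion.
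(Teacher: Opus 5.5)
Your proof is correct and follows the same route as the paper. Both use a Le Cam two-point reduction with $S_1=S_2=\{a,b\}$ versus $S_1=\{a,b\}$, $S_2=\{a\}$; the identical $X$-marginals drop out, and the overlap of the $Y$-laws is the mass $2^{-n}$ of the constant sequence. The only difference is that the paper invokes Le Cam's inequality in the halved form $\frac12(1-\mathrm{TV})$, which is where the stated $2^{-(n+1)}$ comes from, whereas your exact identity $\inf_{\mathcal R}[\mathbb P_0(\mathcal R)+\mathbb P_1(\mathcal R^c)]=1-\mathrm{TV}(\mathbb P_0,\mathbb P_1)$ gives the sharper bound $2^{-n}$.
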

\begin{proof}
See \Cref{proof_of_th:lower_bound}.
\end{proof}

This lower bound formalizes the unavoidable trade-off induced by sampling noise: no test can simultaneously achieve vanishing Type-I and Type-II errors, so reducing one necessarily increases the other.

Although its assumptions may appear restrictive, they capture the dominant regime observed in our experiments, where two top logits collapse to a single mode after the model $\theta_1$ has been tampered with. Moreover, when combined with \Cref{th:simple_support_mismatch_T1_error} and \Cref{th:simple_support_mismatch_T2_error}, this lower bound shows that the test introduced in this subsection is optimal up to a constant factor in this practical setting.

\paragraph{Takeaway.} We present \proto, a change detector that leverages BIs to implement a request-optimal change detection test. \proto is very simple: the initialization stage identifies one (or a handful of) BIs and samples their initial distribution. The detection step samples BIs again and compares the result with the initial distribution. Thanks to the low temperature, this test simply consists in comparing the support of uniform distributions. \proto's pseudo-code is provided in \Cref{alg:b3it-init,alg:b3it-detect}.

\section{Experimental Evaluation of \proto}
\label{s:expes}

We validate \proto both \emph{in vitro} on a controlled benchmark, and \emph{in vivo} on commercial LLM APIs. Our experiments address two questions: (1) can BIs be found efficiently in practice, and (2) how does \proto compare to existing methods in accuracy and cost?

\paragraph{On the existence of BIs.}
In theory, logits lie in $\mathbb{R}$, so exact ties have probability zero. In practice, however, we observe many such ties. We attribute this to two factors: limited floating-point precision (causing rounding), and inference-time non-determinism~\cite{LT,he2025nondeterminism}.

\subsection{In Vitro Evaluation: TinyChange Benchmark}
\subsubsection{Experimental Setup}

We use the TinyChange benchmark~\cite{LT}, which generates controlled perturbations of open-weight LLMs, enabling systematic evaluation across varying change magnitudes.

\paragraph{Models.}
We evaluate on 9 instruction-tuned LLMs ranging from 0.5B to 9B parameters: Qwen2.5-0.5B, Gemma-3-1B, Phi-4-mini, Qwen2.5-7B, DeepSeek-R1-Distill-Qwen-7B, Mistral-7B, OLMo-2-1124-7B, Llama-3.1-8B, and Gemma-2-9B. The exact model IDs can be found in~\Cref{a:tinychange-models}.

\paragraph{Perturbation types.}
For each model, we generate a number of \textit{variants}, created from the TinyChange difficulty scales with these parameters:
(i)~\textbf{Fine-tuning}: regular or LoRA~\citep{hu2022lora}, for 1 epoch with 1 to 4096 single-sample steps;
(ii)~\textbf{Unstructured pruning}: by magnitude or random selection, removing fractions from $2^{-20}$ to 1;
(iii)~\textbf{Parameter noise}: Gaussian perturbations with standard deviation $\sigma \in [2^{-30}, 1]$.

\paragraph{Baselines.}
We compare against two black-box methods, MMLU-ALG~\cite{LT} and MET~\cite{DBLP:conf/iclr/GaoLG25}, and one grey-box method, LT~\cite{LT}, which requires access to log-probabilities.

\paragraph{Metrics.}
A change detection method should have a good classifier performance, and a low cost. For each method, we therefore report (i) ROC AUC\footnote{Area under the ROC curve, which is the plot of the true positive rate (TPR) against the false positive rate (FPR) of a binary classifier, at each threshold setting.} averaged across all models and variants, and (ii) yearly cost of hourly monitoring at GPT-4.1 pricing.

\paragraph{Test statistic.}
The theory in ~\Cref{s:theory} yields a binary support-mismatch test. To construct ROC curves, we need a threshold to vary. We therefore compute the total variation (TV) distance between reference and detection samples as a continuous statistic. For better performance, we also get samples from several prompts, averaging the per-prompt TV distance to obtain a test statistic.

\subsubsection{Initialization Stage}

The goal of this stage is to identify BIs at minimal cost. We generate candidate prompts with minimal input length using each model's tokenizer (see \Cref{a:vocab-generation}), then sample each candidate $\initsamples=3$ times at $T=0$. We use synthetic noise similar to that of LT~\cite{LT}, by mixing our queries with simulated request traffic. A prompt is selected as a BI when it yields more than one unique output across the samples.

For comprehensive evaluation, we sample 20,000 prompts per model, collecting 1,000 reference samples per identified BI (though only 50 are used per test). We obtained 105 to 1,182 BIs depending on the model.

\subsubsection{Experimental Results}

\paragraph{Hyperparameter selection.}
~\Cref{fig:bi_param_sweep} shows performance versus cost for various choices of prompt count and samples per prompt. Notably, increasing the number of prompts used can increase performance, holding the cost fixed: for instance, sampling 5 prompts 10 times outperforms sampling a single prompt 50 times. Several configurations achieve strong results; we select 5 prompts with 3 samples each, balancing accuracy and cost.

\begin{figure}[ht]
        \centering
        \includegraphics[width=\linewidth]{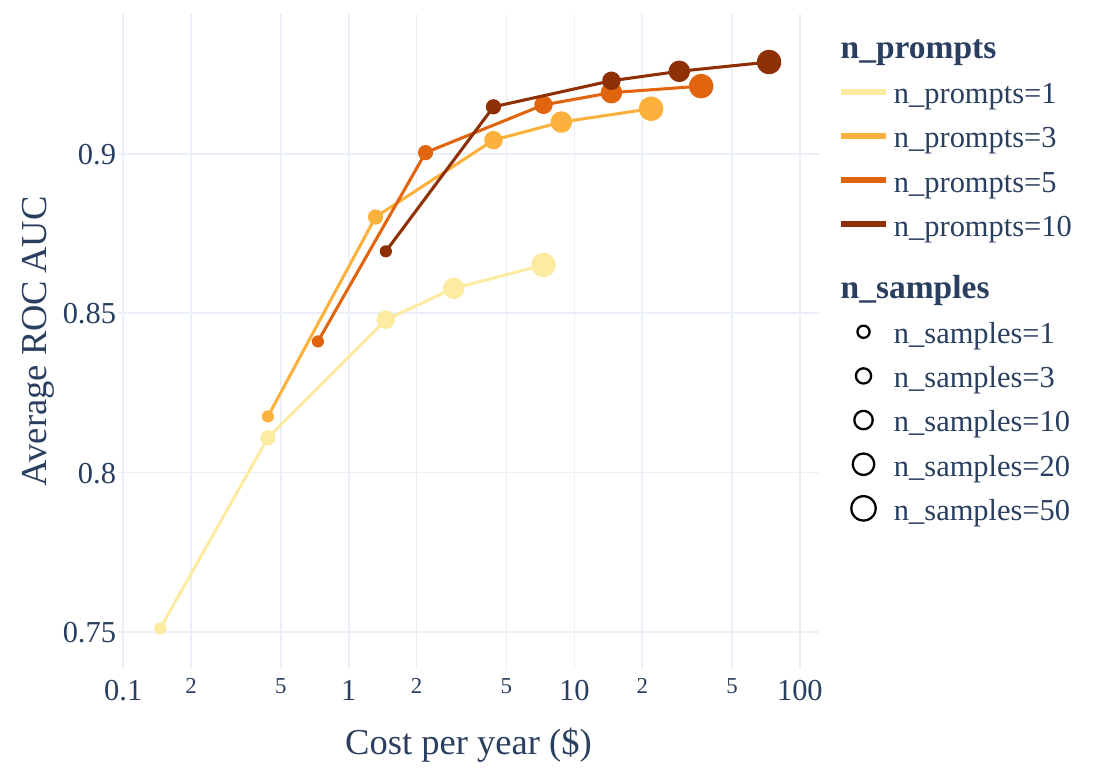}
    \caption{Performance increases with cost. We select 5 prompts and 3 samples/prompt for subsequent experiments.}
    \label{fig:bi_param_sweep}
\end{figure}

\paragraph{Comparison with baselines.}
~\Cref{fig:baseline_pareto} presents performance versus cost. To ensure fair comparison, we sweep hyperparameters for baselines (including creating $T=0$ variants of MMLU-ALG and MET) and fix 50 reference samples per prompt, varying only detection-time parameters.

\proto substantially outperforms black-box methods, and approaches the grey-box LT baseline despite no access to log-probabilities. At operating point (\$2.2/year), \proto achieves a ROC AUC of 0.9, whereas the next-best black-box method (MET at $T=0$) reaches only 0.61. No black-box baseline reaches ROC AUC 0.9; the closest is MET at $T=0$, which requires \$67/year to reach 0.88.

\begin{figure}[ht]
        \centering
        \includegraphics[width=\linewidth]{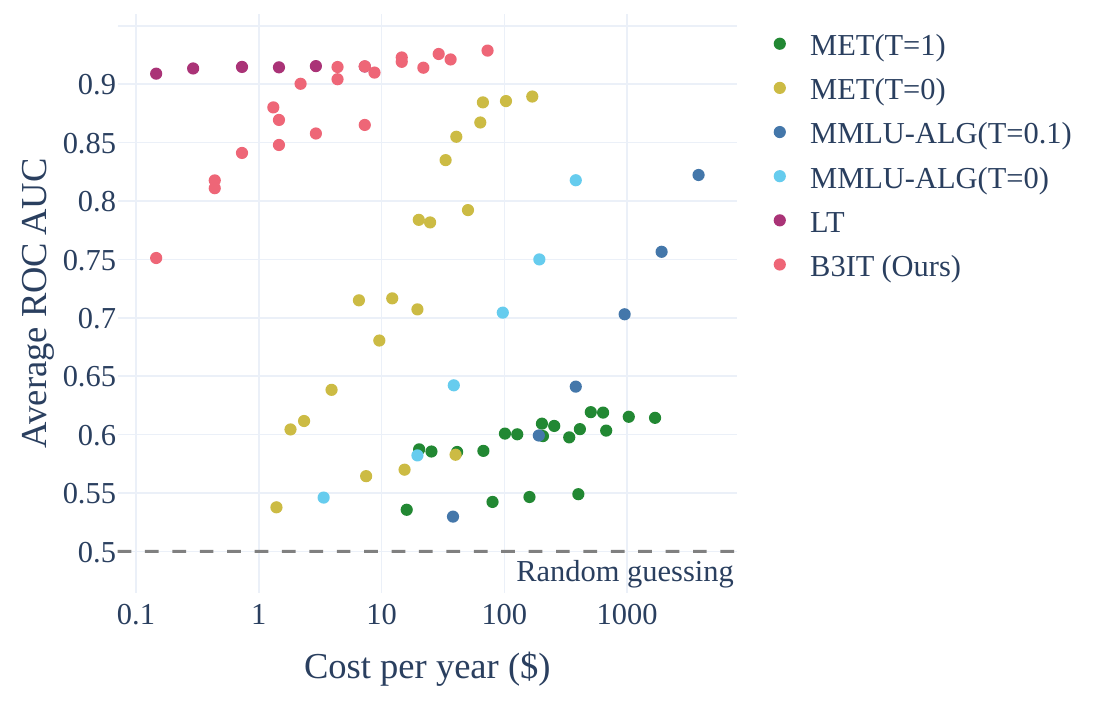}
    \caption{\proto outperforms all black-box methods by a wide margin, and approaches the performance of the grey-box LT method.}
    \label{fig:baseline_pareto}
\end{figure}

\paragraph{Performance by change difficulty.}
~\Cref{fig:tinychange_finetuning} shows detection performance on the fine-tuning difficulty scale.
MMLU-ALG, and to a lesser extent MET, exhibit a low detection accuracy. \proto and LT perform very well in detecting finetuned instances. As the number of fine-tuning steps decreases, the impact on model weights diminishes, increasing the detection difficulty. \proto retains a 0.87 ROC AUC for detecting the extreme single-step fine-tuning.
Refer to \Cref{a:perturbations} for the performances facing other perturbations.

\begin{figure}[ht]
        \centering
        \includegraphics[width=\linewidth]{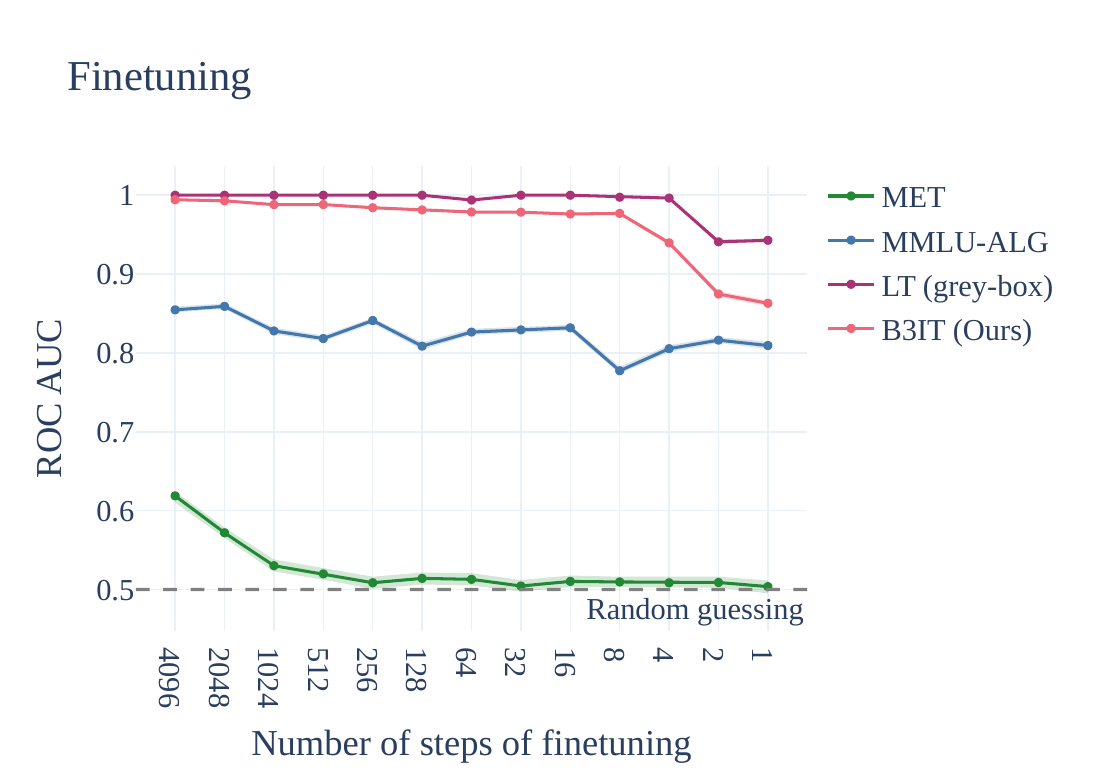}
    \caption{Detection performance on the TinyChange fine-tuning difficulty scale.}
    \label{fig:tinychange_finetuning}
\end{figure}

\subsection{In Vivo Evaluation: Commercial LLM APIs}
\label{sec: invivoExp}
\subsubsection{Endpoint Selection}
Starting from all endpoints listed on OpenRouter, we exclude (i) free endpoints, which are often unreliable, and (ii) endpoints where a single-letter input produces more than 10 input tokens or more than 1 output token. From the remaining endpoints, we keep the union of the two cheapest per model and the two cheapest per provider, then drop any endpoint whose average cost ((input + output)/2) exceeds \$0.5 per million tokens. This yields 131 endpoints covering 78 models across 34 providers.

\subsubsection{Prevalence of Border Inputs}

We searched for BIs on 131 endpoints at $T=0$, sampling each single-token prompt 3 times. For each endpoint, we discover a working query strategy by trying, in order: (i) a plain query, (ii) a query with reasoning disabled (\verb|reasoning.effort=none|), and (iii) queries with a reasoning budget (\verb|reasoning.max_tokens|) at successive powers of 2 ($1, 2, 4, \ldots, 2{,}048$); the first probe to return a non-empty output defines the strategy used for that endpoint\footnote{Precisely, our BI signal is the first whitespace-delimited word of the API response, reading the reasoning trace before the visible content when both are present. For non-reasoning endpoints (strategies (i) and (ii)) this coincides with the first output token in the usual sense; for reasoning endpoints (strategy (iii)) it is the first word of the chain-of-thought. For methodological consistency and simplicity, we only look at the first word of the reasoning trace, but since the full reasoning trace is already returned, divergence could be detected at any position at no additional query cost.}. Counting only BIs found via the cheap strategies (i) and (ii), coverage rises from 73\% (1k prompts) to 76\% (2k prompts) of endpoints with at least one BI; also counting BIs found via strategy (iii) --- the only strategy that works for these endpoints, at the cost of extra reasoning tokens per query --- brings coverage to 78\%. \Cref{a:bi-prevalence-coverage} gives the full breakdown. \Cref{fig:bi_prevalence} reports the empirical CDF of requests required to find BIs at various temperatures, on an earlier 93-endpoint candidate set with up to 5{,}000 single-token prompts per endpoint, sampled 3 times each; raising the temperature beyond $T=0$ further increases coverage, at the cost of reduced BI quality (higher prevalence implies less selective detection).

For two endpoints (\verb|gpt-oss-20b| and \verb|gpt-oss-120b| on amazon-bedrock), strategy discovery fails: the reasoning trace is hidden from the API response, so we cannot see the first token of output. This constitutes a limitation of methods seeking to cheaply detect changes such as \proto.

We also study whether BIs generalize across endpoints. A Cochran-Mantel-Haenszel analysis (\Cref{a:bi-generalization}) shows that BIs are strongly correlated across endpoints serving the same model from different providers (odds ratio $2.40$), but essentially uncorrelated across endpoints serving different models (odds ratio $1.07$). This matches the design of \proto: BIs are sensitive to small parameter changes, and are therefore not expected to generalize across distinct models.

Finally, we observe an interesting behavior where many endpoints seem to behave differently at $T=0$ than their limit as $T \to 0$ would indicate, which we hypothesize is due to hard-coded patterns at $T=0$~\cite{ant}.

\begin{figure}[ht]
        \centering
        \includegraphics[width=\linewidth]{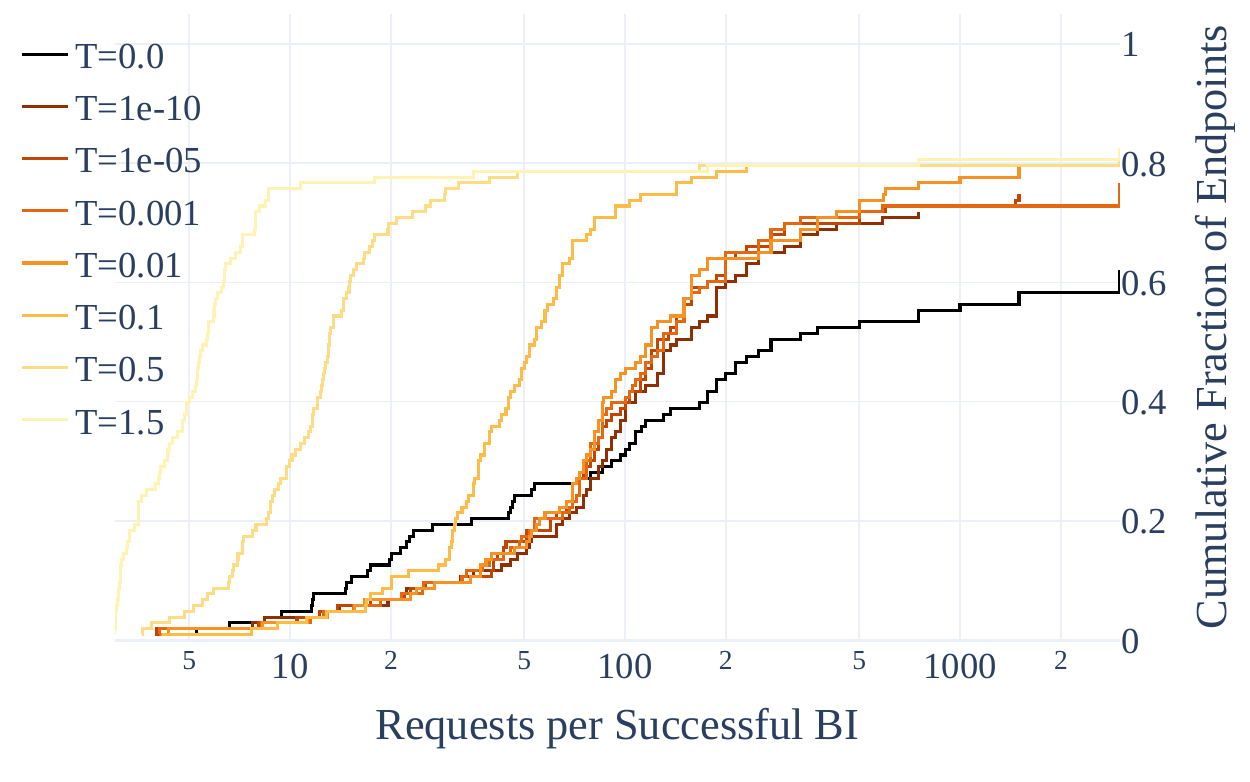}
    \caption{Empirical CDF of requests per BI across the earlier 93-endpoint candidate set for continuous monitoring, at various temperatures.}
    \label{fig:bi_prevalence}
\end{figure}

\subsubsection{Continuous Monitoring Results}

We keep the parameters of the \emph{in vitro} experiments: 5 prompts, 50 reference samples and 3 detection samples. Continuous monitoring was set up before the prevalence study above and ran on a separate, earlier 93-endpoint candidate set. On each candidate, we searched for 5 BIs at $T=0$ (sampling each $\initsamples=3$ times) and selected 53 endpoints with at least one BI for monitoring\footnote{Endpoints with at least 1 but fewer than 5 BIs are still monitored and analyzed using all available BIs. This concerned one endpoint with a single BI.}. We sampled them 50 times to generate the reference distribution, then queried 3 detection samples every 24h over a period of 23 days.

At an average (input, output) endpoint cost of (\$0.38, \$1.2) / million tokens, the average cost of \textit{hourly} monitoring would be \$0.52 per endpoint per year.\footnote{We occasionally hit OpenRouter rate limits (HTTP 429) when aggregated client traffic is high; we retry with jitter and exponential backoff, which does not affect cost since 429 responses are not billed.} At \$0.0045 per endpoint, the cost of the initialization stage is negligible compared to the ongoing detection cost.

We identified persistent changes where the mean TV remained below a threshold of $0.5$ for at least 4 days, before crossing above for an equal period. Indeed, as suggested by our theoretical analysis, the token distribution on BIs quickly collapses to a single dominant mode under small perturbations; a TV$=0.5$ marks the transition from a bimodal to a unimodal distribution. 8 endpoints were identified in this way (depicted on \Cref{fig:api-changes}). One of these is directly corroborated by Together AI's public changelog, which on January 29 redirected Mistral-7B-Instruct-v0.3 to the entirely different Ministral-3-14B-Instruct-2512 \cite{togetherai2026changelog}; the two DeepSeek-V3-0324 changes on Hyperbolic and Atlas Cloud may also relate to a January 23 Together changelog entry redirecting that model to DeepSeek-V3.1, suggesting a similar swap may have occurred across providers.

\begin{figure}[ht]
        \centering
        \includegraphics[width=.95\linewidth]{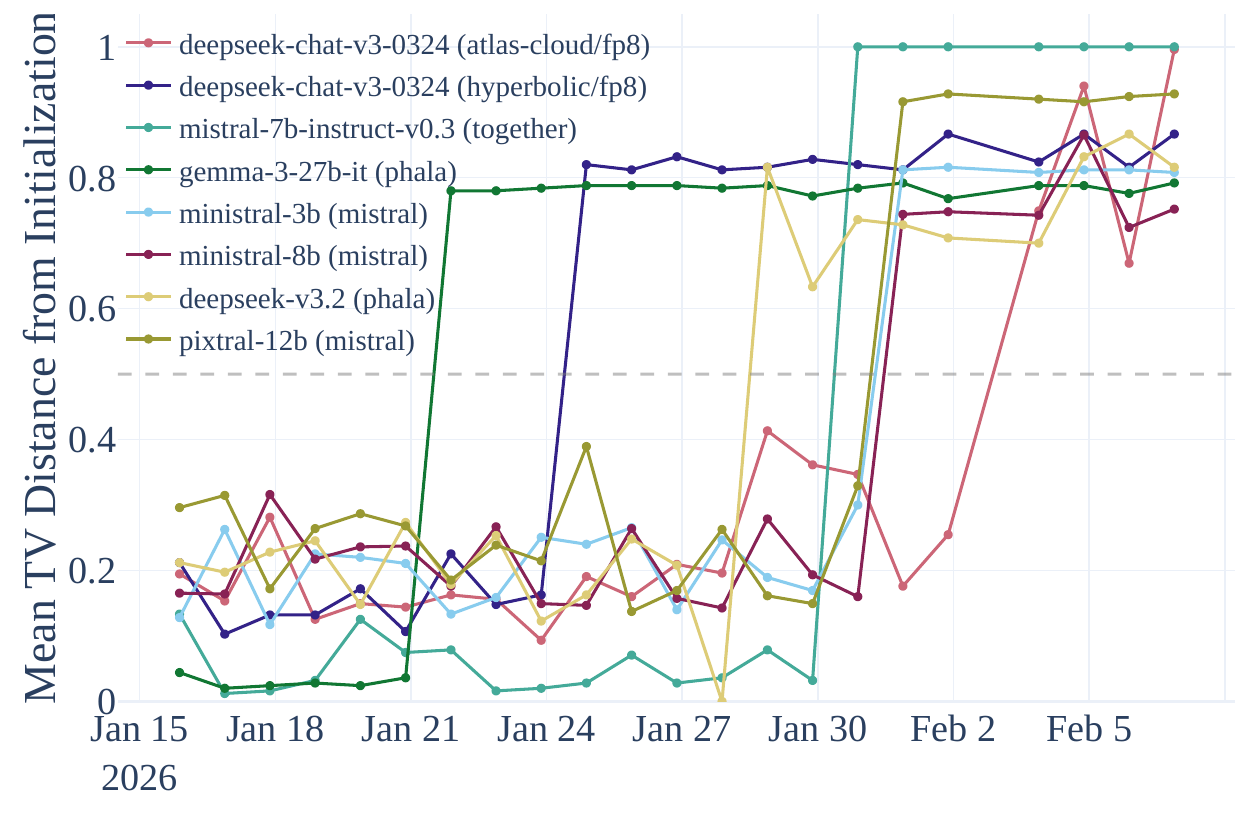}
    \caption{Detected changes on endpoints over 23 days.}
    \label{fig:api-changes}
\end{figure}

\section{Related Work}
\label{s:related}

\paragraph{Change detection of models behind APIs.}
Tracking shifts in (non-LLM) ML APIs was introduced by \cite{chen2022how}, who found significant shifts in a third of the APIs they tested. For LLMs specifically, black-box methods include MET \cite{DBLP:conf/iclr/GaoLG25} (one of our baselines), which compared output distributions using a Maximum Mean Discrepancy test with a Hamming kernel. DailyBench \citep{phillips2025dailybench} attempted continuous benchmark-based monitoring of 5 LLM APIs using HELMLite \citep{liang2023holistic}, but was discontinued after 40 days due to substantial cost. On the white-box side (requiring access to reference model weights), ESF \cite{esf} selects sensitive inputs by analyzing penultimate-layer embeddings; TRAP \cite{trap} optimizes adversarial suffixes via gradient-based search on a reference model; Token-DiFR \cite{karvonen2025difrinferenceverificationdespite} compares output tokens against a trusted reference conditioned on the same random seed, or at temperature 0. LT \cite{LT} operates in a grey-box setting, exploiting log probabilities for highly cost-effective monitoring. \cite{cai2025gettingpayforauditing} re-implemented and compared several detection techniques.
\paragraph{Non-determinism in LLM inference.}
As change detection methods become increasingly sensitive, calibrating against hardware-induced noise becomes critical. \cite{he2025nondeterminism} showed that run-to-run LLM non-determinism stems from varying batch sizes under server load, and proposed batch-invariant kernels as a fix. \cite{zhang2024hardwaresoftwareplatforminference} leveraged within-platform determinism and cross-platform non-determinism to fingerprint the hardware and software configuration of a given model, using a logprob-based method.
\paragraph{LLM fingerprinting.}
LLM fingerprinting is related to change detection, but aims to identify which model is being served, typically within a closed set, and methods often attempt to remain robust to small model variations. \cite{pasquini2025llmmapfingerprintinglargelanguage} introduced handcrafted queries paired with a classifier to match an LLM to a known model, explicitly seeking invariance to minor changes. \cite{sun2025idiosyncrasieslargelanguagemodels} fine-tuned embedding models on LLM-generated texts to distinguish between a set of 5 models. \cite{chen2023chatgptsbehaviorchangingtime} compared explicitly different ChatGPT versions across benchmarks, which can also be seen as coarse-grained fingerprinting. Change detection, by contrast, aims to trigger on \textit{any} change, however small.

\section{Conclusion}
Change detection in LLMs behind APIs is becoming an important primitive for ensuring the reliability of downstream workflows.
In this work, we established theoretical foundations for black-box change detection with a single output token, revealing a phase transition at low temperature: Border Inputs (BIs), where multiple tokens are tied at the maximal logit, yield diverging signal-to-noise ratios, making even subtle parameter changes detectable. Building on this insight, we proposed \proto, a practical method that discovers BIs through black-box sampling and monitors them at minimal cost.
Our experiments validate \proto on both controlled benchmarks and 131 commercial endpoints across 34 providers, demonstrating detection performance comparable to grey-box methods at 1/30th the cost of the best black-box alternatives.
Future work could extend the theoretical framework beyond single-token observations to sequential multi-token settings, which might unlock even more sensitive and cost-efficient detection schemes.

\section*{Acknowledgments}
We would like to thank Gersende Fort, Jean-Christophe Mourrat and Matthieu Jonckheere for their help at an early stage of this project.

Timothée Chauvin, Erwan Le Merrer, François Taïani and Gilles Trédan acknowledge the support of the French Agence Nationale de la Recherche (ANR), under grant ANR-24-CE23-7787 (project PACMAM).

This project was provided with computing (AI) and storage resources by GENCI at IDRIS thanks to the grant 2025-AD011016369 on the supercomputer Jean Zay's H100 partition.

This work was supported by the Cluster SequoIA Chair FANG funded by ANR, reference number ANR-23-IACL-0009.

This work was supported by the Cluster ANITI Chair TRIAL funded by ANR, reference number n°ANR-19-P3IA-0004 and n°ANR-23-IACL-0002 and ANR Regulia ANR-23-CE23-0029.

\section*{Impact Statement}

This paper presents work whose goal is to advance the field of Machine
Learning. There are many potential societal consequences of our work, none
which we feel must be specifically highlighted here.

\newpage

\bibliography{autogenerated_refs}
\bibliographystyle{icml2026}

\newpage
\appendix
\onecolumn

\section{Notations}
$\Rightarrow$ denotes convergence in distribution, and more broadly the weak convergence when applied to measures of probability. For two sequences of random vectors $(U_n)$ and $(V_n)$, we write $U_n=o_{\mathbb P}(V_n)$ (resp.\ $U_n=O_{\mathbb P}(V_n)$) if there exists a sequence of random vectors $(W_n)$ such that $U_n=W_nV_n$ and $(W_n)$ converges in $\mathbb P$-probability to $0$ (resp.\ is uniformly tight). If the base probability measure depends on $n$, we say that $(W_n)$ converges in $\mathbb P_n$-probability to $0$ if $\forall \epsilon > 0$, $\mathbb P_n(\| W_n\| > \epsilon) \to 0$. Deterministic Landau notations have their usual meaning. Unless stated otherwise, $\|\cdot\|$ denotes the Euclidean norm, and for a matrix $M$, $\|M\|_{\mathrm{op}}$ and $\|M\|_{\mathrm{F}}$ denote its operator and Frobenius norms. For a symmetric matrix $A\succeq 0$, we write $\|x\|_{A}^2:=x^TAx$. We use $\operatorname{tr}(\cdot)$ and $\det(\cdot)$ for trace and determinant. The indicator function is denoted by $\mathbf 1\{\cdot\}$, and $I_d$ denotes the $d\times d$ identity matrix. The Kronecker product is denoted by $\otimes$, and $\operatorname{vec}(\cdot)$ denotes vectorization (column-stacking). We write $\mathcal N(\mu,\Sigma)$ for the Gaussian distribution with mean $\mu$ and covariance $\Sigma$. For probability measures $P$ and $Q$ on a common measurable space, $\mathrm{TV}(P,Q)$ denotes their total variation distance. The natural logarithm is denoted by $\log$. 
$i$ may refer to the canonical complex number that solves "$i = \sqrt{-1}$" which should be clearly identifiable in context.
$\mathbb{S}^{q-1}$ is the unit sphere in $\mathbb{R}^q$.
For a set $S$, $\mathrm{Unif}(S)$ denotes the uniform distribution on $S$. The rest of the notations are either standard or introduced within text.

\section{Proofs}

\subsection{Log-likelihood ratio in the LAN regime}
\label{proof_of_eq:llr-expansion}

We note $Z_n := \sqrt n\,(T_n-\mathbf{p}_0)$ and $\mathbf{d} := J h$ .

We note $\mathbf{p}_0 = (p_{0, 1}, \dots, p_{0, d-1})$
and $p_{0,d}:=1-\sum_{j=1}^{d-1}p_{0,j}>0$.
Likewise, we note $\mathbf{p}_n = (p_{n, 1}, \dots, p_{n, d-1})$
and $p_{n,d}:=1-\sum_{j=1}^{d-1}p_{n,j}>0$.

We write the multinomial counts $N_j:=\sum_{i=1}^n \mathbf 1\{Y_i=j\}$ for $j=1,\dots,d$,
$\hat p_j :=N_j/n$ for $j\leq d-1$ and $\hat p_d:=1-\sum_{j=1}^{d-1}\hat p_j=N_d/n$.

The log-likelihood ratio is ($\mathbf{p}_n$ numerator and $\mathbf{p}_0$ denominator)
\begin{equation}
\label{eq:llr-multinomial-exact}
\begin{aligned}
    \log\Lambda_n
&=
\sum_{j=1}^d N_j \log \Big(\frac{p_{n,j}}{p_{0,j}}\Big) \\
&=
n\sum_{j=1}^d \hat p_j \log \Big(\frac{p_{n,j}}{p_{0,j}}\Big).
\end{aligned}
\end{equation}

We define $\Delta_n:=\mathbf{p}_n-\mathbf{p}_0\in\mathbb R^{d-1}$ and $\Delta_{n,d}:=p_{n,d}-p_{0,d}=-\mathbf 1^T\Delta_n$.
By assumption,
\begin{equation}
    \Delta_n=\frac{s}{\sqrt n}\mathbf{d}+r_n,
\end{equation}
where $\sqrt n \|r_n\| \to 0$.

Let us start with the following Taylor expansions :
For $j\leq d-1$,
\begin{equation}
    \log \Big(\frac{p_{n,j}}{p_{0,j}}\Big)
=
\frac{\Delta_{n,j}}{p_{0,j}}
-\frac12 \frac{\Delta_{n,j}^2}{p_{0,j}^2}
+R_{n,j},
\quad
R_{n,j}=O \Big(\frac{|\Delta_{n,j}|^3}{p_{0,j}^3}\Big),
\end{equation}
and similarly for the $d$-th coordinate,
\begin{equation}
    \log \Big(\frac{p_{n,d}}{p_{0,d}}\Big)
=
\frac{\Delta_{n,d}}{p_{0,d}}
-\frac12\frac{\Delta_{n,d}^2}{p_{0,d}^2}
+R_{n,d},
\quad
R_{n,d}=O \Big(\frac{|\Delta_{n,d}|^3}{p_{0,d}^3}\Big).
\end{equation}

Since $\mathbf{p}_0$ is interior, all $p_{0,j}$'s are bounded away from $0$, hence
$\sum_{j=1}^d |R_{n,j}| = O(\|\Delta_n\|^3)$.
Multiplying by $n$ gives
\begin{equation}
    \begin{aligned}
        n\sum_{j=1}^d \hat p_j R_{n,j} 
        &\overset{\text{Holder}}{\leq} n (\max_j \hat p_j) \left( \sum_{j=1}^d |R_{n,j}| \right)  \\
&\overset{\text{a.s.}}{\leq}
n  O(\|\Delta_n\|^3) \\
&\overset{\text{a.s.}}{\leq}
O(n\cdot n^{-3/2}) \\
&\overset{\text{a.s.}}{\leq}
O(n^{-1/2}) \\
&\overset{}{\leq}
o_{\mathbb P_{\mathbf{p}_0, \mathbf{p}_n}}(1),
    \end{aligned}
\end{equation}
because $\|\Delta_n\|=O(1/\sqrt n)$ and because the $\hat p_j$'s are almost surely in $[0, 1]$.

Plugging the expansions into \eqref{eq:llr-multinomial-exact} yields
\begin{equation}
\label{eq:llr-expanded}
\begin{aligned}
    \log\Lambda_n
&=
n\sum_{j=1}^d \hat p_j\frac{\Delta_{n,j}}{p_{0,j}} 
-\frac n2\sum_{j=1}^d \hat p_j\frac{\Delta_{n,j}^2}{p_{0,j}^2}
+o_{\mathbb P_{\mathbf{p}_0, \mathbf{p}_n}}(1).
\end{aligned}
\end{equation}

\paragraph{First-order term.}

We decompose $\hat p_j=p_{0,j}+(\hat p_j-p_{0,j})$. It follows that 
\begin{equation}
    \begin{aligned}
        n\sum_{j=1}^d \hat p_j\frac{\Delta_{n,j}}{p_{0,j}}
=
n\sum_{j=1}^d \Delta_{n,j}
+
n\sum_{j=1}^d (\hat p_j-p_{0,j})\frac{\Delta_{n,j}}{p_{0,j}}.
    \end{aligned}
\end{equation}
The first sum is $n\sum_{j=1}^d \Delta_{n,j}=0$ because both $\mathbf{p_n}$ and $\mathbf{p_0}$ sum to $1$.

For the second sum, we write it in reduced coordinates:
for $j\leq d-1$, $\hat p_j-p_{0,j} = Z_{n,j}/\sqrt n$,
and for $j=d$, $\hat p_d-p_{0,d}=-(\mathbf 1^T Z_n)/\sqrt n$.
Also $\Delta_{n,d}=-(\mathbf 1^T\Delta_n)$.
Hence
\begin{equation}
    n\sum_{j=1}^d (\hat p_j-p_{0,j})\frac{\Delta_{n,j}}{p_{0,j}}
=
\sqrt n\left(
\sum_{j=1}^{d-1} Z_{n,j}\frac{\Delta_{n,j}}{p_{0,j}}
 + \big(\mathbf 1^T Z_n\big)\frac{\mathbf 1^T\Delta_n}{p_{0,d}}
\right).
\end{equation}

Using $\Delta_n=\frac{s}{\sqrt n} \mathbf{d}+r_n$ and $\sqrt n\|r_n\|\to 0$, this becomes
\begin{equation}
\label{eq:first-order}
n\sum_{j=1}^d \hat p_j\frac{\Delta_{n,j}}{p_{0,j}}
=
s\left(
\sum_{j=1}^{d-1} Z_{n,j}\frac{d_j}{p_{0,j}}
 + \big(\mathbf 1^T Z_n\big)\frac{\mathbf 1^T \mathbf{d}}{p_{0,d}}
\right)
+o(1).
\end{equation}

\paragraph{Second-order term.}

First, by the law of large numbers, under $H_0$ we have
$\hat p_j\to p_{0,j}$ in probability. 

Then, under $H_1$, Hoeffding's inequality (see Lemma A.4 in \cite{tsybakov2003introduction}) ensures that 
\begin{equation}
    \hat p_j- p_{n,j} \to 0
\end{equation}
in $\mathbf{p}_n$ probability, and since $p_{n,j} \to p_{0,j}$ deterministically,
\begin{equation}
    \hat p_j- p_{0,j} \to 0
\end{equation}
in probability.

In any case, $\hat{p}_j- p_{0,j} = o_{\mathbb P_{\mathbf{p}_0, \mathbf{p}_n}}(1)$.

So, since $\Delta_{n,j}^2=O(1/n)$,
\begin{equation}
\label{eq:second-order}
    \begin{aligned}
        \frac n2\sum_{j=1}^d \hat p_j\frac{\Delta_{n,j}^2}{p_{0,j}^2}
&=
\frac n2\sum_{j=1}^d p_{0,j}\frac{\Delta_{n,j}^2}{p_{0,j}^2}
+o_{\mathbb P_{H_0, H_1}}(1) \\
&=
\frac n2\sum_{j=1}^d \frac{\Delta_{n,j}^2}{p_{0,j}}
+o_{\mathbb P_{H_0, H_1}}(1) \\
&=
\frac{s^2}{2}\left(
\sum_{j=1}^{d-1}\frac{d_j^2}{p_{0,j}}
+\frac{(\mathbf 1^T \mathbf{d})^2}{p_{0,d}}
\right)
+o_{\mathbb P_{\mathbf{p}_0, \mathbf{p}_n}}(1).
    \end{aligned}
\end{equation}

\paragraph{Conclusion of the proof.}
We have that 
\begin{equation}
    F^{-1}(\mathbf{p}_0)
=
\operatorname{diag}(\mathbf{p}_0)^{-1}+\frac{1}{p_{0,d}}\mathbf 1\mathbf 1^T.
\end{equation}
Therefore
\begin{equation}
    \mathbf{d}^T F^{-1}(\mathbf{p}_0) Z_n
=
\sum_{j=1}^{d-1} Z_{n,j}\frac{d_j}{p_{0,j}}
+\big(\mathbf 1^T Z_n\big)\frac{\mathbf 1^T \mathbf{d}}{p_{0,d}},
\end{equation}
and
\begin{equation}
    \mathbf{d}^T F^{-1}(\mathbf{p}_0) \mathbf{d}
=
\sum_{j=1}^{d-1}\frac{d_j^2}{p_{0,j}}
+\frac{(\mathbf 1^T \mathbf{d})^2}{p_{0,d}}.
\end{equation}

Combining \eqref{eq:llr-expanded}, \eqref{eq:first-order}, \eqref{eq:second-order} gives
\begin{equation}
\label{eq:llr-expansion}
    \log\Lambda_n
=
s \mathbf{d}^T F^{-1}(\mathbf{p}_0) Z_n
-\frac{s^2}{2} \mathbf{d}^T F^{-1}(\mathbf{p}_0) \mathbf{d}
+o_{\mathbb P_{\mathbf{p}_0, \mathbf{p}_n}}(1),
\end{equation}
which concludes the proof.

\subsection{Proof of \Cref{th:asymptotics_optimal_tests}}
\label{proof_of_th:asymptotics_optimal_tests}

By the Neyman-Pearson lemma (see theorem Theorem 3.2.1 in \cite{lehmann2005testing}), for a fixed $\alpha \in (0, 1)$, the most powerful test of level $\alpha$ (which exists) to test $H_0: \mathbf{p}_0$ vs $H_1: \mathbf{p}_n$ has exactly Type-I error $\alpha$ and is of the form 
\begin{equation}
    \begin{aligned}
        \phi = \begin{cases}
            1 \text{ if } \log\Lambda_n > a_n \\
            c_n \text{ if } \log\Lambda_n = a_n \\
            0 \text{ if } \log\Lambda_n < a_n
        \end{cases}
    \end{aligned}
\end{equation}
where $c_n \in [0, 1]$, $a_n \in \mathbb R$ satisfy $\mathbb E_{\mathbf{p}_0}(\phi) = \alpha$.

Under $\mathbf{p}_0$, we have $Z_n \Rightarrow \mathcal N(0,F(\mathbf{p}_0))$ by the central limit theorem, hence \Cref{eq:llr-expansion}, Slutsky's lemma and the continuous mapping theorem (see \cite{van2000asymptotic}) give
\begin{equation}
\label{eq:llr-h0}
\begin{aligned}
\log \Lambda_n
\Rightarrow
\mathcal N \bigg(-\frac{s^2}{2} h^T J^T F(p_0)^{-1} J h,  s^2 h^T J^T F(p_0)^{-1} J h\bigg).
\end{aligned}
\end{equation}

We may expand $\mathbb E_{\mathbf{p}_0}(\phi) $ as
\begin{equation}
    \begin{aligned}
        \mathbb E_{\mathbf{p}_0} (\phi)
        &=
        c_n \mathbb P_{\mathbf{p}_0} (\log\Lambda_n = a_n) + \mathbb P_{\mathbf{p}_0} (\log\Lambda_n > a_n) .
    \end{aligned}
\end{equation}

Let us first prove that $(a_n)$ converges to a finite limit. 
For the sake of contradiction, let us assume that $(a_n)$ is unbounded. We extract a subsequence $(b_n)$ that diverges (here we suppose that $b_n \to \infty$, the other case being treated in the same fashion). Without loss of generality, $\forall n, b_n > 0$. Then for every, $K > 0$, there exists a $N$ such that for every $n \geq N$, $b_n > K$. Thus for every $n \geq N$,

\begin{equation}
    \begin{aligned}
        \mathbb E_{\mathbf{p}_0} (\phi)
        &\leq
        \mathbb P_{\mathbf{p}_0} (\log\Lambda_n \geq K) ,
    \end{aligned}
\end{equation}
and we know by the portmanteau lemma (2.2 in \cite{van2000asymptotic}) that 
\begin{equation}
    \mathbb P_{\mathbf{p}_0} (\log\Lambda_n \geq K) \to \mathbb P \left(N \bigg(-\frac{s^2}{2} h^T J^T F(\mathbf{p}_0)^{-1} J h,  s^2 h^T J^T F(\mathbf{p}_0)^{-1} J h\bigg) \geq K \right) ,
\end{equation}
with the right hand side that tends to $0$ when $K$ tends to infinity.
This contradicts the fact that all the tests are Type-I error $\alpha$, and thus $(a_n)$ is bounded.

We now show that $(a_n)$ has a unique adherence point. Let $a_{\infty}$ be an adherence point of $(a_n)$. Up to an extraction, we have $a_n \to a_{\infty}$.

We then have, by the portmanteau lemma
\begin{equation}
    \mathbb P_{\mathbf{p}_0} (\log\Lambda_n > a_{n})
    \to 
    \mathbb P \left(\mathcal N \bigg(-\frac{s^2}{2} h^T J^T F(\mathbf{p}_0)^{-1} J h,  s^2 h^T J^T F(\mathbf{p}_0)^{-1} J h\bigg) > a_{\infty} \right) .
\end{equation}

Let $(\alpha, \beta)$ be an open interval containing $a_{\infty}$. Since $a_n \to a_{\infty}$, $a_n \in (\alpha, \beta)$ for $n$ big enough. Thus for $n$ big enough,
\begin{equation}
    \mathbb P_{\mathbf{p}_0} (\log\Lambda_n = a_{n}) \leq \mathbb P_{\mathbf{p}_0} (\log\Lambda_n \in (\alpha, \beta))
    \to 
    \mathbb P \left(\mathcal N \bigg(-\frac{s^2}{2} h^T J^T F(\mathbf{p}_0)^{-1} J h,  s^2 h^T J^T F(\mathbf{p}_0)^{-1} J h\bigg) \in (\alpha, \beta) \right)
\end{equation}
by the portmanteau lemma. Since this holds for any neighborhood $(\alpha, \beta)$ of $a_{\infty}$, $\mathbb P_{\mathbf{p}_0} (\log\Lambda_n = a_{n}) \to 0$.
Since we know that all the tests have same level, there can only be one such $a_{\infty}$

So $(a_n)$ is bounded and has a unique adherence point, it thus converges. Let us note $a_{\infty}$ this limit.

The last to equations also allow to calibrate $a_{\infty}$. Indeed since we know that the tests are of level $\alpha$,  $a_{\infty} = -\frac{s^2}{2} h^T J^T F(\mathbf{p}_0)^{-1} J h + \sqrt{s^2 h^T J^T F(\mathbf{p}_0)^{-1} J h} Q_{\alpha}$ where $Q_{\alpha}$ is the quantile of order $1- \alpha$ of $\mathcal{N}(0, 1)$.

Let us look at the asymptotic Type-II error of this sequence of optimal tests.
We may expand $\mathbb E_{\mathbf{p}_1}(\phi) $ as
\begin{equation}
    \begin{aligned}
        1 - \mathbb E_{\mathbf{p}_1} (\phi)
        &=
        1 - c_n \mathbb P_{\mathbf{p}_1} (\log\Lambda_n = a_n) - \mathbb P_{\mathbf{p}_1} (\log\Lambda_n > a_n) \\
        &= \mathbb P_{\mathbf{p}_1} (\log\Lambda_n \leq a_n) - c_n \mathbb P_{\mathbf{p}_1} (\log\Lambda_n = a_n).
    \end{aligned}
\end{equation}

Recall the exact LAN expansion from \Cref{proof_of_eq:llr-expansion},
\begin{equation}
\label{eq:LAN-llr-multinomial-recall}
\log \Lambda_n
=
s \mathbf{d}^T F(\mathbf{p}_0)^{-1} Z_n
-\frac{s^2}{2} \mathbf{d}^T F(\mathbf{p}_0)^{-1} \mathbf{d}
+o_{\mathbb P_{\mathbf{p}_1}}(1).
\end{equation}

Under $H_1$ we further decompose
\begin{equation}
    Z_n=\sqrt n\,(T_n-\mathbf{p}_n)+\sqrt n (\mathbf{p}_n-\mathbf{p}_0).
\end{equation}

By assumption, $\sqrt n (\mathbf{p}_n-\mathbf{p}_0)=s \mathbf{d}+o(1)$ deterministically. It remains to study
$\underbrace{\sqrt n (T_n-\mathbf{p}_n))}_{=: Z_n '}$ under $\mathbb P_{H_1}$.

Here we would like to apply a central limit theorem to $Z_n'$ and say that it is asymptotically Gaussian. However, the fact that the base probability measure changes with $n$ is a problem that prevents the direct use of that result. We instead study this term by studying the Fourier transform of its probability measure directly and prove a CLT-like result that allows concluding the proof. 

Fix $t\in\mathbb R^{d-1}$ and consider the characteristic function
\begin{equation}
    \varphi_n(t):=\mathbb E_{\mathbf{p}_n}\left(e^{i t^T Z_n'}\right)
=\left(\mathbb E_{\mathbf{p}_n} \left(e^{ i t^T X_{n,1} \sqrt n}\right)\right)^{n}.
\end{equation}
where $X_{n,1}$ is the one-hot encoding of the first categorical variable minus $\mathbf{p}_n$. Note that in the previous equation, we used the independence to transform a sum of random variables into a product of characteristic functions.

We need to control the Taylor development of this.
Let $f(u):=e^{iu}$. By Taylor’s theorem with integral remainder at order $2$,
\begin{equation}
    f(u)=f(0)+f'(0)u+\frac12 f''(0)u^2 + R_3(u),
\quad
R_3(u)=\int_0^u \frac{(u-s)^2}{2} f^{(3)}(s) ds .
\end{equation}
Here
\begin{equation}
    f'(u)=i e^{iu},\quad f''(u)=-e^{iu},\quad f^{(3)}(u)=-i e^{iu}.
\end{equation}

Hence the remainder is exactly
\begin{equation}
    R_3(u)=\int_0^u \frac{(u-s)^2}{2}\,(-i)e^{is}\,ds
= -\frac{i}{2}\int_0^u (u-s)^2 e^{is}\,ds .
\end{equation}
\begin{equation}
    e^{iu}=1+iu-\frac{u^2}{2} - \frac{i}{2}\int_0^u (u-s)^2 e^{is} ds .
\end{equation}

Thus,
\begin{equation}
    |R_3(u)|
\le \frac12 \int_0^{|u|} (|u|-s)^2 ds
= \frac12 \frac{|u|^3}{3}
= \frac{|u|^3}{6}.
\end{equation}

Applying it with $u=t^T X_{n,1}/\sqrt n$ yields 
\begin{equation}
    \mathbb E_{\mathbf{p}_n} \left(e^{ i t^T X_{n,1}/\sqrt n}\right)
=
1
-\frac{1}{2n}\,t^T \mathbb E_{\mathbf{p}_n}(X_{n,1}X_{n,1}^T)t
+R_n(t),
\end{equation}
where the linear term vanishes because $\mathbb E_{\mathbf{p}_n}(X_{n,1})=0$, and where
\begin{equation}
    |R_n(t)|
\leq
\mathbb E_{\mathbf{p}_n} \left(\left|e^{i t^T X_{n,1}/\sqrt n}-1-\frac{i}{\sqrt n}t^T X_{n,1}
+\frac{1}{2n}(t^T X_{n,1})^2\right|\right)
\;\lesssim\;
\frac{\|t\|^3}{n^{3/2}},
\end{equation}
where we used the Taylor remainder bound for $e^{ix}$ and the Hölder bound $|t^T X_{n,1}|\le \|t\|_1$.

Hence
\begin{equation}
    \log \varphi_n(t)
=
n\log \left(
1-\frac{1}{2n} t^T F(\mathbf{p}_n) t + R_n(t)
\right)
=
-\frac{1}{2} t^T F(\mathbf{p}_n) t + o(1),
\end{equation}

because $nR_n(t)=O(n^{-1/2})\to 0$ for fixed $t$.
Since $\mathbf{p}_n\to \mathbf{p}_0$ and $F(\cdot)$ is continuous on the interior, $F(\mathbf{p}_n)\to F(\mathbf{p}_0)$, so
\begin{equation}
    \log \varphi_n(t)\to -\frac12 t^T F(\mathbf{p}_0) t,
\end{equation}
which gives 
\begin{equation}
    \varphi_n(t)\to \exp \Big(-\tfrac12 t^T F(\mathbf{p}_0) t\Big).
\end{equation}

Hence, if we note $\mu_n' := Z_n' \# \mathbf{p}_n$ the pushforward of $\mathbf{p}_n$ by $Z_n'$, Lévy's continuity theorem for measures (see \cite{ouvrard2009proba} theorem 14.8) ensures than $\mu_n'$ weakly converges to the centered Gaussian probability measure with covariance matrix $F(\mathbf{p}_0)$. 

Since 
\begin{equation}
    \log \Lambda_n
=
s  \mathbf{d}^T F(\mathbf{p}_0)^{-1} Z_n
-\frac{s^2}{2} \mathbf{d}^T F(\mathbf{p}_0)^{-1} \mathbf{d}
+o_{\mathbb P_{\mathbf{p}_n}}(1),
\end{equation}
we can write for any fixed $t \in \mathbb R$
\begin{equation}
\begin{aligned}
    \mathbb E_{\mathbf{p}_n} \left(e^{i t \log \Lambda_n}\right)
    &=
    \mathbb E_{\mathbf{p}_n} \left(e^{i t \left( s  \mathbf{d}^T F(\mathbf{p}_0)^{-1} Z_n
-\frac{s^2}{2} \mathbf{d}^T F(\mathbf{p}_0)^{-1} \mathbf{d}
+o_{\mathbb P_{\mathbf{p}_n}}(1)\right)}\right) \\
&=
    \mathbb E_{\mathbf{p}_n} \left(e^{i t \left( s  \mathbf{d}^T F(\mathbf{p}_0)^{-1} Z_n
-\frac{s^2}{2} \mathbf{d}^T F(\mathbf{p}_0)^{-1} \mathbf{d}
+o_{\mathbb P_{\mathbf{p}_n}}(1)\right)}\right) \\
&=
\mathbb E_{\mathbf{p}_n} \left(e^{i t \left( s  \mathbf{d}^T F(\mathbf{p}_0)^{-1} (Z_n' + s\mathbf{d} + o(1))
-\frac{s^2}{2} \mathbf{d}^T F(\mathbf{p}_0)^{-1} \mathbf{d}
+o_{\mathbb P_{\mathbf{p}_n}}(1)\right)}\right) \\
&=
\mathbb E_{\mathbf{p}_n} \left(e^{i t \left( s  \mathbf{d}^T F(\mathbf{p}_0)^{-1} (Z_n' + s\mathbf{d})
-\frac{s^2}{2} \mathbf{d}^T F(\mathbf{p}_0)^{-1} \mathbf{d}
+o_{\mathbb P_{\mathbf{p}_n}}(1)\right)}\right)
\end{aligned}
\end{equation}
where the deterministic $o(1)$ was included in the $o$ in probability.

Let us fix $\delta > 0$ and denote by $R$ the random variable referred to as $o_{\mathbb P_{\mathbf{p}_1}}(1)$, 
\begin{equation}
    \begin{aligned}
        \mathbb E_{\mathbf{p}_n} \left(e^{i t \log \Lambda_n}\right)
        &=
         \mathbb E_{\mathbf{p}_n} \left(e^{i t \left( s  \mathbf{d}^T F(\mathbf{p}_0)^{-1} (Z_n' + sd )
-\frac{s^2}{2} \mathbf{d}^T F(\mathbf{p}_0)^{-1} \mathbf{d} + R
\right)} \left(\mathbf{1} \{ |R|>\delta \} + \mathbf{1} \{ |R|\leq\delta \}\right) \right) \\
&=
\mathbb E_{\mathbf{p}_n} \left(e^{i t \left( s  \mathbf{d}^T F(\mathbf{p}_0)^{-1} (Z_n' + sd )
-\frac{s^2}{2} \mathbf{d}^T F(\mathbf{p}_0)^{-1} \mathbf{d} + R
\right)} \mathbf{1} \{ |R|>\delta \}  \right) \\
&\qquad+ 
\mathbb E_{\mathbf{p}_n} \left(e^{i t \left( s  \mathbf{d}^T F(\mathbf{p}_0)^{-1} (Z_n' + sd )
-\frac{s^2}{2} \mathbf{d}^T F(\mathbf{p}_0)^{-1} \mathbf{d} + R
\right)} \mathbf{1} \{ |R|\leq\delta \}\right) 
    \end{aligned}
\end{equation}
Furthermore,
\begin{equation}
\begin{aligned}
    |\mathbb E_{\mathbf{p}_n} &\left(e^{i t \left( s  \mathbf{d}^T F(\mathbf{p}_0)^{-1} (Z_n' + sd )
-\frac{s^2}{2} \mathbf{d}^T F(\mathbf{p}_0)^{-1} \mathbf{d} + R
\right)} \mathbf{1} \{ |R|>\delta \} \right)| \\
&\leq \mathbb E_{\mathbf{p}_n} \left( |e^{i t \left( s  \mathbf{d}^T F(\mathbf{p}_0)^{-1} (Z_n' + sd )
-\frac{s^2}{2} \mathbf{d}^T F(\mathbf{p}_0)^{-1} \mathbf{d} + R
\right)} |  \mathbf{1} \{ |R|>\delta \} \right) \\
&\leq \mathbb E_{\mathbf{p}_n} \left( \mathbf{1} \{ |R|>\delta \} \right) \\
&\to 0
\end{aligned}
\end{equation}
since $R = o_{\mathbb P_{\mathbf{p}_1}}(1)$.

Furthermore,
\begin{equation}
    \begin{aligned}
        \mathbb E_{\mathbf{p}_n} \left( \underbrace{e^{i t \left( s  \mathbf{d}^T F(\mathbf{p}_0)^{-1} (Z_n' + sd )
-\frac{s^2}{2} \mathbf{d}^T F(\mathbf{p}_0)^{-1} \mathbf{d} + R
\right)} \mathbf{1} \{ |R|\leq\delta \}}_{\in B(e^{i t \left( s  \mathbf{d}^T F(\mathbf{p}_0)^{-1} (Z_n' + sd )
-\frac{s^2}{2} \mathbf{d}^T F(\mathbf{p}_0)^{-1} \mathbf{d}
\right)}, \delta')}\right) 
    \end{aligned}
\end{equation}
where 
$B(c, r)$ is the ball in $\mathbb C$ centered at $c$ and of radius $r$ and where $\delta' = \sup_{-\delta \leq r' \leq \delta} \left|e^{i t \left( s  \mathbf{d}^T F(\mathbf{p}_0)^{-1} (Z_n' + sd )
-\frac{s^2}{2} \mathbf{d}^T F(\mathbf{p}_0)^{-1} \mathbf{d} + r'
\right)} - e^{i t \left( s  \mathbf{d}^T F(\mathbf{p}_0)^{-1} (Z_n' + sd )
-\frac{s^2}{2} \mathbf{d}^T F(\mathbf{p}_0)^{-1} \mathbf{d} 
\right)} \right|$.

Thus, for any $n$ big enough, by continuity,
\begin{equation}
    \mathbb E_{\mathbf{p}_n} \left(e^{i t \log \Lambda_n}\right)
\in 
B \left( \mathbb E_{\mathbf{p}_n} \left(e^{i t \left( s  \mathbf{d}^T F(\mathbf{p}_0)^{-1} (Z_n' + sd )
-\frac{s^2}{2} \mathbf{d}^T F(\mathbf{p}_0)^{-1} \mathbf{d}
\right)}\right), r(\delta) \right)
\end{equation}
where $r$ is non-negative and tends to $0$ as $\delta$ tends to $0$.

Thus, since this holds true for any $\delta$, showing the convergence of $\mathbb E_{\mathbf{p}_n} \left(e^{i t \left( s  \mathbf{d}^T F(\mathbf{p}_0)^{-1} (Z_n' + s\mathbf{d} )
-\frac{s^2}{2} \mathbf{d}^T F(\mathbf{p}_0)^{-1} \mathbf{d}\right)}
\right)$ would be sufficient to show the convergence of $\mathbb E_{\mathbf{p}_n} \left(e^{i t \log \Lambda_n}\right)$ to the same limit as it would be a bounded sequence with only one adherence point. 

Furthermore, by the weak convergence of $\mu_n'$, we immediately have (see \cite{ouvrard2009proba} Proposition 14.5) that $\mathbb E_{\mathbf{p}_n} \left(e^{i t \left( s  \mathbf{d}^T F(\mathbf{p}_0)^{-1} (Z_n' + s\mathbf{d} )
-\frac{s^2}{2} \mathbf{d}^T F(\mathbf{p}_0)^{-1} \mathbf{d}\right)}
\right)$
pointwise converges to the characteristic function of a Gaussian of mean $\frac{s^2}{2} h^T J^T F(\mathbf{p}_0)^{-1} J h$ and of covariance matrix $s^2 h^T J^T F(\mathbf{p}_0)^{-1} J h$.

Hence, if we note $\mu_n := \log \Lambda_n \# \mathbf{p}_n$ the pushforward of $\mathbf{p}_n$ by $\log \Lambda_n$, Lévy's continuity theorem for measures again ensures than $\mu_n$ weakly converges to a Gaussian probability measure with mean $\frac{s^2}{2} h^T J^T F(\mathbf{p}_0)^{-1} J h$ and of covariance matrix $s^2 h^T J^T F(\mathbf{p}_0)^{-1} J h$.

This gives that 
\begin{equation}
\begin{aligned}
    \mathbb P_{\mathbf{p}_n} (\log\Lambda_n \leq a_n) 
    &= \mu_n((-\infty, a_{\infty} ]) + (\mu_n((-\infty, a_n ]) - \mu_n((-\infty, a_{\infty} ])) \\
    &\to 
    \mathbb{P}
    \left( \mathcal{N} \left( \frac{s^2}{2} h^T J^T F(\mathbf{p}_0)^{-1} J h ,  s^2 h^T J^T F(\mathbf{p}_0)^{-1} J h\right) \leq a_{\infty} \right) + 0 
\end{aligned} 
\end{equation}
where the limit of the second term comes from the fact that $a_n = a_{\infty} + o(1)$.

Furthermore,
\begin{equation}
\begin{aligned}
    \mathbb P_{\mathbf{p}_n} (\log\Lambda_n = a_n) 
    &\leq \mathbb P_{\mathbf{p}_n} (\log\Lambda_n \in [a_{\infty} - |a_n- a_{\infty}|, a_{\infty} + |a_n- a_{\infty}|]) \\
    &\to 
     0
\end{aligned} 
\end{equation}
as again $a_n = a_{\infty} + o(1)$.

Finally, we can compute
\begin{equation}
    \begin{aligned}
        &\mathbb{P}\left( \mathcal{N} \left( \frac{s^2}{2} h^T J^T F(\mathbf{p}_0)^{-1} J h ,  s^2 h^T J^T F(\mathbf{p}_0)^{-1} J h\right) \leq a_{\infty} \right)\\
        &\qquad = \mathbb{P}\left( \mathcal{N} \left( \frac{s^2}{2} h^T J^T F(\mathbf{p}_0)^{-1} J h ,  s^2 h^T J^T F(\mathbf{p}_0)^{-1} J h\right) \leq -\frac{s^2}{2} h^T J^T F(\mathbf{p}_0)^{-1} J h + \sqrt{s^2 h^T J^T F(\mathbf{p}_0)^{-1} J h} Q_{\alpha} \right) \\
        &\qquad = \mathbb{P}\left( \mathcal{N} \left( 0 ,  1\right) \leq -\sqrt{s^2 h^T J^T F(\mathbf{p}_0)^{-1} J h} +  Q_{\alpha} \right) ,
    \end{aligned}
\end{equation}
which concludes the proof.

\subsection{Proof of \Cref{lem:jacobian-temp}}
\label{proof_of_lem:jacobian-temp}

For $p^{(\tau)}_i=\exp(z_i/\tau)/\sum_j\exp(z_j/\tau)$, we have
\begin{equation}
    \frac{\partial p^{(\tau)}_i}{\partial z_k}
=
\frac{1}{\tau}\,p^{(\tau)}_i(\mathbf 1\{i=k\}-p^{(\tau)}_k).
\end{equation}
So, the full Jacobian w.r.t. the logits is the $d\times d$ matrix
\begin{equation}
\begin{aligned}
    \frac{\partial p^{(\tau)}}{\partial z}
&=
\frac{1}{\tau}
\bigl(\operatorname{diag}(\mathbf{p}^{(\tau)})-\mathbf{p}^{(\tau)}(\mathbf{p}^{(\tau)})^T\bigr) \\
&=
\frac{1}{\tau} \Sigma(\mathbf{p}^{(\tau)}).
\end{aligned}
\end{equation}

Since we work on reduced coordinates, we only keep the first $d-1$ coordinates of $\mathbf{p}^{(\tau)}$.
Thus the Jacobian of the reduced vector $p^{(\tau)}_{1:(d-1)}$ w.r.t. $z$
is the $(d-1)\times d$ matrix obtained by keeping the first $d-1$ rows of $\Sigma(\mathbf{p}^{(\tau)})$:
\begin{equation}
    \frac{\partial p^{(\tau)}_{1:(d-1)}}{\partial z}
=
\frac{1}{\tau}  A(\mathbf{p}^{(\tau)}),
\end{equation}
where $A(\mathbf{p})$ has entries $A_{c,k}(\mathbf{p})=p_c(\mathbf 1\{c=k\}-p_k)$ for $c\le d-1$.

By the chain rule, this gives
\begin{equation}
\label{eq:Jtau}
J^{(\tau)}(\theta)
=
\frac{1}{\tau}
A \left(\mathbf{p}^{(\tau)}(\theta)\right)
J_z(\theta).
\end{equation}

Then we claim that we also have the identity
\begin{equation}
    A(\mathbf{p})^T F(\mathbf{p}_{1:(d-1)})^{-1} A(\mathbf{p})=\Sigma(\mathbf{p}).
\end{equation}
Indeed, let us first observe that 

\begin{equation}
\label{eq:F-inv}
F(\mathbf{p}_{1:(d-1)})^{-1}
=
\operatorname{diag}(\mathbf{p}_{1:(d-1)})^{-1}
+\frac{1}{p_d} \mathbf 1\mathbf 1^T .
\end{equation}
This formula can be obtained with the Sherman-Morrison formula, but we only verify its validity below.

Let $D:=\operatorname{diag}(\mathbf{p}_{1:(d-1)})$ and $u:=\mathbf{p}_{1:(d-1)}$, so that $F:= F(\mathbf{p}_{1:(d-1)}) = D-u u^T$.

We set $M:=D^{-1}+\frac{1}{p_d}\mathbf 1\mathbf 1^T$.
We now verify that $FM=I_{d-1}$.

First,
\begin{equation}
    D D^{-1}=I
\end{equation}
and 
\begin{equation}
    D\left(\frac{1}{p_d}\mathbf 1\mathbf 1^T\right)=\frac{1}{p_d}\,u\,\mathbf 1^T.
\end{equation}
Next,
\begin{equation}
    \begin{aligned}
        u u^T D^{-1} &=u( u^T D^{-1} ) \\
        &=u \mathbf 1^T
    \end{aligned}
\end{equation}
and finally
\begin{equation}
    \begin{aligned}
        u u^T\left(\frac{1}{p_d}\mathbf 1\mathbf 1^T\right)
&=
\frac{1}{p_d} u(u^T\mathbf 1)\mathbf 1^T \\
&=
\frac{1}{p_d} u(1-p_d)\mathbf 1^T.
    \end{aligned}
\end{equation}

Therefore,
\begin{equation}
    \begin{aligned}
        FM
&=(D-u u^T)\left(D^{-1}+\frac{1}{p_d}\mathbf 1\mathbf 1^T\right)\\
&=\underbrace{DD^{-1}}_{I}
+\underbrace{D\frac{1}{p_d}\mathbf 1\mathbf 1^T}_{\frac{1}{p_d}u\mathbf 1^T}
-\underbrace{u u^T D^{-1}}_{u\mathbf 1^T}
-\underbrace{u u^T\frac{1}{p_d}\mathbf 1\mathbf 1^T}_{\frac{1}{p_d}(1-p_d)u\mathbf 1^T}\\
&= I + \left(\frac{1}{p_d}-1-\frac{1-p_d}{p_d}\right)u\mathbf 1^T
= I.
    \end{aligned}
\end{equation}
Thus \eqref{eq:F-inv} holds.

We now compute $A^T F^{-1} A$.
Using $A=[F \mid -p_d p_{1:(d-1)}]$, we have
\begin{equation}
    \begin{aligned}
        A^T F^{-1} A
&=
\begin{bmatrix}
F^T\\ (-p_d \mathbf{p}_{1:(d-1)})^T
\end{bmatrix}
F^{-1}
\Big[ F \mid\ -p_d \mathbf{p}_{1:(d-1)}\Big] \\
&=
\begin{bmatrix}
F & -p_d \mathbf{p}_{1:(d-1)}\\
-p_d \mathbf{p}_{1:(d-1)}^T & p_d^2 \mathbf{p}_{1:(d-1)}^T F^{-1} \mathbf{p}_{1:(d-1)}
\end{bmatrix},
    \end{aligned}
\end{equation}
since $F$ is symmetric.

It remains to compute $\mathbf{p}_{1:(d-1)}^T F^{-1} \mathbf{p}_{1:(d-1)}$ using \eqref{eq:F-inv}:
\begin{equation}
    \begin{aligned}
        \mathbf{p}_{1:(d-1)}^T F^{-1}\mathbf{p}_{1:(d-1)}
&=
\mathbf{p}_{1:(d-1)}^T \operatorname{diag}(\mathbf{p}_{1:(d-1)})^{-1} \mathbf{p}_{1:(d-1)}
+\frac{1}{p_d} \mathbf{p}_{1:(d-1)}^T\mathbf 1\mathbf 1^T \mathbf{p}_{1:(d-1)} \\
&=
(1-p_d)+\frac{(1-p_d)^2}{p_d} \\
&=
\frac{1-p_d}{p_d}.
    \end{aligned}
\end{equation}
Hence the bottom-right entry is
\begin{equation}
    \begin{aligned}
        p_d^2 \mathbf{p}_{1:(d-1)}^T F^{-1} \mathbf{p}_{1:(d-1)}
&=
p_d^2 \frac{1-p_d}{p_d} \\
&=
p_d(1-p_d) \\
&=
p_d-p_d^2.
    \end{aligned}
\end{equation}
Therefore,
\begin{equation}
    A^T F^{-1} A
=
\begin{bmatrix}
\operatorname{diag}(\mathbf{p}_{1:(d-1)})-\mathbf{p}_{1:(d-1)} \mathbf{p}_{1:(d-1)}^T & -p_d \mathbf{p}_{1:(d-1)}\\
-p_d \mathbf{p}_{1:(d-1)}^T & p_d-p_d^2
\end{bmatrix}.
\end{equation}
which is exactly
$
\operatorname{diag}(\mathbf{p})-\mathbf{p}\mathbf{p}^T
=
\Sigma(p).
$

By definition,
\begin{equation}
    \mathrm{SNR}^2(h)
=
h^T \left(
(J^{(\tau)})^T
F(\mathbf{p}^{(\tau)}_{1:(d-1)})^{-1}
J^{(\tau)}
\right) h.
\end{equation}
Substituting \eqref{eq:Jtau} gives
\begin{equation}
    \begin{aligned}
        \mathrm{SNR}^2(h)
=
\frac{1}{\tau^2} 
h^T \left(
J_z^T
\bigl(A^T F^{-1} A\bigr)
J_z
\right) h
=
\frac{1}{\tau^2}\,
h^T \left(
J_z^T
\Sigma(\mathbf{p}^{(\tau)})
J_z
\right) h,
    \end{aligned}
\end{equation}
which is \eqref{eq:uSNR-tau-correct}.

\subsection{Proof of \Cref{thm:zero-temp-regimes}}
\label{proof_of_thm:zero-temp-regimes}

If $k=1$,
let $\mathcal M=\{i^\star\}$ and $\Delta=\min_{j\neq i^\star}(z_{i^\star}-z_j)>0$,
then for $j\neq i^\star$,
\begin{equation}
    \begin{aligned}
        \frac{p^{(\tau)}_j}{p^{(\tau)}_{i^\star}} &= \exp \left(-\frac{z_{i^\star}-z_j}{\tau} \right) \\
        &\leq \exp(-\Delta/\tau),
    \end{aligned}
\end{equation}

so $p^{(\tau)}_{i^\star}=1-O(\exp(-\Delta/\tau))$ and thus
$1-\|\mathbf{p}^{(\tau)}\|_2^2=O(\exp(-\Delta/\tau))$.

Since $\Sigma(\mathbf{p}^{(\tau)})$ is positive semidefinite and
$\|\Sigma(\mathbf{p}^{(\tau)})\|_{\mathrm{op}}\le \operatorname{tr}(\Sigma(\mathbf{p}^{(\tau)}))=1-\|\mathbf{p}^{(\tau)}\|_2^2$,
\begin{equation}
    \begin{aligned}
        h^T J_z^T \Sigma(\mathbf{p}^{(\tau)}) J_z h
&\leq 
\|J_z h\|^2 \|\Sigma(\mathbf{p}^{(\tau)})\|_{\mathrm{op}} \\
&=
O(\exp(-\Delta/\tau)).
    \end{aligned}
\end{equation}
Plugging into \eqref{eq:uSNR-tau-correct} gives $\mathrm{SNR}^2(h)\to 0$.

If $k\ge 2$, by continuity
\begin{equation}
\begin{aligned}
     h^T\left(J_z^T \Sigma(\mathbf{p}^{(\tau)}) J_z\right)h
&\to h^T \left(J_z^T \Sigma_{\mathcal M} J_z\right) h
\end{aligned}
\end{equation}
Plugging into \eqref{eq:uSNR-tau-correct} gives $\mathrm{SNR}^2(h)\to \infty$ if $h^T\left(J_z^T \Sigma_{\mathcal M} J_z\right)h \neq 0$.

\subsection{Effect of the last layer}
\label{proof_of_remark:head-only}

Let
\begin{equation}
A := J_z(\theta)^T \Sigma_{\mathcal M} J_z(\theta)
.
\end{equation}
Since $\Sigma_{\mathcal M}\succeq 0$, we have $A\succeq 0$. In particular,
\begin{equation}
\{h\in\mathbb S^{q-1} : h^T A h = 0\}
=
\mathbb S^{q-1}\cap \ker(A)
\end{equation}
has measure zero as soon as $A\neq 0$. It is therefore sufficient to show that $A$ is not the zero matrix.

To this end, it suffices to prove that $\operatorname{tr}(A)>0$. Recall the Jacobian decomposition
\begin{equation}
J_z(\theta)
=
\Big[
W J_r(\theta_{\mathrm{pre}})
\ \Big|\ 
r(\theta_{\mathrm{pre}},x_{\mathrm{test}})^T\otimes I_d
\ \Big|\ 
I_d
\Big].
\end{equation}
We define the head contribution to the Jacobian by
\begin{equation}
J_{z,\mathrm{head}}
:=
\Big[
r(\theta_{\mathrm{pre}},x_{\mathrm{test}})^T\otimes I_d
\ \Big|\ 
I_d
\Big].
\end{equation}

We drop the (nonnegative) pre-head contribution, which yields
\begin{equation}
\begin{aligned}
    \operatorname{tr}(A)
&=
\operatorname{tr}\bigl(J_z^T\Sigma_{\mathcal M}J_z\bigr) \\
&\geq
\operatorname{tr}\bigl(J_{z,\mathrm{head}}^T\Sigma_{\mathcal M}J_{z,\mathrm{head}}\bigr),
\end{aligned}
\end{equation}

which gives
\begin{equation}
\operatorname{tr}\bigl(J_{z,\mathrm{head}}^T\Sigma_{\mathcal M}J_{z,\mathrm{head}}\bigr)
=
\operatorname{tr}\bigl((r^T\otimes I_d)^T\Sigma_{\mathcal M}(r^T\otimes I_d)\bigr)
+
\operatorname{tr}\bigl(I_d^T\Sigma_{\mathcal M}I_d\bigr).
\end{equation}
For the bias block,
\begin{equation}
\operatorname{tr}\bigl(I_d^T\Sigma_{\mathcal M}I_d\bigr)
=
\operatorname{tr}(\Sigma_{\mathcal M})
=
1-\|\mathbf p_{\mathcal M}\|_2^2,
\end{equation}
where we recall that $\mathbf p_{\mathcal M}$ assigns uniform mass to $\mathcal M$ and zero elsewhere. 

For the weight block, by the properties of the Kronecker product,
\begin{equation}
\begin{aligned}
    (r^T\otimes I_d)^T\Sigma_{\mathcal M}(r^T\otimes I_d)
    &= (r^T\otimes I_d)^T\Sigma_{\mathcal M}(r^T\otimes I_d) \\
    &=((r^T)^T\otimes I_d^T)\Sigma_{\mathcal M}(r^T\otimes I_d) \\
    &= (r\otimes I_d)\Sigma_{\mathcal M}(r^T\otimes I_d) \\
    &= ((r\otimes I_d)((1) \otimes \Sigma_{\mathcal M}))(r^T\otimes I_d) \\
    &= (r (1) \otimes I_d \Sigma_{\mathcal M})(r^T\otimes I_d) \\
    &= (r \otimes \Sigma_{\mathcal M})(r^T\otimes I_d) \\
&=
(rr^T)\otimes \Sigma_{\mathcal M},
\end{aligned}
\end{equation}
where $r:=r(\theta_{\mathrm{pre}},x_{\mathrm{test}})$, which yields
\begin{equation}
\begin{aligned}
    \operatorname{tr}\bigl((rr^T)\otimes \Sigma_{\mathcal M}\bigr)
&=
\operatorname{tr}(rr^T)\operatorname{tr}(\Sigma_{\mathcal M}) \\
&=
\|r\|_2^2\,(1-\|\mathbf p_{\mathcal M}\|_2^2).
\end{aligned}
\end{equation}
Thus,
\begin{equation}
\label{eq:trace_head_SigmaM}
\operatorname{tr}\bigl(J_{z,\mathrm{head}}^T\Sigma_{\mathcal M}J_{z,\mathrm{head}}\bigr)
=
(\|r\|_2^2+1)\,(1-\|\mathbf p_{\mathcal M}\|_2^2).
\end{equation}
Since $\mathbf p_{\mathcal M}$ is uniform on $\mathcal M$ of size $k$, we have $\|\mathbf p_{\mathcal M}\|_2^2=1/k$, and thus
\begin{equation}
1-\|\mathbf p_{\mathcal M}\|_2^2 = 1-\frac{1}{k} > 0
\end{equation}
for $k\ge 2$.
Therefore \eqref{eq:trace_head_SigmaM} is strictly positive for $k\ge 2$, implying $\operatorname{tr}(A)>0$ and hence $A\neq 0$. Consequently,
\begin{equation}
h^T\bigl(J_z^T\Sigma_{\mathcal M}J_z\bigr)h \neq 0
\end{equation}
$\text{for almost every } h\in\mathbb S^{q-1}.$

In particular, if $h$ is drawn from any non-degenerate continuous distribution on $\mathbb S^{q-1}$ (or in $\mathbb R^q$ and normalized), the condition holds with probability one.

\subsection{Proof of \Cref{th:simple_support_mismatch_T1_error}}
\label{proof_of_th:simple_support_mismatch_T1_error}

If both empirical supports equal the true support, i.e., if $\hat S_1=S$ and $\hat S_2=S$,
then necessarily $\hat S_1=\hat S_2$, hence $\mathcal R$ cannot occur.
So,
\begin{equation}
    \mathcal R \subseteq \{\hat S_1\neq S\} \cup \{\hat S_2\neq S\}.
\end{equation}
By the union bound,
\[
\mathbb P_{H_0}(\mathcal R)
\le
\mathbb P(\hat S_1\neq S)+\mathbb P(\hat S_2\neq S).
\]

Fix $i\in\{1,2\}$. The event $\{\hat S_i\neq S\}$ means that at least one element of $S$
was never observed among the $n_i$ draws.
For each token $a\in S$, we define the indicator
\begin{equation}
    I_{a,i} := \mathbf 1\{a\notin \hat S_i\}.
\end{equation}
Then $\{\hat S_i\neq S\}=\{\sum_{a\in S} I_{a,i}\ge 1\}$, so by Markov's inequality,
\begin{equation}
    \begin{aligned}
        \mathbb P(\hat S_i \neq S)
&=
\mathbb P\Big(\sum_{a\in S} I_{a,i}\ge 1\Big) \\
&\leq
\mathbb E\Big(\sum_{a\in S} I_{a,i}\Big) \\
&=
\sum_{a\in S}\mathbb P(a\notin \hat S_i).
    \end{aligned}
\end{equation}
Under $H_0$, each draw hits a fixed token $a\in S$ with probability $1/k$,
so the probability that $a$ is never observed in $n_i$ i.i.d. draws is
\begin{equation}
    \mathbb P(a\notin \hat S_i)
=
\Bigl(1-\frac1k\Bigr)^{n_i}.
\end{equation}

So,
\begin{equation}
    \mathbb P(\hat S_i\neq S)
\le
k\Bigl(1-\frac1k\Bigr)^{n_i}.
\end{equation}

Combining this for $i=1$ and $i=2$ gives
\begin{equation}
    \mathbb P_{H_0}(\mathcal R)
\leq
k\Bigl( 1-\frac1k \Bigr)^{n_1}+k \Bigl(1- \frac1k \Bigr)^{n_2}.
\end{equation}

\subsection{Proof of \Cref{th:simple_support_mismatch_T2_error}}
\label{proof_of_th:simple_support_mismatch_T2_error}

If the test does not reject, then all observed samples must belong to the intersection
$I=S_1\cap S_2$. In particular,
\begin{equation}
    \mathcal R^c
\subseteq
\bigl\{\forall i, X_i \in I\bigr\}
\cap
\bigl\{\forall j, Y_j \in I\bigr\}.
\end{equation}

By independence,
\begin{equation}
    \mathbb P_{H_1}(\mathcal R^c)
\leq
\mathbb P\bigl(\forall i, X_i \in I\bigr)\,
\mathbb P\bigl(\forall j,  Y_j \in I\bigr).
\end{equation}

Since $X_i \sim \mathrm{Unif}(S_1)$ and $Y_j \sim \mathrm{Unif}(S_2)$, $\mathbb P(X_i \in I)=\frac{|I|}{k_1}=p_1$ and $\mathbb P(Y_j \in I)=\frac{|I|}{k_2}=p_2$,
which yields
\begin{equation}
    \mathbb P_{H_1}(\mathcal R^c)
\leq
p_1^{n_1}p_2^{n_2}.
\end{equation}

\subsection{Proof of \Cref{th:lower_bound}}
\label{proof_of_th:lower_bound}

We prove the lower bound using a two-point argument based on Le Cam’s method.

We fix two distinct tokens $a\neq b$ in $\Omega$ and we consider the following pair of hypotheses: $H_0 : S_1^{(0)}=S_2^{(0)}=\{a,b\}$ and $H_1 : S_1^{(1)}=\{a,b\}, \ S_2^{(1)}=\{a\}$.

Let $Z:=(X_1,\dots,X_n,Y_1,\dots,Y_n)$ denote the full observation, where the samples are independent and satisfy $X_i \stackrel{\mathrm{i.i.d.}}{\sim} \mathrm{Unif}(S_1^{(l)})$ and $Y_j \stackrel{\mathrm{i.i.d.}}{\sim} \mathrm{Unif}(S_2^{(l)})$ for $l\in\{0,1\}$.

Let $\mathbb P_0$ and $\mathbb P_1$ denote the laws of $Z$ under the null and alternative hypotheses, respectively.

Under both hypotheses, the reference support is the same, $S_1^{(0)}=S_1^{(1)}=\{a,b\}$.
Consequently, the distribution of the $X$-sample $(X_1,\dots,X_n)$ is identical under $\mathbb P_0$ and $\mathbb P_1$.
Since the samples are independent, the total variation distance between $\mathbb P_0$ and $\mathbb P_1$
reduces to that between the marginal distributions of the $Y$ samples, which gives
\begin{equation}
    \mathrm{TV}(\mathbb P_0,\mathbb P_1)
=
\mathrm{TV}\bigl((\mathrm{Unif}\{a,b\})^{\otimes n}, \delta_a^{\otimes n}\bigr),
\end{equation}
where $\delta_a^{\otimes n}$ denotes the point mass on the constant sequence $(a,\dots,a)$.

Since $\delta_a^{\otimes n}$ is supported on a single point, we have
\begin{equation}
    \mathrm{TV}\bigl((\mathrm{Unif}\{a,b\})^{\otimes n}, \delta_a^{\otimes n}\bigr)
=
1-(\mathrm{Unif}\{a,b\})^{\otimes n}(a,\dots,a).
\end{equation}
Under $(\mathrm{Unif}\{a,b\})^{\otimes n}$, the probability of observing $(a,\dots,a)$ is $2^{-n}$, hence
\begin{equation}
    \mathrm{TV}(\mathbb P_0,\mathbb P_1)=1-2^{-n},
\end{equation}

Le Cam’s inequality \cite{tsybakov2003introduction} states that for any rejection region $\mathcal R$,
\begin{equation}
    \mathbb P_0(\mathcal{R})+\mathbb P_1(\mathcal R^c)
\geq 
\frac12 \bigl(1-\mathrm{TV}(\mathbb P_0,\mathbb P_1)\bigr).
\end{equation}
which gives the result.

\section{Derivation of Optimal Sampling Budget}
\label{app:optimal_sampling}

In this section, we derive the optimal stopping limit $\initsamples$ (maximum number of samples per candidate) to minimize the total query budget required to identify $N$ Border Inputs.

\subsection{Problem Setup}
Let $f_B$ be the fraction of candidate inputs that produce a Bernoulli distribution ($p=0.5$), and let $1-f_B$ be the fraction of inputs that produce a deterministic Dirac distribution. 
We define the cost function $\mathcal{L}(\initsamples)$ as the \textit{expected number of samples required to identify a single Bernoulli candidate}. This is given by the ratio of the expected samples spent per candidate to the probability that a specific candidate is identified as a success:

\begin{equation}
    \mathcal{L}(\initsamples) = \frac{\mathbb{E}[S_\initsamples]}{P(\text{success}|\initsamples)}
\end{equation}

where $\mathbb{E}[S_\initsamples]$ is the expected number of samples taken per candidate given a limit $\initsamples$, and $P(\text{success}|\initsamples)$ is the probability a candidate is confirmed as Bernoulli within $\initsamples$ samples.

\subsection{Cost Function Derivation}

A candidate is identified as a success if it is a Bernoulli distribution and we observe at least two differing outputs within $\initsamples$ samples. The probability of failing to identify a Bernoulli distribution (observing $\initsamples$ identical outputs) is $2 \cdot (1/2)^\initsamples = (1/2)^{\initsamples-1}$. Thus:

\begin{equation}
    P(\text{success}|\initsamples) = f_B \left[ 1 - \left(\frac{1}{2}\right)^{\initsamples-1} \right]
\end{equation}

The expected number of samples $\mathbb{E}[S_\initsamples]$ depends on the candidate type:
\begin{itemize}
    \item \textbf{Dirac Candidates (prob. $1-f_B$):} We always sample $\initsamples$ times before discarding (as outputs are identical). Cost is $\initsamples$.
    \item \textbf{Bernoulli Candidates (prob. $f_B$):} We stop early if variation is observed. The expected number of samples is capped at $\initsamples$. Using the geometric series for the expected stopping time, the cost is $3 - (1/2)^{\initsamples-2}$.
\end{itemize}

Combining these terms, the total expected cost per candidate is:
\begin{equation}
    \mathbb{E}[S_\initsamples] = (1-f_B)\initsamples + f_B \left[ 3 - \left(\frac{1}{2}\right)^{\initsamples-2} \right]
\end{equation}

Substituting into the objective function:
\begin{equation}
    \mathcal{L}(\initsamples) = \frac{(1-f_B)\initsamples + f_B \left[ 3 - (1/2)^{\initsamples-2} \right]}{f_B \left[ 1 - (1/2)^{\initsamples-1} \right]}
\end{equation}

\subsection{Comparison of Strategies}

We compare the Naive Strategy ($\initsamples=2$) against increasing the limit to $\initsamples=3$.

\textbf{Case $\initsamples=2$:}
\begin{equation}
    \mathcal{L}(2) = \frac{4}{f_B}
\end{equation}

\textbf{Case $\initsamples=3$:}
\begin{equation}
    \mathcal{L}(3) = \frac{4}{f_B} - \frac{2}{3}
\end{equation}

Since $\mathcal{L}(3) < \mathcal{L}(2)$ holds for all $f_B \in (0, 1]$, the strategy $\initsamples=3$ is strictly superior to $\initsamples=2$.

\subsection{Optimal Condition for $\initsamples=4$}

We next determine when increasing the budget to $\initsamples=4$ becomes beneficial by solving for $\mathcal{L}(4) < \mathcal{L}(3)$.
For $\initsamples=4$, the cost function becomes:
\begin{equation}
    \mathcal{L}(4) = \frac{4 - 1.25f_B}{0.875 f_B}
\end{equation}

Setting the inequality $\mathcal{L}(4) < \mathcal{L}(3)$:
\begin{equation}
    \begin{aligned}
        \frac{4 - 1.25 f_B}{0.875 f_B} &< \frac{4}{f_B} - \frac{2}{3} \\
        f_B &> \frac{3}{4}
    \end{aligned}
\end{equation}

Thus, $\initsamples=4$ is the optimal strategy only if the prevalence of Bernoulli candidates $f_B > 0.75$. Assuming a sparse distribution of valid candidates ($f_B \ll 0.75$), $\initsamples=3$ is the global optimum.

\begin{algorithm}[t]
\small
\caption{\proto Initialization}
\label{alg:b3it-init}
\begin{algorithmic}[1]
\REQUIRE Candidate inputs $\mathcal{X} = \{x_1, \ldots, x_n\}$, samples per input $\initsamples$, target number of BIs $\bar{n}$, reference samples $n_1$, temperature $T \approx 0$
\ENSURE Border Inputs $\mathcal{B}$, reference distributions $\{\hat{S}_1(x) : x \in \mathcal{B}\}$
\STATE \textbf{// Border Input Discovery}
\STATE $\mathcal{B} \gets \emptyset$
\FOR{$x \in \mathcal{X}$}
    \STATE Sample outputs $\{y_1, \ldots, y_\initsamples\} \gets \text{Query}(x, T, \initsamples)$
    \IF{$|\{y_1, \ldots, y_\initsamples\}| > 1$}
        \STATE $\mathcal{B} \gets \mathcal{B} \cup \{x\}$
    \ENDIF
    \IF{$|\mathcal{B}| \geq \bar{n}$}
        \STATE \textbf{break}
    \ENDIF
\ENDFOR
\STATE \textbf{// Reference Distribution Estimation}
\FOR{$x \in \mathcal{B}$}
    \STATE Sample reference outputs $\{y_1^{(1)}, \ldots, y_{n_1}^{(1)}\} \gets \text{Query}(x, T, n_1)$
    \STATE $\hat{S}_1(x) \gets \{y_1^{(1)}, \ldots, y_{n_1}^{(1)}\}$
\ENDFOR
\RETURN $\mathcal{B}$, $\{\hat{S}_1(x) : x \in \mathcal{B}\}$
\end{algorithmic}
\end{algorithm}

\begin{algorithm}[t]
\small
\caption{\proto Detection}
\label{alg:b3it-detect}
\begin{algorithmic}[1]
\REQUIRE Border Inputs $\mathcal{B}$, reference distributions $\{\hat{S}_1(x) : x \in \mathcal{B}\}$, detection samples $n_2$, temperature $T \approx 0$
\ENSURE Change detection decision
\FOR{$x \in \mathcal{B}$}
    \STATE Sample detection outputs $\{y_1^{(2)}, \ldots, y_{n_2}^{(2)}\} \gets \text{Query}(x, T, n_2)$
    \STATE $\hat{S}_2(x) \gets \{y_1^{(2)}, \ldots, y_{n_2}^{(2)}\}$
    \IF{$\hat{S}_1(x) \triangle \hat{S}_2(x) \neq \emptyset$}
        \RETURN \textsc{Change Detected}
    \ENDIF
\ENDFOR
\RETURN \textsc{No Change Detected}
\end{algorithmic}
\end{algorithm}

\section{Generalization of Border Inputs across endpoints}
\label{a:bi-generalization}

A natural question is whether Border Inputs found on one endpoint are also Border Inputs on another endpoint. We test this using the \emph{in vivo} data from the prevalence study, and apply the Cochran-Mantel-Haenszel (CMH) test on ordered pairs of endpoints $(A,B)$. The CMH odds ratio quantifies how much more likely a BI on $A$ is to also be a BI on $B$, relative to a non-BI on $A$.

We split the analysis into two regimes: pairs of endpoints serving the \emph{same model} from different providers, and pairs serving \emph{different models}. \Cref{tab:cmh} reports the results.

\begin{table}[ht]
    \centering
    \small
    \begin{tabular}{lcccc}
        \toprule
        Pair type & \# Models & \# Endpoints & CMH odds ratio (95\% CI) & $p$ \\
        \midrule
        Same model, different provider & 28 & 86  & 2.40 (2.12, 2.72) & $1.4 \times 10^{-49}$ \\
        Different model             & 55 & 104 & 1.07 (1.04, 1.11) & $1.2 \times 10^{-4}$ \\
        \bottomrule
    \end{tabular}
    \caption{Cochran-Mantel-Haenszel test for Border Input co-occurrence across endpoint pairs}
    \label{tab:cmh}
\end{table}

For same-model pairs, a BI on one endpoint is $2.4\times$ more likely to also be a BI on another endpoint serving the same model. This correlation is expected, since the model is supposed to be the same.

For different-model pairs, the effect is much smaller: $1.07\times$, very close to independence. The fact that BIs don't generalize across models relates to their effectiveness as change detectors: by design, they are very likely to break down on a model change, which also means they are unlikely to generalize across models.

\section{Experimental addendum}
\subsection{Models used in the TinyChange Benchmark}
\label{a:tinychange-models}
\begin{itemize}
    \item Qwen/Qwen2.5-0.5B-Instruct
    \item Qwen/Qwen2.5-7B-Instruct
    \item google/gemma-3-1b-it
    \item google/gemma-2-9b-it
    \item microsoft/Phi-4-mini-instruct
    \item mistralai/Mistral-7B-Instruct-v0.3
    \item deepseek-ai/DeepSeek-R1-Distill-Qwen-7B
    \item meta-llama/Llama-3.1-8B-Instruct
    \item allenai/OLMo-2-1124-7B-Instruct
\end{itemize}

\subsection{Border Input prevalence coverage}
\label{a:bi-prevalence-coverage}


\begin{table}[!htbp]
\centering
\caption{Fraction of endpoints for which at least $k$ Border Inputs were found, over $N=131$ endpoints, under three query budgets: 1k single-token prompts (excluding reasoning endpoints), 2k single-token prompts (excluding reasoning), and 2k single-token prompts including reasoning endpoints. Each prompt is sampled three times.}
\label{tab:bi_prevalence_coverage}
\begin{tabular}{lccc}
\toprule
 & 1k prompts & 2k prompts & 2k prompts + reasoning \\
\midrule
$\geq 1$ BIs & 95/131 (73\%) & 99/131 (76\%) & 102/131 (78\%) \\
$\geq 2$ BIs & 94/131 (72\%) & 97/131 (74\%) & 98/131 (75\%) \\
$\geq 3$ BIs & 90/131 (69\%) & 93/131 (71\%) & 94/131 (72\%) \\
$\geq 4$ BIs & 87/131 (66\%) & 92/131 (70\%) & 93/131 (71\%) \\
$\geq 5$ BIs & 86/131 (66\%) & 90/131 (69\%) & 91/131 (69\%) \\
\bottomrule
\end{tabular}
\end{table}

\subsection{Endpoints used for the Border Input prevalence study}


The following $N=131$ endpoints were used for the Border Input prevalence study. For each, we report the number of BIs found and the total number of API requests sent. Endpoints whose label includes \verb|reasoning| are reasoning endpoints, annotated with the strategy used to obtain their first output token: \verb|disabled| means reasoning was turned off via \verb|reasoning.effort=none|; \verb|budget=N| means the endpoint only produces visible output once at least $N$ reasoning tokens have been generated (requested via \verb|reasoning.max_tokens=N|), so each query consumes extra reasoning tokens. 2 endpoints are annotated \verb|hidden reasoning| (counted as 0 BIs in the totals above): they reasoned but did not expose the reasoning content in the API response, preventing us from sampling their first output token.

{\tiny
\begin{verbatim}
  [BIs: 206, 3000 reqs                     ]  model: z-ai/glm-4.7-flash  provider: phala
  [BIs: 189, 1005 reqs                     ]  model: qwen/qwen-2.5-7b-instruct  provider: atlas-cloud/fp8
  [BIs: 157, 999 reqs                      ]  model: deepseek/deepseek-v3.2  provider: deepinfra/fp4
  [BIs: 136, 2533 reqs                     ]  model: qwen/qwen3-coder-30b-a3b-instruct  provider: siliconflow/fp8
  [BIs: 121, 1056 reqs                     ]  model: meta-llama/llama-4-scout  provider: groq
  [BIs: 119, 447 reqs                      ]  model: qwen/qwen3.5-9b  provider: venice/fp8
  [BIs: 107, 1445 reqs                     ]  model: meituan/longcat-flash-chat  provider: atlas-cloud/fp8
  [BIs: 106, 966 reqs                      ]  model: deepseek/deepseek-chat-v3-0324  provider: deepinfra/fp4
  [BIs: 95, 2928 reqs                      ]  model: qwen/qwen3-235b-a22b-2507  provider: deepinfra/fp8
  [BIs: 86, 978 reqs                       ]  model: qwen/qwen-turbo  provider: alibaba
  [BIs: 72, 942 reqs                       ]  model: deepseek/deepseek-v3.2  provider: atlas-cloud/fp8
  [BIs: 71, 1767 reqs                      ]  model: qwen/qwen3-30b-a3b-instruct-2507  provider: siliconflow/fp8
  [BIs: 61, 945 reqs                       ]  model: meta-llama/llama-4-scout  provider: deepinfra/fp8
  [BIs: 60, 966 reqs                       ]  model: mistralai/voxtral-small-24b-2507  provider: mistral
  [BIs: 58, 900 reqs                       ]  model: deepseek/deepseek-v3.2-exp  provider: novita/fp8
  [BIs: 54, 492 reqs                       ]  model: deepseek/deepseek-v3.2  provider: chutes/fp8
  [BIs: 54, 900 reqs                       ]  model: deepseek/deepseek-chat-v3.1  provider: deepinfra/fp4
  [BIs: 52, 1014 reqs, reasoning: disabled ]  model: z-ai/glm-4.7-flash  provider: deepinfra/bf16
  [BIs: 51, 1530 reqs                      ]  model: z-ai/glm-4.7-flash  provider: z-ai
  [BIs: 51, 3001 reqs                      ]  model: qwen/qwen3-8b  provider: atlas-cloud/fp8
  [BIs: 49, 2999 reqs                      ]  model: qwen/qwen3-8b  provider: alibaba
  [BIs: 45, 807 reqs                       ]  model: deepseek/deepseek-v3.1-terminus  provider: deepinfra/fp4
  [BIs: 45, 939 reqs                       ]  model: cohere/command-r-08-2024  provider: cohere
  [BIs: 43, 3000 reqs                      ]  model: meta-llama/llama-3.1-70b-instruct  provider: deepinfra/turbo
  [BIs: 41, 297 reqs                       ]  model: tencent/hunyuan-a13b-instruct  provider: siliconflow/fp8
  [BIs: 40, 468 reqs                       ]  model: deepseek/deepseek-v3.2-exp  provider: atlas-cloud/fp8
  [BIs: 40, 954 reqs                       ]  model: mistralai/ministral-8b-2512  provider: mistral
  [BIs: 40, 1113 reqs                      ]  model: google/gemma-3-27b-it  provider: parasail/fp8
  [BIs: 34, 936 reqs                       ]  model: mistralai/ministral-14b-2512  provider: mistral
  [BIs: 33, 2997 reqs                      ]  model: openai/gpt-4o-mini  provider: openai
  [BIs: 32, 3000 reqs                      ]  model: openai/gpt-4o-mini-2024-07-18  provider: openai
  [BIs: 31, 762 reqs                       ]  model: qwen/qwen3-next-80b-a3b-instruct  provider: alibaba
  [BIs: 30, 789 reqs                       ]  model: qwen/qwen3-vl-30b-a3b-instruct  provider: alibaba
  [BIs: 29, 2988 reqs                      ]  model: amazon/nova-lite-v1  provider: amazon-bedrock
  [BIs: 29, 3000 reqs                      ]  model: google/gemini-2.0-flash-lite-001  provider: google-ai-studio
  [BIs: 29, 3000 reqs                      ]  model: sao10k/l3-lunaris-8b  provider: novita/bf16
  [BIs: 28, 747 reqs                       ]  model: qwen/qwen3-vl-8b-instruct  provider: novita/fp8
  [BIs: 28, 3000 reqs                      ]  model: google/gemini-2.0-flash-lite-001  provider: google-vertex
  [BIs: 27, 3000 reqs                      ]  model: mistralai/ministral-8b-2512  provider: nextbit/fp8
  [BIs: 26, 3000 reqs                      ]  model: openai/gpt-4o-mini  provider: azure
  [BIs: 25, 327 reqs                       ]  model: deepseek/deepseek-v3.2  provider: akashml/fp8
  [BIs: 25, 597 reqs                       ]  model: meta-llama/llama-3.3-70b-instruct  provider: novita/bf16
  [BIs: 25, 981 reqs                       ]  model: meta-llama/llama-3-8b-instruct  provider: novita/bf16
  [BIs: 24, 522 reqs                       ]  model: qwen/qwq-32b  provider: siliconflow/fp8
  [BIs: 24, 618 reqs, reasoning: disabled  ]  model: qwen/qwen3-30b-a3b  provider: novita/fp8
  [BIs: 23, 708 reqs                       ]  model: meta-llama/llama-3.1-8b-instruct  provider: nebius/fp8
  [BIs: 21, 807 reqs                       ]  model: qwen/qwen3-coder-30b-a3b-instruct  provider: novita/fp8
  [BIs: 20, 3000 reqs                      ]  model: meta-llama/llama-3-8b-instruct  provider: deepinfra/bf16
  [BIs: 20, 3000 reqs                      ]  model: qwen/qwen-2.5-7b-instruct  provider: together/fp8
  [BIs: 19, 616 reqs                       ]  model: z-ai/glm-4.5-air  provider: siliconflow/fp8
  [BIs: 19, 741 reqs                       ]  model: mistralai/ministral-3b-2512  provider: mistral
  [BIs: 19, 3000 reqs                      ]  model: google/gemini-2.5-flash-lite  provider: google-vertex
  [BIs: 18, 861 reqs                       ]  model: qwen/qwen3-vl-30b-a3b-instruct  provider: deepinfra/fp8
  [BIs: 18, 2457 reqs                      ]  model: google/gemma-3-27b-it  provider: deepinfra/fp8
  [BIs: 18, 3000 reqs                      ]  model: mistralai/ministral-3b-2512  provider: nextbit/fp8
  [BIs: 17, 294 reqs                       ]  model: z-ai/glm-4.7-flash  provider: novita/bf16
  [BIs: 17, 762 reqs                       ]  model: mistralai/mistral-small-3.2-24b-instruct  provider: deepinfra/fp8
  [BIs: 17, 5997 reqs                      ]  model: qwen/qwen3.5-flash-02-23  provider: alibaba
  [BIs: 16, 3000 reqs                      ]  model: mistralai/ministral-14b-2512  provider: nextbit/fp8
  [BIs: 15, 2970 reqs                      ]  model: thedrummer/unslopnemo-12b  provider: nextbit/fp8
  [BIs: 13, 3000 reqs                      ]  model: meta-llama/llama-3.1-70b-instruct  provider: deepinfra/base
  [BIs: 13, 3000 reqs                      ]  model: qwen/qwen3.5-9b  provider: together
  [BIs: 12, 459 reqs                       ]  model: qwen/qwen3-30b-a3b-instruct-2507  provider: atlas-cloud/fp8
  [BIs: 12, 3000 reqs, reasoning: budget=4 ]  model: alibaba/tongyi-deepresearch-30b-a3b  provider: atlas-cloud/fp8
  [BIs: 12, 3000 reqs                      ]  model: meta-llama/llama-3.2-11b-vision-instruct  provider: deepinfra/fp8
  [BIs: 12, 3000 reqs                      ]  model: mistralai/mistral-small-3.2-24b-instruct  provider: parasail/bf16
  [BIs: 11, 384 reqs                       ]  model: mistralai/mistral-small-3.2-24b-instruct  provider: venice/fp8
  [BIs: 11, 852 reqs                       ]  model: meta-llama/llama-3.1-8b-instruct  provider: novita/fp8
  [BIs: 11, 3000 reqs                      ]  model: mistralai/mistral-small-3.1-24b-instruct  provider: cloudflare
  [BIs: 10, 3000 reqs                      ]  model: meta-llama/llama-3-8b-instruct  provider: together/int4
  [BIs: 10, 3000 reqs                      ]  model: mistralai/mistral-nemo  provider: novita/fp8
  [BIs: 10, 3000 reqs                      ]  model: stepfun/step-3.5-flash  provider: stepfun/fp8
  [BIs: 9, 375 reqs                        ]  model: qwen/qwen3-30b-a3b-thinking-2507  provider: siliconflow/fp8
  [BIs: 9, 600 reqs                        ]  model: stepfun/step-3.5-flash  provider: deepinfra/fp8
  [BIs: 9, 2016 reqs                       ]  model: google/gemma-3-27b-it  provider: novita/bf16
  [BIs: 9, 3000 reqs                       ]  model: qwen/qwen3-vl-32b-instruct  provider: alibaba
  [BIs: 8, 600 reqs                        ]  model: qwen/qwen-vl-plus  provider: alibaba
  [BIs: 8, 600 reqs                        ]  model: qwen/qwen3-coder-next  provider: chutes/bf16
  [BIs: 8, 3000 reqs                       ]  model: google/gemma-3-12b-it  provider: cloudflare
  [BIs: 8, 3000 reqs                       ]  model: google/gemma-3-4b-it  provider: deepinfra/bf16
  [BIs: 8, 3000 reqs                       ]  model: meta-llama/llama-3.1-8b-instruct  provider: deepinfra/bf16
  [BIs: 8, 5997 reqs                       ]  model: amazon/nova-micro-v1  provider: amazon-bedrock
  [BIs: 7, 600 reqs                        ]  model: google/gemini-2.0-flash-001  provider: google-vertex
  [BIs: 7, 1554 reqs, reasoning: disabled  ]  model: z-ai/glm-4.5-air  provider: novita/bf16
  [BIs: 6, 600 reqs                        ]  model: google/gemini-2.0-flash-001  provider: google-ai-studio
  [BIs: 6, 600 reqs                        ]  model: meta-llama/llama-4-maverick  provider: deepinfra/base
  [BIs: 6, 3000 reqs                       ]  model: google/gemma-3-12b-it  provider: deepinfra/bf16
  [BIs: 6, 6034 reqs                       ]  model: qwen/qwen3-coder-next  provider: ionstream/fp8
  [BIs: 5, 3000 reqs                       ]  model: qwen/qwen3-235b-a22b-2507  provider: wandb/bf16
  [BIs: 5, 3000 reqs                       ]  model: sao10k/l3-lunaris-8b  provider: deepinfra/turbo
  [BIs: 5, 6000 reqs                       ]  model: meta-llama/llama-3.2-1b-instruct  provider: cloudflare
  [BIs: 4, 6000 reqs                       ]  model: liquid/lfm-2-24b-a2b  provider: together
  [BIs: 4, 6000 reqs                       ]  model: x-ai/grok-3-mini  provider: xai
  [BIs: 3, 6018 reqs                       ]  model: x-ai/grok-3-mini-beta  provider: xai
  [BIs: 2, 6000 reqs                       ]  model: meta-llama/llama-3.1-8b-instruct  provider: friendli
  [BIs: 2, 6000 reqs                       ]  model: meta-llama/llama-3.2-3b-instruct  provider: cloudflare
  [BIs: 2, 6000 reqs                       ]  model: mistralai/mistral-nemo  provider: deepinfra/fp8
  [BIs: 2, 6000 reqs                       ]  model: reka/reka-edge  provider: reka/bf16
  [BIs: 1, 3477 reqs, reasoning: budget=256]  model: qwen/qwen3-32b  provider: chutes/bf16
  [BIs: 1, 6000 reqs                       ]  model: microsoft/phi-4  provider: nextbit/int4
  [BIs: 1, 6000 reqs, reasoning: budget=256]  model: qwen/qwen3-32b  provider: groq
  [BIs: 1, 6042 reqs                       ]  model: qwen/qwen3-next-80b-a3b-thinking  provider: alibaba
  [BIs: 0, 5944 reqs                       ]  model: deepseek/deepseek-chat-v3.1  provider: sambanova/high-throughput
  [BIs: 0, 6000 reqs                       ]  model: arcee-ai/trinity-mini  provider: clarifai/bf16
  [BIs: 0, 6000 reqs                       ]  model: deepseek/deepseek-v3.2  provider: deepseek
  [BIs: 0, 6000 reqs                       ]  model: google/gemini-2.5-flash-lite  provider: google-ai-studio
  [BIs: 0, 6000 reqs                       ]  model: google/gemini-2.5-flash-lite-preview-09-2025  provider: google-ai-studio
  [BIs: 0, 6000 reqs                       ]  model: google/gemini-2.5-flash-lite-preview-09-2025  provider: google-vertex
  [BIs: 0, 6000 reqs                       ]  model: google/gemma-2-9b-it  provider: nebius/fast
  [BIs: 0, 6000 reqs                       ]  model: liquid/lfm2-8b-a1b  provider: liquid
  [BIs: 0, 6000 reqs                       ]  model: meta-llama/llama-3.1-8b-instruct  provider: cerebras/fp16
  [BIs: 0, 6000 reqs                       ]  model: meta-llama/llama-3.1-8b-instruct  provider: sambanova/bf16
  [BIs: 0, 6000 reqs                       ]  model: meta-llama/llama-3.3-70b-instruct  provider: nebius/fp8
  [BIs: 0, 6000 reqs                       ]  model: microsoft/phi-4  provider: deepinfra/bf16
  [BIs: 0, 6000 reqs                       ]  model: mistralai/devstral-small  provider: mistral
  [BIs: 0, 6000 reqs                       ]  model: mistralai/mistral-7b-instruct-v0.1  provider: cloudflare
  [BIs: 0, 6000 reqs                       ]  model: nousresearch/hermes-2-pro-llama-3-8b  provider: novita/fp16
  [BIs: 0, 6000 reqs                       ]  model: nousresearch/hermes-3-llama-3.1-70b  provider: deepinfra/fp8
  [BIs: 0, 6000 reqs                       ]  model: nvidia/nemotron-nano-12b-v2-vl  provider: deepinfra/fp8
  [BIs: 0, 6000 reqs                       ]  model: nvidia/nemotron-nano-9b-v2  provider: deepinfra/bf16
  [BIs: 0, 6000 reqs                       ]  model: qwen/qwen-2.5-72b-instruct  provider: novita/bf16
  [BIs: 0, 6000 reqs, reasoning: budget=1  ]  model: qwen/qwen3-14b  provider: deepinfra/fp8
  [BIs: 0, 6000 reqs, reasoning: budget=128]  model: qwen/qwen3-14b  provider: nextbit/int4
  [BIs: 0, 6000 reqs, reasoning: budget=256]  model: qwen/qwen3-30b-a3b  provider: deepinfra/fp8
  [BIs: 0, 6000 reqs, reasoning: budget=512]  model: qwen/qwen3-30b-a3b  provider: friendli
  [BIs: 0, 6000 reqs                       ]  model: qwen/qwen3-30b-a3b-instruct-2507  provider: wandb/bf16
  [BIs: 0, 6000 reqs                       ]  model: qwen/qwen3-30b-a3b-thinking-2507  provider: nebius/fp8
  [BIs: 0, 6000 reqs, reasoning: budget=256]  model: qwen/qwen3-32b  provider: deepinfra/fp8
  [BIs: 0, 6000 reqs                       ]  model: qwen/qwen3-vl-8b-instruct  provider: alibaba
  [BIs: 0 (hidden reasoning)               ]  model: openai/gpt-oss-20b  provider: amazon-bedrock
  [BIs: 0 (hidden reasoning)               ]  model: openai/gpt-oss-120b  provider: amazon-bedrock
\end{verbatim}
}

\subsection{Endpoints used for \proto tracking}
\label{a:tracking-list}


The following 93 endpoints were searched for Border Inputs at $T=0$ with up to 5{,}000 single-token prompts per endpoint, sampled 3 times each. For each, we report the number of BIs found and the total number of API requests sent during the search. Endpoints with at least one BI were subsequently monitored continuously over 23~days.

{\tiny
\begin{verbatim}
  [BIs: 61, 1720 reqs ]  model: mistralai/mistral-nemo  provider: mistral
  [BIs: 60, 540 reqs  ]  model: mistralai/mistral-nemo  provider: azure
  [BIs: 60, 780 reqs  ]  model: deepseek/deepseek-v3.2  provider: parasail/fp8
  [BIs: 60, 880 reqs  ]  model: deepseek/deepseek-chat-v3-0324  provider: fireworks
  [BIs: 60, 960 reqs  ]  model: relace/relace-search  provider: relace/bf16
  [BIs: 60, 1120 reqs ]  model: mistralai/ministral-3b  provider: mistral
  [BIs: 60, 2440 reqs ]  model: qwen/qwen3-vl-30b-a3b-instruct  provider: fireworks
  [BIs: 60, 2620 reqs ]  model: google/gemma-3-12b-it  provider: chutes/bf16
  [BIs: 60, 4980 reqs ]  model: qwen/qwen3-235b-a22b-2507  provider: wandb/bf16
  [BIs: 60, 7260 reqs ]  model: mistralai/mistral-nemo  provider: chutes/bf16
  [BIs: 60, 10320 reqs]  model: google/gemma-3-4b-it  provider: deepinfra/bf16
  [BIs: 60, 10780 reqs]  model: deepseek/deepseek-chat-v3.1  provider: sambanova/fp8
  [BIs: 60, 12580 reqs]  model: qwen/qwen3-235b-a22b-2507  provider: crusoe/bf16
  [BIs: 60, 14860 reqs]  model: qwen/qwen-2.5-coder-32b-instruct  provider: chutes/fp8
  [BIs: 59, 1320 reqs ]  model: openai/gpt-4o  provider: openai
  [BIs: 59, 1340 reqs ]  model: deepseek/deepseek-chat-v3-0324  provider: hyperbolic/fp8
  [BIs: 59, 2340 reqs ]  model: kwaipilot/kat-coder-pro  provider: streamlake/fp16
  [BIs: 59, 5880 reqs ]  model: openai/gpt-4o-mini  provider: openai
  [BIs: 59, 6740 reqs ]  model: mistralai/devstral-small-2505  provider: deepinfra/bf16
  [BIs: 59, 13400 reqs]  model: mistralai/mistral-small-3.2-24b-instruct  provider: parasail/bf16
  [BIs: 58, 580 reqs  ]  model: deepseek/deepseek-chat-v3-0324  provider: atlas-cloud/fp8
  [BIs: 58, 1120 reqs ]  model: cohere/command-r-08-2024  provider: cohere
  [BIs: 58, 1340 reqs ]  model: mistralai/ministral-8b-2512  provider: mistral
  [BIs: 58, 1360 reqs ]  model: mistralai/pixtral-12b  provider: mistral
  [BIs: 58, 4520 reqs ]  model: moonshotai/kimi-k2-0905:exacto  provider: moonshotai
  [BIs: 58, 6920 reqs ]  model: google/gemma-3-27b-it  provider: chutes/bf16
  [BIs: 58, 7480 reqs ]  model: openai/gpt-4o-mini  provider: azure
  [BIs: 58, 11680 reqs]  model: amazon/nova-micro-v1  provider: amazon-bedrock
  [BIs: 57, 1200 reqs ]  model: mistralai/ministral-3b-2512  provider: mistral
  [BIs: 57, 1400 reqs ]  model: deepseek/deepseek-v3.2  provider: phala
  [BIs: 57, 2040 reqs ]  model: google/gemma-3-12b-it  provider: crusoe/bf16
  [BIs: 56, 1500 reqs ]  model: mistralai/ministral-8b  provider: mistral
  [BIs: 56, 10480 reqs]  model: amazon/nova-lite-v1  provider: amazon-bedrock
  [BIs: 56, 15000 reqs]  model: microsoft/phi-4  provider: deepinfra/bf16
  [BIs: 52, 3620 reqs ]  model: inflection/inflection-3-pi  provider: inflection
  [BIs: 50, 4720 reqs ]  model: anthropic/claude-sonnet-4.5  provider: anthropic
  [BIs: 43, 1740 reqs ]  model: meta-llama/llama-guard-2-8b  provider: together
  [BIs: 42, 6040 reqs ]  model: google/gemma-3-12b-it  provider: novita/bf16
  [BIs: 40, 15000 reqs]  model: google/gemma-3-12b-it  provider: deepinfra/bf16
  [BIs: 36, 15000 reqs]  model: anthropic/claude-haiku-4.5  provider: anthropic
  [BIs: 36, 15000 reqs]  model: microsoft/phi-4-multimodal-instruct  provider: deepinfra/bf16
  [BIs: 35, 8840 reqs ]  model: mistralai/mistral-nemo  provider: deepinfra/fp8
  [BIs: 28, 4920 reqs ]  model: qwen/qwen-2.5-72b-instruct  provider: hyperbolic/bf16
  [BIs: 26, 15000 reqs]  model: mistralai/mistral-7b-instruct-v0.3  provider: together
  [BIs: 25, 15000 reqs]  model: nousresearch/hermes-2-pro-llama-3-8b  provider: nextbit/int4
  [BIs: 23, 600 reqs  ]  model: deepseek/deepseek-chat-v3-0324  provider: baseten/fp8
  [BIs: 21, 15000 reqs]  model: qwen/qwen3-235b-a22b-2507  provider: cerebras
  [BIs: 19, 15000 reqs]  model: qwen/qwen3-vl-235b-a22b-instruct  provider: fireworks
  [BIs: 14, 6060 reqs ]  model: alibaba/tongyi-deepresearch-30b-a3b  provider: ncompass/bf16
  [BIs: 11, 2800 reqs ]  model: google/gemma-3-27b-it  provider: phala
  [BIs: 10, 15000 reqs]  model: google/gemma-3-4b-it  provider: chutes
  [BIs: 7, 15000 reqs ]  model: mistralai/devstral-medium  provider: mistral
  [BIs: 1, 15000 reqs ]  model: microsoft/phi-4  provider: nextbit/int4
  [BIs: 0, 0 reqs     ]  model: thudm/glm-4.1v-9b-thinking  provider: novita/bf16
  [BIs: 0, 120 reqs   ]  model: qwen/qwen3-coder  provider: baseten/fp8
  [BIs: 0, 8920 reqs  ]  model: qwen/qwen3-32b  provider: friendli
  [BIs: 0, 10240 reqs ]  model: qwen/qwen3-30b-a3b  provider: friendli
  [BIs: 0, 14980 reqs ]  model: anthropic/claude-sonnet-4.5  provider: amazon-bedrock
  [BIs: 0, 14980 reqs ]  model: baidu/ernie-4.5-21b-a3b  provider: novita/bf16
  [BIs: 0, 14980 reqs ]  model: cohere/command-r7b-12-2024  provider: cohere
  [BIs: 0, 14980 reqs ]  model: z-ai/glm-4.6:exacto  provider: z-ai
  [BIs: 0, 15000 reqs ]  model: deepseek/deepseek-r1-distill-llama-70b  provider: chutes/bf16
  [BIs: 0, 15000 reqs ]  model: deepseek/deepseek-v3.2  provider: atlas-cloud/fp8
  [BIs: 0, 15000 reqs ]  model: deepseek/deepseek-v3.2  provider: deepseek
  [BIs: 0, 15000 reqs ]  model: google/gemini-2.5-flash-lite-preview-09-2025  provider: google-ai-studio
  [BIs: 0, 15000 reqs ]  model: google/gemini-2.5-flash-lite-preview-09-2025  provider: google-vertex
  [BIs: 0, 15000 reqs ]  model: google/gemini-3-flash-preview  provider: google-ai-studio
  [BIs: 0, 15000 reqs ]  model: google/gemini-3-flash-preview  provider: google-vertex
  [BIs: 0, 15000 reqs ]  model: google/gemma-2-9b-it  provider: nebius/fast
  [BIs: 0, 15000 reqs ]  model: google/gemma-3-27b-it  provider: ncompass/fp8
  [BIs: 0, 15000 reqs ]  model: liquid/lfm-2.2-6b  provider: liquid
  [BIs: 0, 15000 reqs ]  model: liquid/lfm2-8b-a1b  provider: liquid
  [BIs: 0, 15000 reqs ]  model: meta-llama/llama-3.1-405b  provider: hyperbolic/bf16
  [BIs: 0, 15000 reqs ]  model: mistralai/mistral-7b-instruct-v0.1  provider: cloudflare
  [BIs: 0, 15000 reqs ]  model: moonshotai/kimi-k2-thinking  provider: moonshotai/int4
  [BIs: 0, 15000 reqs ]  model: nousresearch/hermes-2-pro-llama-3-8b  provider: novita/fp16
  [BIs: 0, 15000 reqs ]  model: openai/gpt-oss-120b  provider: amazon-bedrock
  [BIs: 0, 15000 reqs ]  model: openai/gpt-oss-20b  provider: amazon-bedrock
  [BIs: 0, 15000 reqs ]  model: prime-intellect/intellect-3  provider: nebius/fp8
  [BIs: 0, 15000 reqs ]  model: qwen/qwen3-14b  provider: chutes/bf16
  [BIs: 0, 15000 reqs ]  model: qwen/qwen3-14b  provider: nextbit/int4
  [BIs: 0, 15000 reqs ]  model: qwen/qwen3-14b  provider: deepinfra/fp8
  [BIs: 0, 15000 reqs ]  model: qwen/qwen3-30b-a3b-thinking-2507  provider: cloudflare
  [BIs: 0, 15000 reqs ]  model: qwen/qwen3-32b  provider: nebius/base
  [BIs: 0, 15000 reqs ]  model: qwen/qwen3-32b  provider: groq
  [BIs: 0, 15000 reqs ]  model: qwen/qwen3-32b  provider: cerebras
  [BIs: 0, 15000 reqs ]  model: qwen/qwen3-32b  provider: sambanova
  [BIs: 0, 15000 reqs ]  model: qwen/qwen3-8b  provider: fireworks
  [BIs: 0, 15000 reqs ]  model: z-ai/glm-4-32b  provider: z-ai
  [BIs: 0, 15000 reqs ]  model: z-ai/glm-4.5  provider: wandb/bf16
  [BIs: 0, 15000 reqs ]  model: z-ai/glm-4.5-air  provider: chutes/bf16
  [BIs: 0, 15000 reqs ]  model: z-ai/glm-4.6  provider: mancer/fp8
  [BIs: 0, 15000 reqs ]  model: z-ai/glm-4.7  provider: mancer/fp8
\end{verbatim}
}

\subsection{Selecting minimum-length prompts}
\label{a:vocab-generation}
We are looking for prompt strings that are encoded into a minimal number of tokens. To find them, we download 11 tokenizers, representing 47 models from the Deepseek, Qwen, Gemma, LLaMA, Phi, Mistral, and GPT-OSS series. For each tokenizer, we select unique decoded tokens from its vocabulary, replacing the special characters U+0120 and U+2581 with a space, skipping special tokens and tokens that get encoded into more than 2 token IDs (which can happen due to weird representation of partial UTF-8 characters, itself due to BPE tokenization). For models with an unknown tokenizer (e.g. the Claude model series), we use a proxy tokenizer where the prompts are in order of decreasing frequency among the known tokenizers (prompts that many tokenizers encode as a single token are more likely to be encoded as a single token by an unknown tokenizer).

\subsection{TinyChange experimental parameters and methodology}
For all methods, models, variants, and method parameters, all prompts were initially sampled 5,000 times each. 1,000 test statistic were subsequently generated, using random sampling in the prompts (where appropriate) and in the samples for each prompt. Unless otherwise specified, the number of detection samples is 10. Error bars in \Cref{fig:tinychange_finetuning,fig:tinychange_all} come from 250 bootstraps holding everything fixed except the test statistics for each condition, which are resampled with replacement. 95\% confidence intervals are extracted from these bootstraps.

\subsection{Parameter sweeps}

\Cref{fig:sweeps} shows the hyperparameters that were varied for each of the baselines.

\begin{figure}[ht]
    \centering
    \begin{tabular}{cc}
        \begin{minipage}{.45\linewidth}
            \centering
            \includegraphics[width=\linewidth]{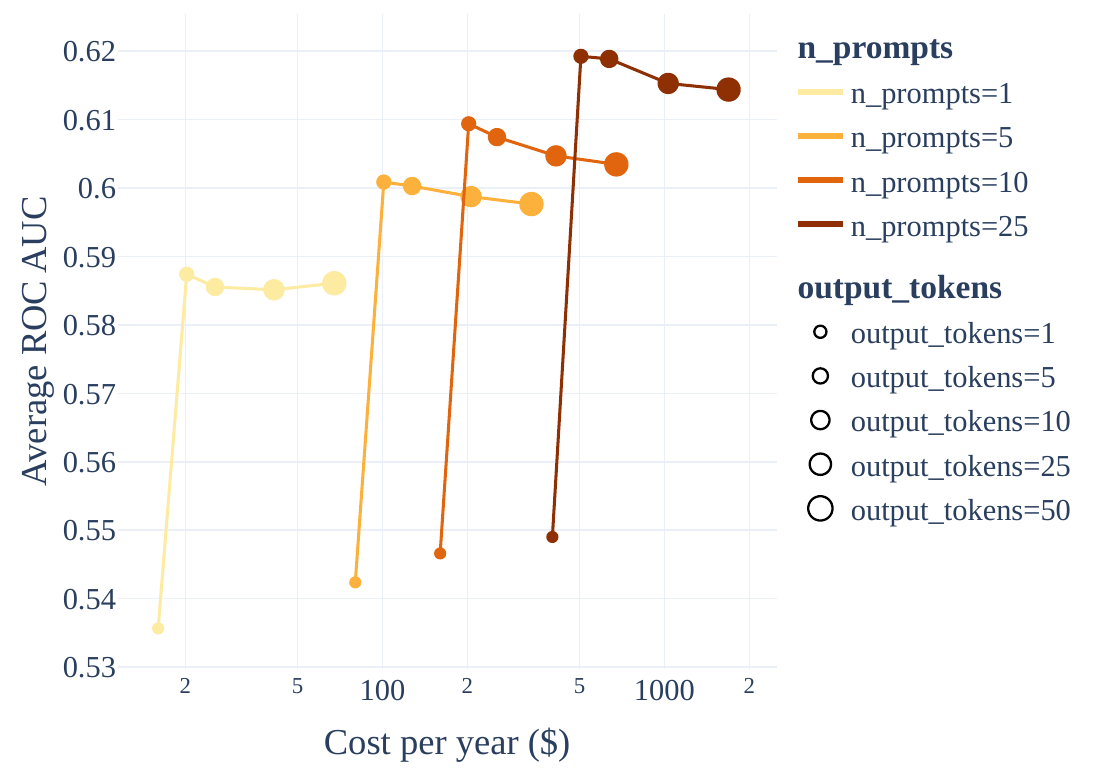}
            \subcaption*{(a) MET(T=1) parameter sweep}
        \end{minipage} &
        \begin{minipage}{.45\linewidth}
            \centering
            \includegraphics[width=\linewidth]{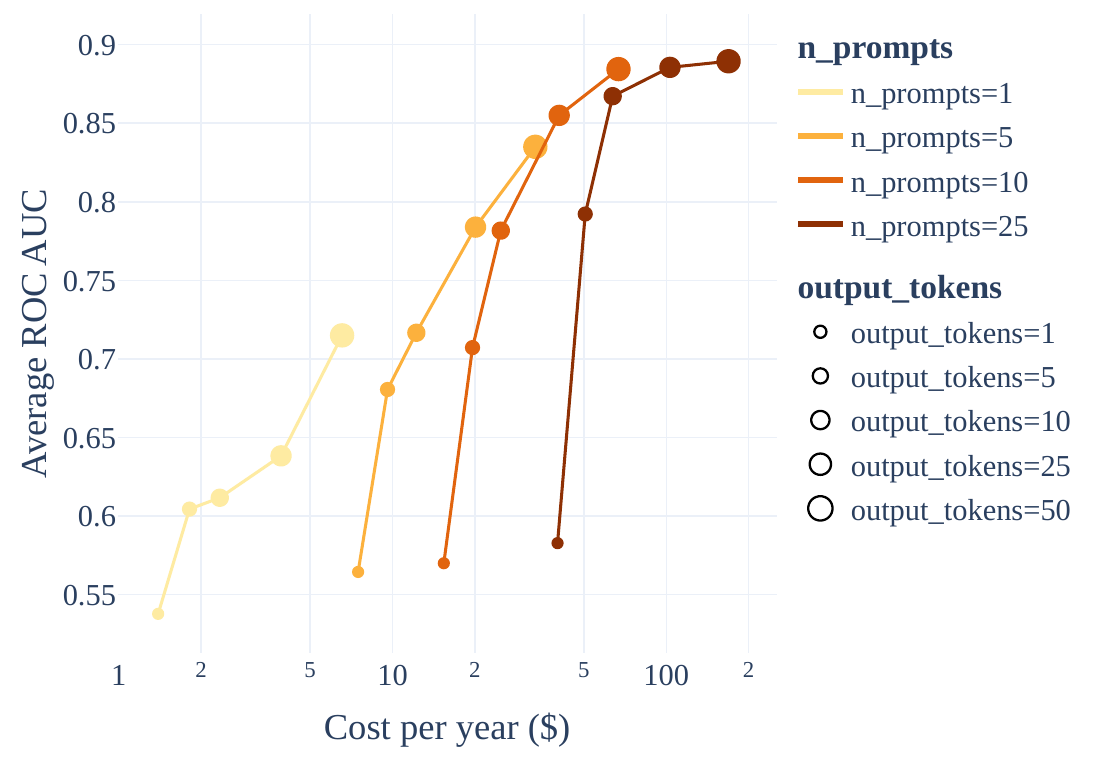}
            \subcaption*{(b) MET(T=0) parameter sweep}
        \end{minipage} \\
        \begin{minipage}{.45\linewidth}
            \centering
            \includegraphics[width=\linewidth]{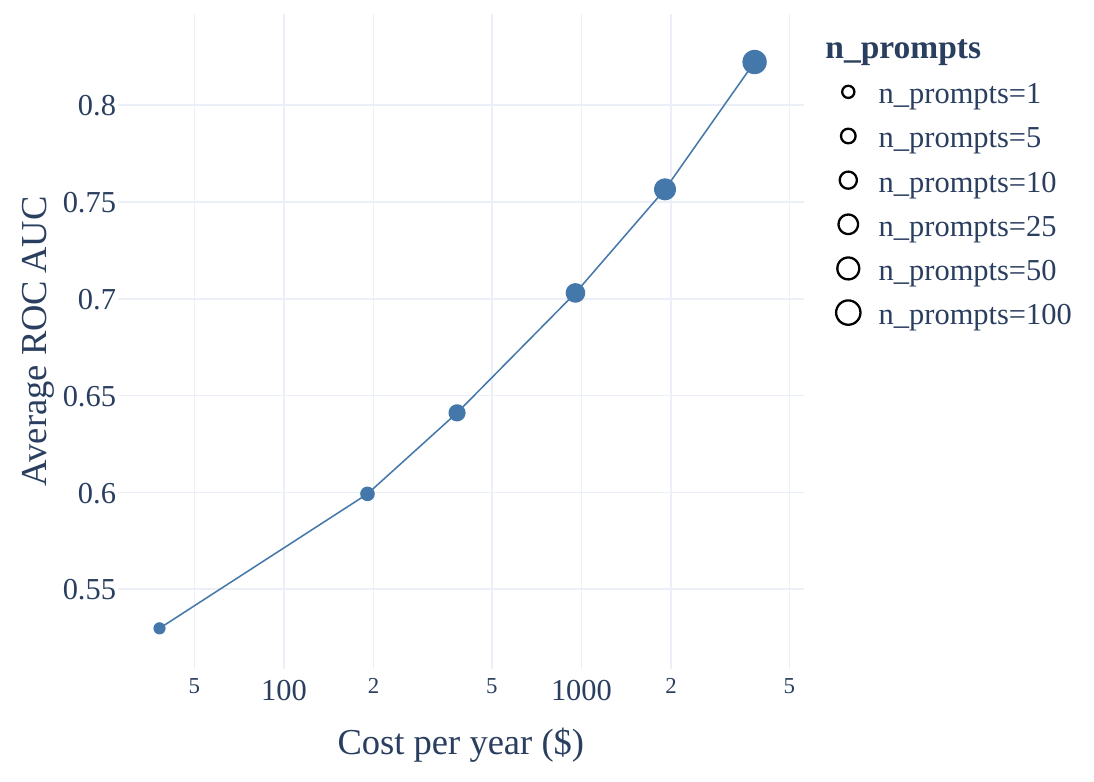}
            \subcaption*{(c) MMLU-ALG(T=0.1) parameter sweep}
        \end{minipage} &
        \begin{minipage}{.45\linewidth}
            \centering
            \includegraphics[width=\linewidth]{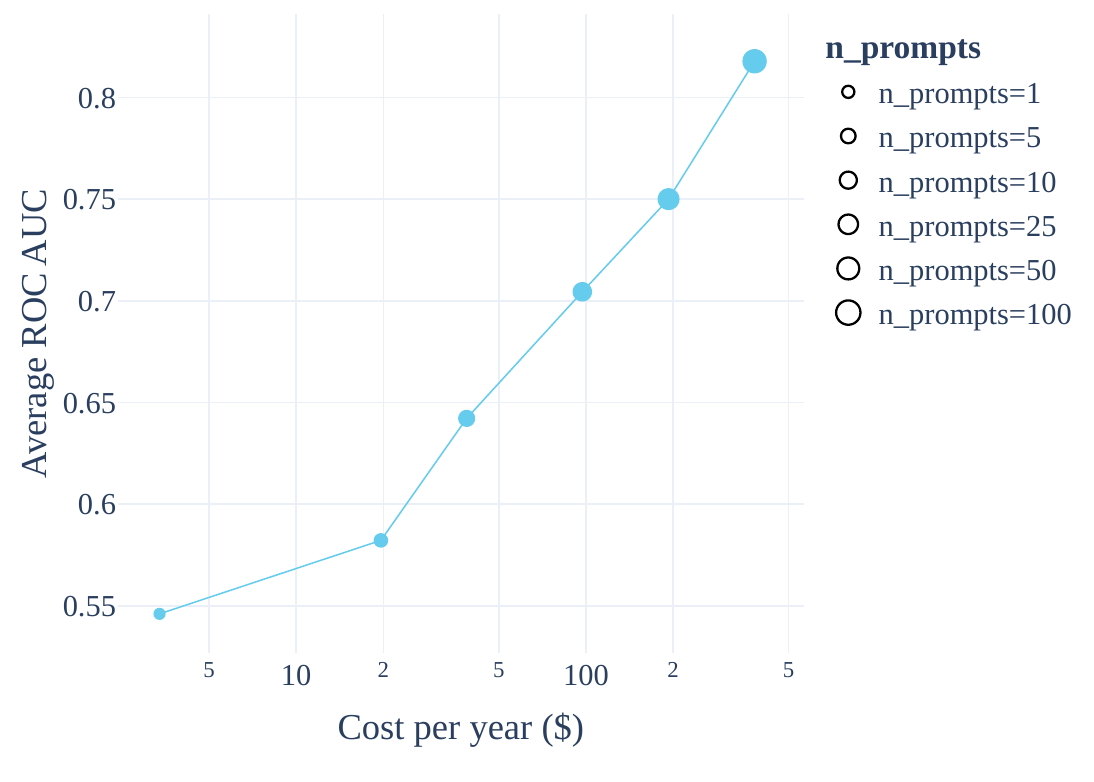}
            \subcaption*{(d) MMLU-ALG(T=0) parameter sweep}
        \end{minipage} \\
        \begin{minipage}{.45\linewidth}
            \centering
            \includegraphics[width=\linewidth]{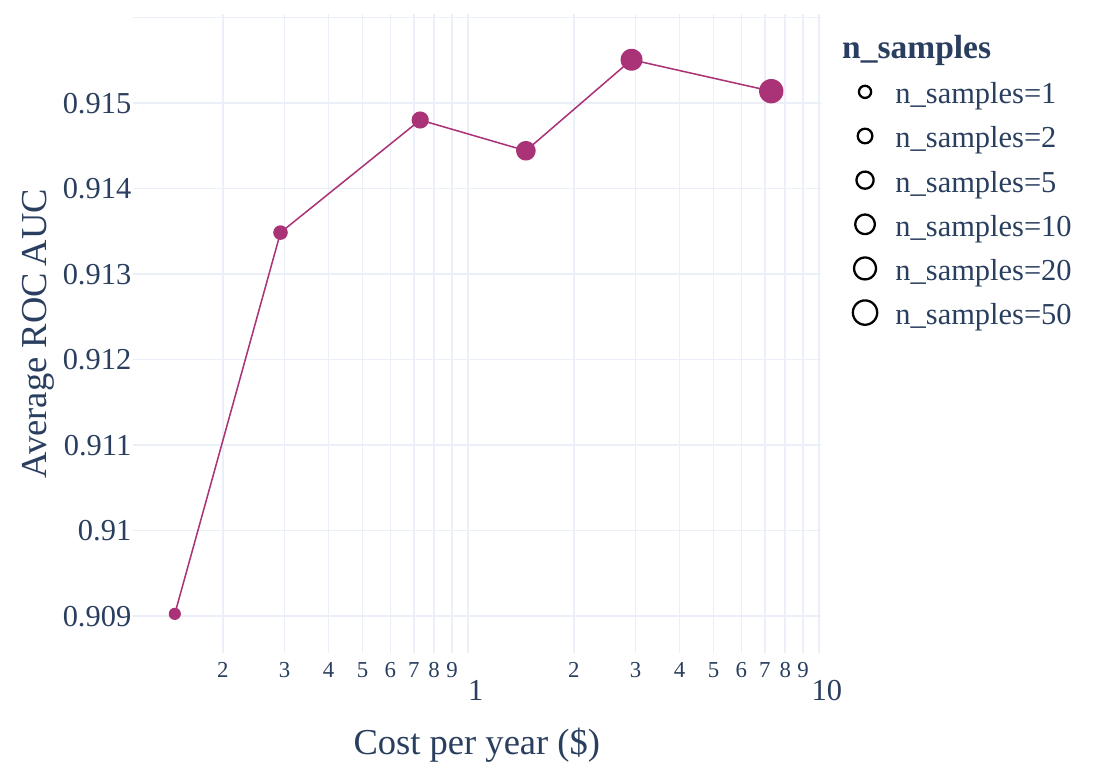}
            \subcaption*{(e) LT parameter sweep}
        \end{minipage} & \\
    \end{tabular}
    \caption{Hyperparameter sweeps for the baseline methods on the TinyChange fine-tuning difficulty scale.}
    \label{fig:sweeps}
\end{figure}

For the TinyChange difficulty scale plots, we kept the following hyperparameters, always keeping 50 reference samples/prompt:
\begin{itemize}
    \item \proto: 5 prompts, 3 detection samples/prompt (sweep: see \Cref{fig:bi_param_sweep})
    \item MET: 25 prompts, 5 output tokens (10 detection samples/prompt)
    \item MMLU-ALG: 100 prompts (10 detection samples/prompt)
    \item LT: 10 detection samples/prompt (1 prompt)
\end{itemize}

\subsection{Misc}

\paragraph{Note on OLMo.}
For OLMo-2-1124-7B in our \emph{in vitro} experiments, the first token was the same for all inputs. We therefore generated two tokens and retained the second. This workaround slightly increases cost, but preserves the method.

\subsection{Full TinyChange difficulty scales}
\label{a:perturbations}

\Cref{fig:tinychange_all} shows four more difficulty scales, in addition to the fine-tuning one presented in the main paper.

\begin{figure}[ht]
    \centering
    \begin{tabular}{cc}
        \includegraphics[width=.45\linewidth]{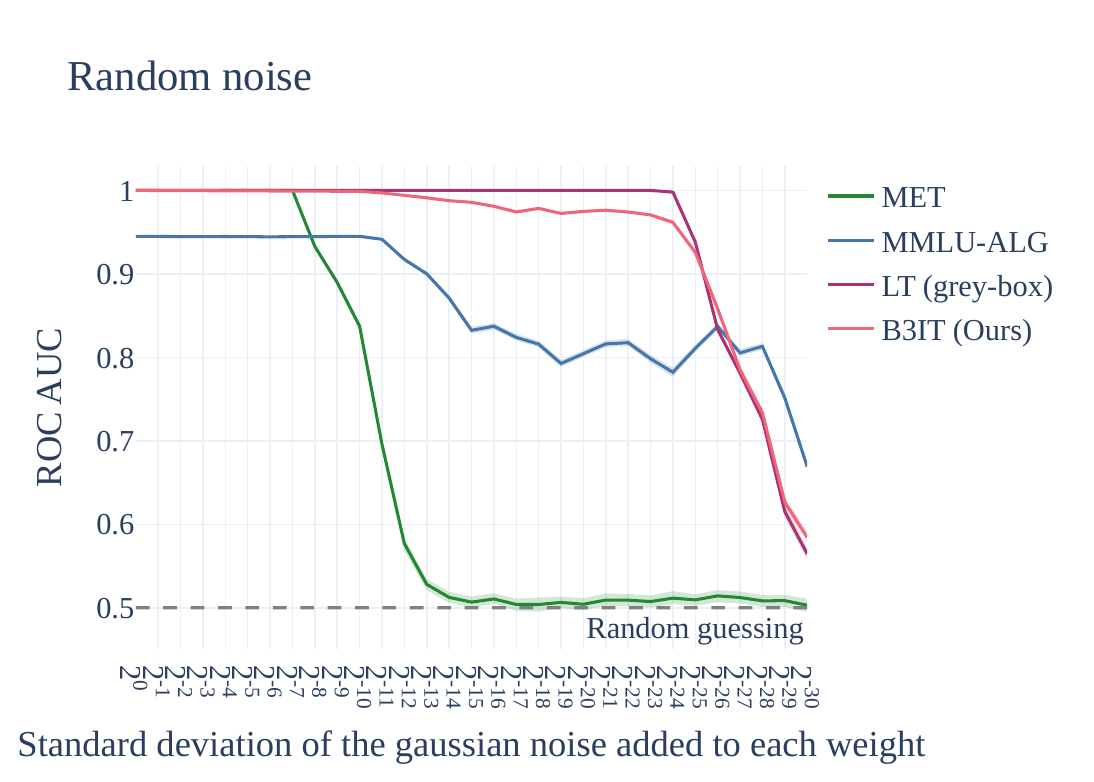} &
        \includegraphics[width=.45\linewidth]{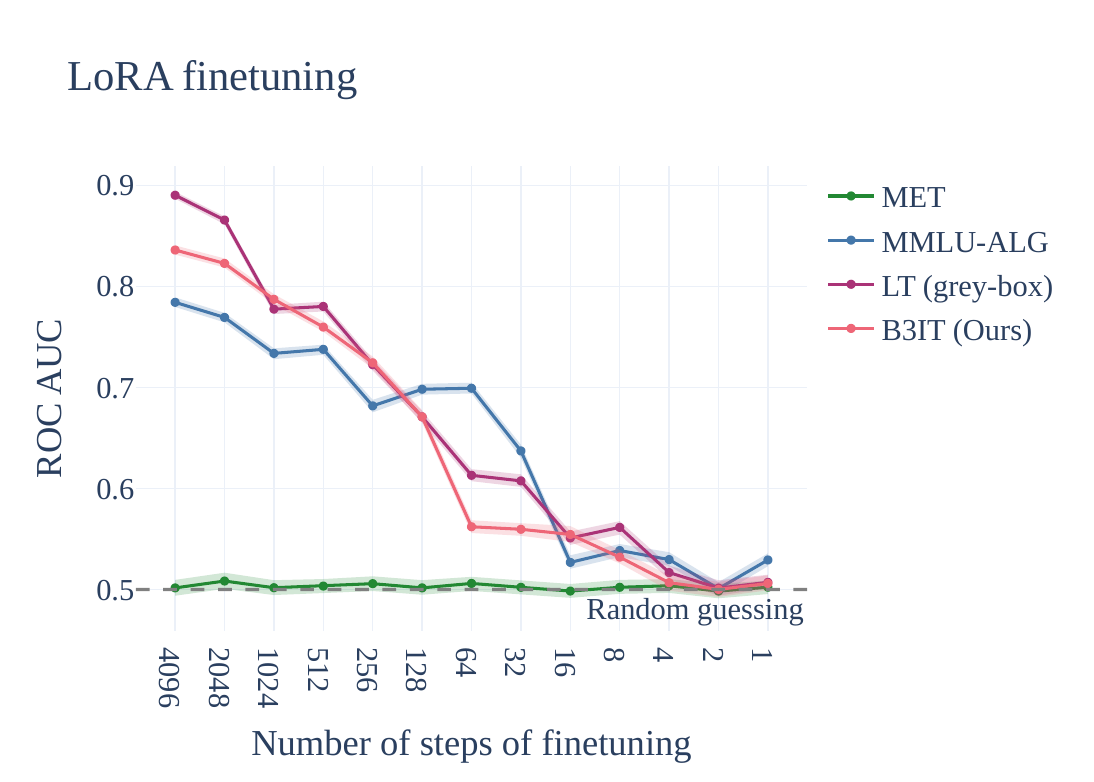} \\
        \includegraphics[width=.45\linewidth]{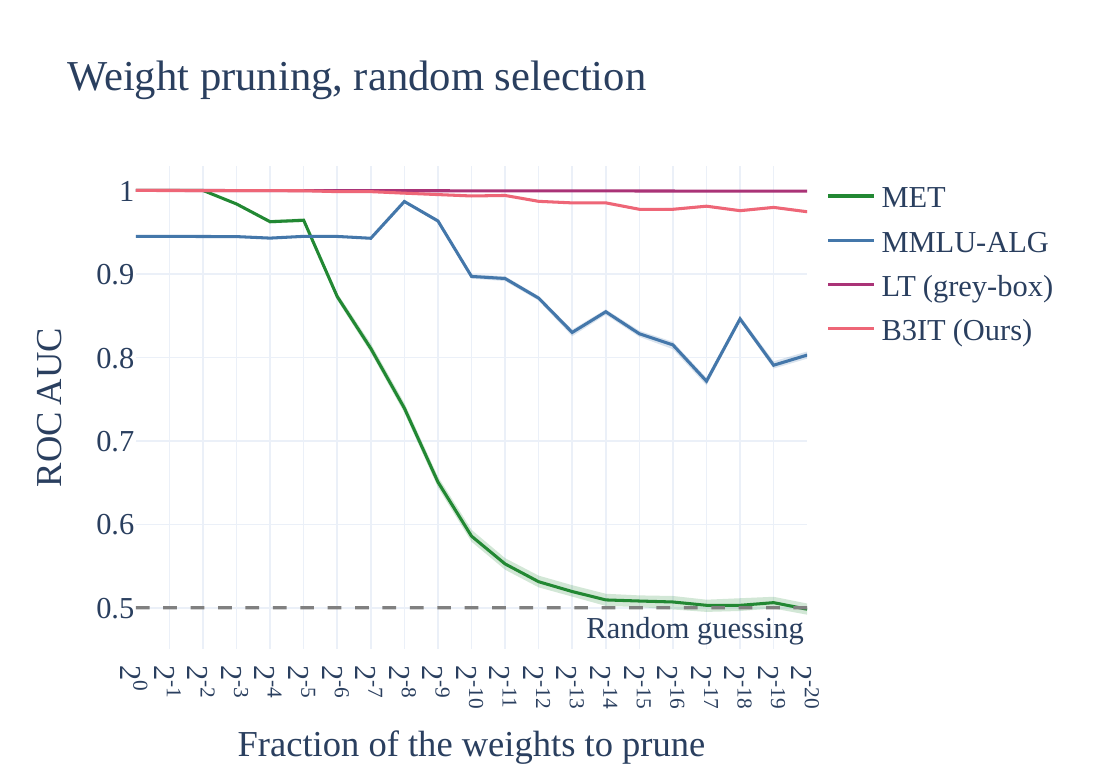} &
        \includegraphics[width=.45\linewidth]{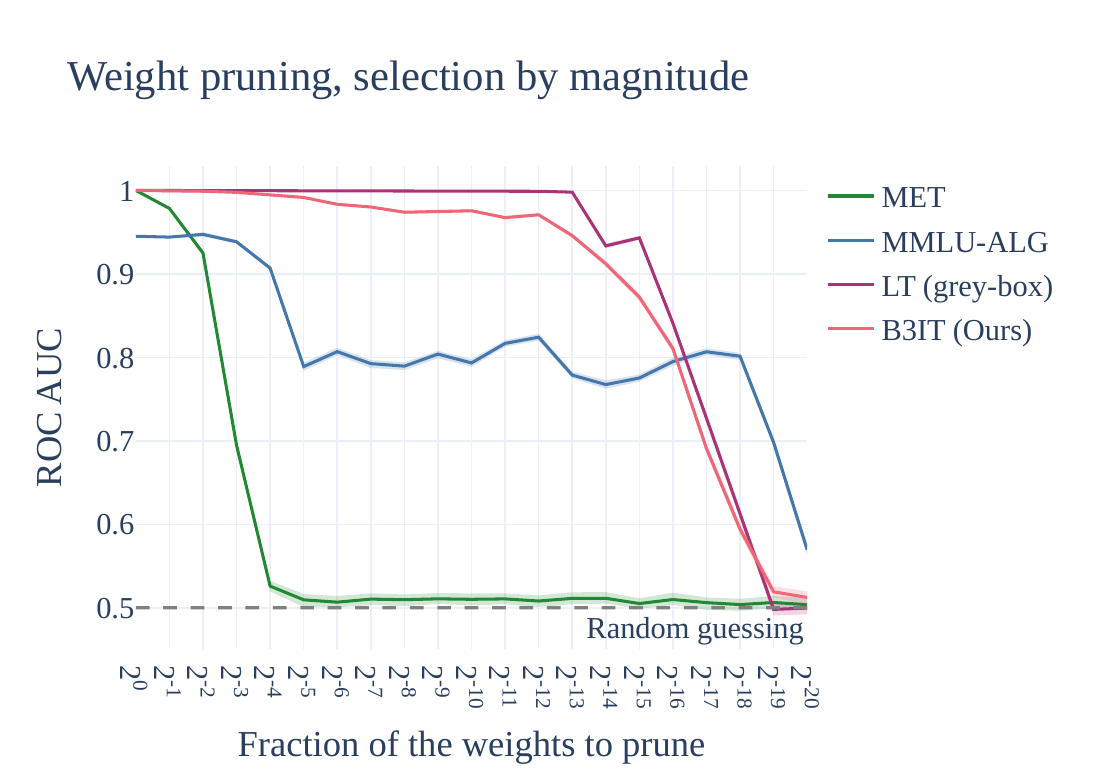} \\
    \end{tabular}
    \caption{Detection performance on the TinyChange difficulty scales, for various perturbations.}
    \label{fig:tinychange_all}
\end{figure}

\end{document}